\documentclass{article}
\usepackage{iclr2027_conference,times}
\usepackage[hyphens]{url}
\usepackage{graphicx}
\usepackage{caption}
\usepackage{algorithm}
\usepackage{algorithmic}
\usepackage{amsmath,amssymb,amsthm,mathtools}
\usepackage{booktabs}
\usepackage{array}

\usepackage{hyperref}
\newcommand{\apptoclink}[2]{\hyperref[#1]{#2}}

\usepackage{tabularx}
\usepackage{placeins}
\usepackage{thmtools}
\usepackage{thm-restate}
\makeatletter
\@ifundefined{newcounteralias}{}{%
  \renewcommand\thmt@autorefsetup{\@xa\def\csname\thmt@envname autorefname\@xa\endcsname\@xa{\thmt@thmname}}%
}
\makeatother
\usepackage{tikz}
\usepackage{comment}
\usepackage{enumitem}
\usepackage{xcolor}
\usepackage[nameinlink,noabbrev,capitalize]{cleveref}

\hypersetup{
  colorlinks=true,
  citecolor=blue,
  linkcolor=blue,
  urlcolor=blue,
  pdfauthor={Ziang Gao, Pengqi Liu, Archer Yi Yang, Mouloud Belbahri, Jesse C. Cresswell, Masoud Asgharian},
  pdftitle={Tilted Mondrian: Triage and Mitigation of Fine-grained Disparities in Conformal Prediction},
  pdfsubject={Preprint},
  pdfkeywords={conformal prediction}
}

\newtheorem{theorem}{Theorem}

\theoremstyle{definition}
\newtheorem{definition}[theorem]{Definition}

\theoremstyle{plain}

\crefname{assumption}{Assumption}{Assumptions}
\Crefname{assumption}{Assumption}{Assumptions}
\crefname{proposition}{Proposition}{Propositions}
\Crefname{proposition}{Proposition}{Propositions}
\crefname{theorem}{Theorem}{Theorems}
\Crefname{theorem}{Theorem}{Theorems}
\crefname{lemma}{Lemma}{Lemmas}
\Crefname{lemma}{Lemma}{Lemmas}
\crefname{section}{Section}{Sections}
\Crefname{section}{Section}{Sections}
\crefname{appendix}{Appendix}{Appendices}
\Crefname{appendix}{Appendix}{Appendices}

\title{%
    \begin{minipage}{\textwidth}
        \begin{minipage}[t]{0.18\textwidth}
        
\definecolor{background}{RGB}{235,235,235}
\definecolor{frame}{RGB}{255,255,255}
\definecolor{surface}{RGB}{244,246,246}
\definecolor{yellow}{RGB}{228,206,87}
\definecolor{blue}{RGB}{0,39,127}
\definecolor{orange}{RGB}{213,85,48}
\definecolor{border}{RGB}{215,215,201}

\begin{tikzpicture}[
    scale=0.14,
    x=0.01cm,
    y=-0.01cm,
    line join=miter,
    line cap=butt
]

\fill[frame]
    (844,51) --
    (1634,839) --
    (844,1634) --
    (51,840) --
    cycle;

\fill[surface]
    (844,59) --
    (1626,839) --
    (844,1625) --
    (60,840) --
    cycle;

\fill[yellow]
    (112,790) --
    (656,790) --
    (656,250) --
    cycle;

\fill[blue]
    (1286,790) --
    (1582,790) --
    (1626,835) --
    (1291,1178) --
    cycle;

\fill[black]
    (374,799) rectangle (670,1125);

\fill[orange]
    (599,1382) --
    (654,1368) --
    (654,1435) --
    cycle;

\fill[black]
    (721,182) rectangle (962,202);

\fill[black]
    (120,788) rectangle (1583,812);

\fill[black]
    (654,255) rectangle (678,1432);

\fill[black]
    (1272,522) rectangle (1294,1168);

\fill[black]
    (606,1367) rectangle (1075,1385);

\fill[black]
    (374,799) rectangle (394,1145);

\draw[
    border,
    line width=0.4pt
]
    (844,59) --
    (1626,839) --
    (844,1625) --
    (60,840) --
    cycle;

\end{tikzpicture}

        \end{minipage}%
        \begin{minipage}[t]{0.78\textwidth}
        \vspace{-58pt}
            Tilted Mondrian: Triage and \\Mitigation of Fine-grained \\ Disparities in Conformal Prediction
        \end{minipage}
    \end{minipage}
}

\iclrfinalcopy

\author{
  Ziang Gao\thanks{Equal contribution. Alphabetically ordered by last name.}\ 
  \thanks{Corresponding author. Email: {\usefont{T1}{cmtt}{m}{n}ziang.gao@mail.mcgill.ca}.} \\
  McGill University\\
  Montreal, Canada\\
  {\small\usefont{T1}{cmtt}{m}{n}ziang.gao@mail.mcgill.ca}
  \And
  Pengqi Liu\footnotemark[1] \\
  McGill University\\
  Montreal, Canada\\
  {\small\usefont{T1}{cmtt}{m}{n}pengqi.liu@mail.mcgill.ca}
  \And
  Archer Yi Yang \\
  McGill University\\
  Montreal, Canada\\
  {\small\usefont{T1}{cmtt}{m}{n}archer.yang@mcgill.ca}
  \AND
  Mouloud Belbahri \\
  TD Insurance\\
  Montreal, Canada\\
  {\small\usefont{T1}{cmtt}{m}{n}mouloud.belbahri@td.com}
  \And
  Jesse C. Cresswell \\
  Layer 6 AI\\
  Toronto, Canada\\
  {\small\usefont{T1}{cmtt}{m}{n}jesse@layer6.ai}
  \And
  Masoud Asgharian \\
  McGill University\\
  Montreal, Canada\\
  {\small\usefont{T1}{cmtt}{m}{n}masoud.asgharian2@mcgill.ca}
}

\begin{document}

\maketitle
\lhead{Preprint}

\begin{abstract}
Fairness in conformal prediction is often pursued through Mondrian calibration, which provides finite-sample coverage guarantees separately across groups. These guarantees, however, are coarse: substantial disparities can remain within finer strata. This coarse-to-fine fairness issue raises two questions: how severe can such disparities be, and how can they be mitigated? Aggregate disparities can depend on how groups are distributed across strata. For severity assessment, we introduce propensity standardization to examine the effect of population composition,  and show how group composition can induce disagreement between aggregate and finer-stratum comparisons. For mitigation, we develop Tilted Mondrian to improve fairness compared to group-Mondrian at a finer level without additional model training. Tilted Mondrian retains groupwise coverage guarantees and mitigates coverage and prediction set size disparities within finer strata. Finally, we derive a method-agnostic bound that quantifies how much reduction in finer-stratum coverage disparity can be achieved. 
\end{abstract}

\section{Introduction}\label{sec:intro}

Conformal prediction (CP) \citep{vovk2005algorithmic} attaches prediction sets to fitted classifiers with finite-sample coverage guarantees. A natural way to address groupwise fairness is group-Mondrian calibration \citep{vovk2003mondrian}, which guarantees coverage across prespecified groups. Yet this guarantee is only group conditional. Specifically, substantial disparities may remain within finer strata of the groups. At a fixed coverage level, aggregate group comparisons can obscure behavior within strata. Equalized groupwise coverage guarantees can coexist with large disparities within finer strata, while for set size, group composition can alter the aggregate disparity. As a result, we view both as manifestations of the same coarse-to-fine problem. We first assess the impact of composition, that is, the distribution across finer strata within each group, by comparing set size disparities under the observed and standardized populations. We then adjust the nonconformity score to mitigate disparities within finer strata while preserving groupwise coverage. 

Given a CP coverage level, we study the question of how much ambiguity remains, as measured by prediction set size, and how it is distributed across populations. Coverage controls how often the true label is included, but not how many labels a user needs to consider in downstream decision-making. For example, prediction sets of size two and eight can both be valid at the same nominal level yet impose very different burdens on a downstream review process~\citep{cresswell2024human,cresswell2025disparate}. 

We summarize this burden by a set size curve, which records expected prediction set size as a function of coverage level. Group curves can mix difficulty with population composition. A group may receive smaller prediction sets overall simply because its members are concentrated in easier-to-predict subpopulations even when the within-subpopulation comparison goes in the opposite direction. Interpreting finer-grained disparities requires separating within-stratum differences from population composition. We therefore introduce propensity score standardization, which leaves the label predictor, nonconformity scores, calibration thresholds, prediction sets, and coverage guarantees unchanged while reaveraging set sizes over a standardized population. By placing groups on a standardized distribution, the set size curves assess how sensitive an aggregate difference is to population composition.

We next turn to score intervention when coarse groupwise coverage masks disparities across finer strata. For example, even when group-Mondrian calibration equalizes coverage across female and male groups, substantial disparities may persist within finer-grained strata, such as occupations. We introduce Tilted Mondrian CP, a score adjustment framework for mitigating fine-grained disparities while preserving groupwise coverage guarantees. When targeting coverage, it seeks to reduce stratum coverage disparities while measuring set size as an efficiency cost. The same framework can also target stratum set size disparities and track changes in coverage.

\begin{figure}[t]
    \centering
    \includegraphics[width=\textwidth]{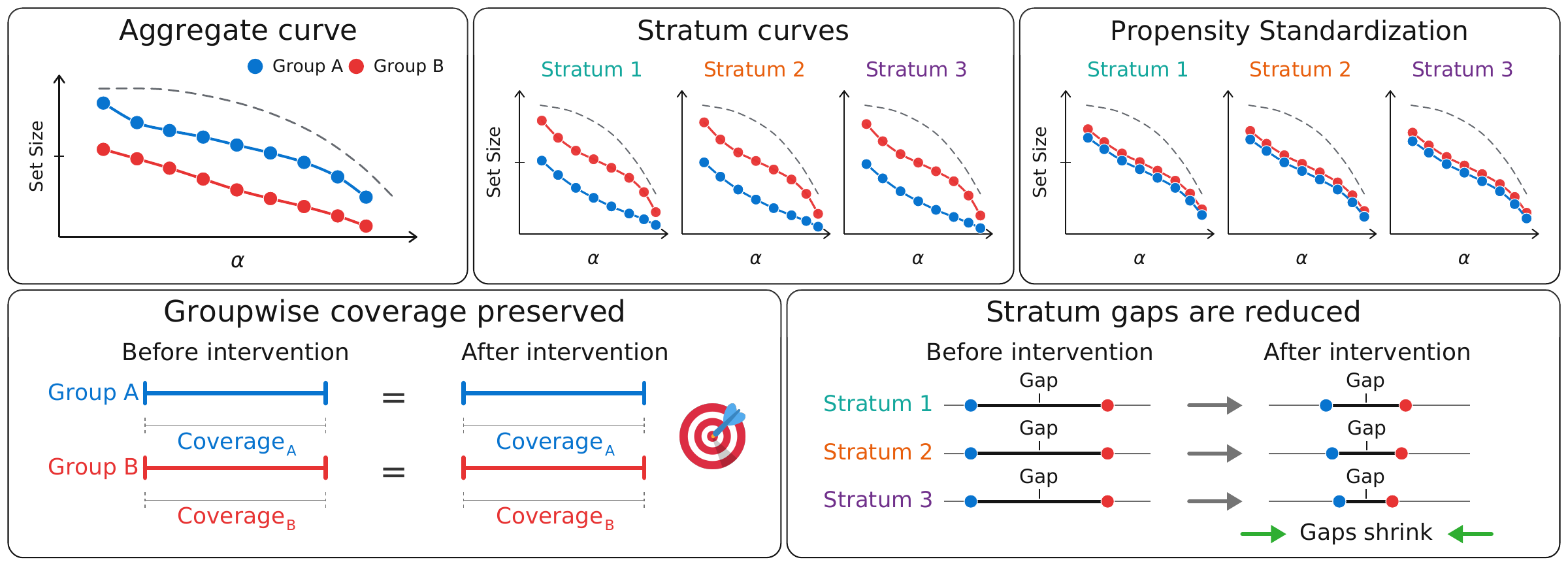}
    \caption{Top: Propensity standardization evaluates the aggregate--stratum set size disagreement. Bottom: Tilted Mondrian mitigates finer strata disparities and maintains groupwise coverage.}
    \label{fig:graphical_abstract}
    \vspace{-10pt}
\end{figure}

Our two main contributions, summarized in Figure~\ref{fig:graphical_abstract}, are as follows.
\begin{itemize}[leftmargin=*, itemsep=1.5pt, topsep=1pt]
\vspace{-3pt}
\item \textbf{Severity assessment of composition via propensity standardization.} 
We introduce propensity standardization that evaluates composition within finer stratum disparities, revealing how sensitive aggregate--stratum misalignment is to population composition.

\item \textbf{Tilted Mondrian and mitigation limits.} We develop Tilted Mondrian as a generic mitigation approach that can be instantiated at any Mondrian level and can target finer-stratum disparities in coverage and set size. For coverage mitigation, we derive a method-agnostic bound characterizing the mitigation limits under equalized group coverage.
\vspace{-3pt}
\end{itemize}

The remainder of this paper is organized as follows: Section~\ref{sec:related-work} reviews related work; Sections~\ref{sec:curves}--\ref{sec:standardization} develop the composition assessment; Section~\ref{sec:no-free-repair-intervention} studies Tilted Mondrian mitigation; Section~\ref{sec:empirical-protocol-evidence} presents the experimental evidence on assessment and mitigation; and Section~\ref{sec:limitations-conclusion} concludes. 

\section{Related work}\label{sec:related-work}

Conformal prediction gives distribution-free set-valued coverage under exchangeability \citep{vovk2005algorithmic,angelopoulos2023gentle}. %
Average set size is a standard efficiency measure for set-valued predictors. Methods such as APS, RAPS, and SAPS construct adaptive classification sets that aim to minimize average set size \citep{romano2020classification,angelopoulos2021uncertainty,huang2024conformal}. We formalize set size efficiency as a curve across miscoverage levels and read its height as residual ambiguity. Related work studies direct tradeoffs between set size and conditional coverage \citep{bach2025convex}, and connects CP set size to information theory \citep{correia2024information}. We study set size and use classical entropy–error relations \citep{fano1961transmission,cover2006elements,feder1994relations} to explain when large sets may be unavoidable.

Fairness-oriented CP studies equalized coverage, adaptively selected groups, demographic-parity quantiles, equal-opportunity coverage, counterfactual fairness, and domain-specific fair predictors \citep{romano2020malice,zhou2024conformal,vadlamani2025framework,liu2022conformalized,wang2023equal,guldogan2026counterfactual,lu2022fair}. Complementary work examines human decisions and substantive fairness when set-size burdens differ across groups \citep{cresswell2024human,cresswell2025disparate,liu2026beyond,gao2026burden}.  Our objective is more specific: a descriptive ambiguity comparison for a fixed predictor, fixed calibration rule, and stated evaluation population. 

We connect population imbalance to Simpson's paradox \citep{simpson1951interpretation}. %
Our standardization step draws on propensity reweighting, covariate balancing, and overlap populations \citep{rosenbaum1983central,imai2014covariate,li2018balancing}. Related methods pursue conditional coverage via multivalid or conditional calibration \citep{jung2022batch,gibbs2025conformal,bairaktari2025kandinsky}, or score transformations \citep{xie2024boosted,plassier2025rectifying}. Prior methods primarily ask how to achieve conditional coverage or set efficiency. Our objective is to mitigate stratum-level disparities under explicit groupwise coverage and stratum constraints.

\section{Aggregate Stratum Disagreement in Set Size Curves}\label{sec:curves}\label{sec:simpson}

Consider a multiclass problem with input $X$, label $Y\in\mathcal{Y}$ where $\mathcal{Y}$ is a finite label space of size $K\ge2$, and a nonconformity score $s(x,y)$, where smaller values indicate that $y$ is more compatible with $x$.  Given an exchangeable calibration sample $D_{\rm cal}=\{(X_i,Y_i)\}_{i=1}^{n_{\rm cal}}$, write $S_i=s(X_i,Y_i)$, $i=1,\ldots,n_{\rm cal}$, for the calibration scores. Split CP computes the empirical $(1-\alpha)$-quantile
\begin{equation}
    \hat q_\alpha=\operatorname{Quantile}\bigl(1-\alpha;\{S_i\}_{i=1}^{n_{\rm cal}}\cup\{\infty\}\bigr),
    \label{eq:conformal-quantile}
\end{equation}
equivalently the $\lceil(n_{\rm cal}+1)(1-\alpha)\rceil$-th order statistic of the augmented multiset. The calibrated prediction set at level $1-\alpha$ is
\begin{equation}
    \mathcal C_\alpha(x)=\{y\in\mathcal{Y}:s(x,y)\le \hat q_\alpha\}.
    \label{eq:conformal-set}
\end{equation}
Under exchangeability between the calibration and the test point, split CP guarantees
\begin{equation}
    \mathbb P\{Y_{\mathrm{test}}\in \mathcal C_\alpha(X_{\mathrm{test}})\}\ge 1-\alpha.
\end{equation}
Coverage controls true-label inclusion, but not set size.  
\begin{definition}[Set size curve]
Let \(M_\alpha(X)=|\mathcal{C}_\alpha(X)|\) denote the size of the prediction set. Given a population \(P\), the set size curve is \(V^P(\alpha):=\mathbb{E}_{X\sim P} M_\alpha(X)\), the expected set size at level \(\alpha\).
\end{definition}
Here, \(P\) indexes the population over which the post-calibration set sizes are averaged. Thus, \(V^P(\alpha)\) summarizes the CP set efficiency at each miscoverage level. We omit the superscript \(P\) when the population is clear from context.
A set size curve should be interpreted against intrinsic label ambiguity. Appendix~\ref{app:uncertainty-principle} discusses that CP sets cannot be arbitrarily small. When the label \(Y\) remains hard to predict from the input \(X\), CP must use larger prediction sets on average to maintain the target $1-\alpha$ coverage level. We now turn to the curves \(V(\alpha)\) and examine how the associated ambiguity burden of set size is distributed across groups and strata of \(X\).

Let $G\in\{a,b\}$ denote the comparison group, e.g. binary gender. The raw group curves are
\begin{equation}
    V_g(\alpha)=\mathbb{E}\{M_\alpha(X)\mid G=g\},
    \qquad g\in\{a,b\},
\end{equation}
and the raw aggregate gap is $\Delta(\alpha)=V_a(\alpha)-V_b(\alpha)$. The estimand $V_g(\alpha)$ indicates how many labels are delivered for each group $G=g$. It is a meaningful population summary, but its fairness interpretation depends on whether the observed mixture is the comparison population of interest.

To expose what the aggregate mixes, in addition to $G$, we consider another prespecified reporting variable $Z\in\mathcal{Z}$, where \(\mathcal{Z}\) is a finite set, such as an occupation, education bucket, or employment category. We refer to each group--stratum pair $(g,z)$ as a cell. Define
    \(V_{g,z}(\alpha)=\mathbb{E}\{M_\alpha(X)\mid G=g,Z=z\}\) and \(
    \Delta_z(\alpha)=V_{a,z}(\alpha)-V_{b,z}(\alpha).\)
Let $\pi_g(z)=\mathbb{P}(Z=z\mid G=g)$. For every $\alpha$, by the law of total expectation, one can obtain the set size gap decomposition
\begin{equation}
\Delta(\alpha)=\sum_{z\in\mathcal{Z}}\pi_a(z)\Delta_z(\alpha)
    +\sum_{z\in\mathcal{Z}}\{\pi_a(z)-\pi_b(z)\}V_{b,z}(\alpha).
\label{eq:curve-aggregation}
\end{equation}
If groups $a$ and $b$ have the same cell mixture $\pi_a(z)=\pi_b(z)$, then the aggregate gap $\Delta(\alpha)$ reduces to the weighted average of the stratum gaps $\sum_{z\in\mathcal{Z}}\pi_a(z)\Delta_z(\alpha)$, but this is not generally the case. The first term in \cref{eq:curve-aggregation} averages stratum gaps under group $a$'s mixture. The second isolates composition, which can make the aggregate gap $\Delta(\alpha)$ have the opposite sign from a stratum gap $\Delta_z(\alpha)$. This is an analog of Simpson's paradox \citep{simpson1951interpretation} realized in set sizes.%

When an aggregate group comparison differs from the corresponding finer-stratum comparisons, we refer to this as aggregate--stratum disagreement. On the supported \((z,\alpha)\) grid, define %
\begin{equation}
    \text{Reversal for }z\text{:}\qquad\Delta(\alpha)\Delta_z(\alpha)<0, \quad
    \text{Conflict strength for }z\text{:}\quad \Gamma_z(\alpha)=\left[-\Delta(\alpha)\Delta_z(\alpha)\right]_+,
    \label{eq:simpson-product-strength}
\end{equation}
respectively. We interpret a reversal jointly with conflict strength. A reversal records when aggregate and stratum gaps point in opposite directions, whereas conflict strength accounts for the magnitudes of both gaps, so a reversal with negligible conflict strength carries little weight. We use these quantities to report reversal counts and conflict strength in the experiments in Section~\ref{sec:empirical-protocol-evidence}.

From \Cref{eq:simpson-product-strength}, $\Gamma_z(\alpha)>0$ identifies an aggregate--stratum sign conflict. \Cref{eq:curve-aggregation} shows that such disagreement can arise from \(\pi_a \neq \pi_b\). The next section therefore reaverages the same \(M_\alpha(X)\) over a standardized comparison population to assess the severity of the composition effect.

\section{Propensity Score Standardization for Disparity Assessment}\label{sec:standardization}

The raw group curves average the set size over the group-specific population distributions. To separate this population mixture component from disparities in set size, we apply propensity score reweighting to place both groups on a standardized distribution. The fitted predictor and CP procedure remain fixed; only the population over which their set sizes are averaged is changed. 

Let \(H=H(X)\) denote the \textit{profile}, a prespecified vector of balancing variables defined as a function of \(X\), which could include probabilities from a classifier, margins, entropy, and top-label indicators. The profile \(H\) excludes the group variable \(G\), realized labels, realized set sizes, calibration quantities, and final curve summaries. The profile \(H\) defines the target population to be standardized, whereas \(Z\) defines the finer strata used to assess aggregate--stratum alignment. Including \(Z\) among the balancing variables would instead standardize the joint \((H,Z)\) distribution and thus define a different estimand. The balancing variables are fixed before the final curve analysis.

Let $P_H$ be the marginal distribution of $H$, and define the group propensity score~\citep{li2018balancing}
\begin{equation}
    e(h)=\mathbb{P}(G=a\mid H=h).
\end{equation}
The propensity score describes how profiles are distributed across the two groups and determines the standardization weights. Comparing the raw and standardized curves then assesses how the aggregate--stratum relationship changes under the standardized population.
One option is to standardize both groups to the marginal profile distribution $P_H$. Up to normalizing constants,
\begin{equation}
    \mathbb{P}_{H\mid G=a}\propto e(h)P_H,\qquad \mathbb{P}_{H\mid G=b}\propto \{1-e(h)\}P_H.
\end{equation}
Standardizing both groups to \(P_H\) applies the inverse propensity weights \(1/e(H)\) for group \(a\) and \(1/\{1-e(H)\}\) for group \(b\). This inverse probability weighting (IPW) retains the marginal profile population, but the weights can become unstable when \(e(H)\) approaches \(0\) or \(1\), making the comparison sensitive to limited overlap. %
For the main analysis, we apply propensity reweighting
\begin{equation}
    Q_H=\frac{e(H)\{1-e(H)\}P_H}{\mathbb{E}[e(H)\{1-e(H)\}]}\quad \text{provided that } \mathbb{E}[e(H)\{1-e(H)\}]>0\ .
    \label{eq:overlap-target}
\end{equation}
The corresponding weights are \(w_a(H)=1-e(H)\) and \(w_b(H)=e(H)\), which map both groups to the standardized target distribution \(Q_H\). 
Equivalently, \(H\) and \(G\) are independent in the weighted population. The target emphasizes profiles represented in both groups and downweights profiles with small overlap. The propensity reweighting method also avoids the extreme weighting that can arise under classical IPW. Thus, we identify \(Q_H\) as the target population for the standardized curves. %
For group $g \in \{a,b\}$, define the standardized curve by
\begin{equation}
    \tilde V_g(\alpha)=\mathbb{E}_{Q_H}\,\mathbb{E}\{M_\alpha(X)\mid G=g,H\}.
    \label{eq:standardized-curve}
\end{equation}
The raw curve \(V_g(\alpha)\) averages over the current data distribution, whereas \(\tilde V_g(\alpha)\) averages over the standardized distribution \(Q_H\). Empirically, the standardized curve \(\tilde V_g(\alpha)\) is estimated by a normalized H\'ajek mean with propensity weights \citep{hajek1971comment}.
The same propensity reweighting induces a group-specific distribution over the stratum variable \(Z\):
\begin{equation}
    \tilde\pi_g(z)=\frac{\mathbb{E}\{\mathbf{1}(G=g)w_g(H)\mathbf{1}(Z=z)\}}{\mathbb{E}\{\mathbf{1}(G=g)w_g(H)\}}=\mathbb{E}_{Q_H}[\mathbb{P}(Z=z \mid G=g,H)].
\label{eq:q-induced-cell-law}
\end{equation}
For any cell \((g,z)\) with $\tilde\pi_g(z)>0$, define the standardized stratum curve 
\begin{equation}
\tilde V_{g,z}(\alpha)=\frac{1}{\tilde\pi_g(z)}\mathbb{E}_{Q_H}[\mathbb{E}[\mathbf{1}(Z=z)M_\alpha(X)\mid G=g,H]].
\end{equation}
Then $\tilde V_g(\alpha)=\sum_z\tilde\pi_g(z)\tilde V_{g,z}(\alpha)$. Define the standardized aggregate and stratum gaps by
\begin{equation}
\tilde\Delta(\alpha)=\tilde V_a(\alpha)-\tilde V_b(\alpha),
\qquad
\tilde\Delta_z(\alpha)=\tilde V_{a,z}(\alpha)-\tilde V_{b,z}(\alpha).
\end{equation}
The next proposition shows that both reweighted groups follow the standardized distribution \(Q_H\). \emph{Complete proofs for all theoretical results are given in Appendix\hyperref[proof:common-profile-balance]{\text{~\ref{app:proofs}.}}}

\begin{restatable}[Propensity score reweighting]{proposition}{commonprofilebalanceprop}\label{prop:common-profile-balance}
For every integrable function $f$ of $H$,
\begin{equation}
\frac{\mathbb{E}\{f(H)\mathbf{1}(G=a)(1-e(H))\}}{\mathbb{E}\{\mathbf{1}(G=a)(1-e(H))\}} = \frac{\mathbb{E}\{f(H)\mathbf{1}(G=b)e(H)\}}{\mathbb{E}\{\mathbf{1}(G=b)e(H)\}} = \mathbb{E}_{Q_H} [f(H)].
\end{equation}
\end{restatable}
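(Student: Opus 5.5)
The plan is to condition on $H$ and apply the tower property to the numerator and denominator of each ratio separately. Everything then reduces to the defining identity $\mathbb{E}\{\mathbf{1}(G=a)\mid H\}=e(H)$ and its complement $\mathbb{E}\{\mathbf{1}(G=b)\mid H\}=1-e(H)$. The latter holds because $G\in\{a,b\}$.

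\emph{Group $a$.} Since $f(H)(1-e(H))$ is $H$-measurable, the tower property gives
\begin{equation*}
\mathbb{E}\{f(H)\mathbf{1}(G=a)(1-e(H))\}=\mathbb{E}\bigl[f(H)(1-e(H))\,\mathbb{E}\{\mathbf{1}(G=a)\mid H\}\bigr]=\mathbb{E}[f(H)\,e(H)\{1-e(H)\}].
\end{equation*}
Taking $f\equiv 1$, the denominator equals $\mathbb{E}[e(H)\{1-e(H)\}]$.

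\emph{Group $b$.} The same computation gives
\begin{equation*}
\mathbb{E}\{f(H)\mathbf{1}(G=b)e(H)\}=\mathbb{E}[f(H)\,e(H)\{1-e(H)\}],
\end{equation*}
again with denominator $\mathbb{E}[e(H)\{1-e(H)\}]$.

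Both ratios therefore equal
\begin{equation*}
\frac{\mathbb{E}[f(H)e(H)\{1-e(H)\}]}{\mathbb{E}[e(H)\{1-e(H)\}]}.
\end{equation*}
By the definition of $Q_H$ in \cref{eq:overlap-target}, this is exactly $\int f\,dQ_H=\mathbb{E}_{Q_H}[f(H)]$, since $Q_H$ has density $e(h)\{1-e(h)\}/\mathbb{E}[e(H)\{1-e(H)\}]$ with respect to $P_H$.

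\emph{Integrability and well-definedness.} The weights $e(H)$, $1-e(H)$ and their product lie in $[0,1]$. So if $f(H)$ is $P_H$-integrable, every product above is integrable and the conditional expectations are legitimate. The denominators are positive by the standing proviso $\mathbb{E}[e(H)\{1-e(H)\}]>0$. The same bound shows that $P_H$-integrability of $f$ implies $Q_H$-integrability.

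The only mildly delicate point is this interpretation of ``integrable'' (with respect to $P_H$, which is sufficient). Beyond that, the argument is a routine two-line conditioning computation.
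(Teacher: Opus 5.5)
Your proof is correct and takes essentially the same route as the paper. Both condition on $H$ to reduce each numerator to $\int f(h)\,e(h)\{1-e(h)\}\,dP_H$, note that the two normalizing constants coincide, and identify the ratio with $\mathbb{E}_{Q_H}[f(H)]$; your remarks on integrability and on the standing proviso $\mathbb{E}[e(H)\{1-e(H)\}]>0$ make explicit details the paper leaves implicit.
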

\noindent\textbf{Take-away.}
Propensity standardization puts both groups on the same population distribution, making an assessment of set sizes more comparable.

To relate the standardized aggregates to stratum gaps, let \(\mathbb{E}_Z\) and \(\operatorname{Var}_Z\) denote expectation and variance under the symmetric raw distribution \(\{\pi_a(z)+\pi_b(z)\}/2\), and let \(\mathbb{E}_{\tilde Z}\) and \(\operatorname{Var}_{\tilde Z}\) denote the corresponding quantities under \(\{\tilde\pi_a(z)+\tilde\pi_b(z)\}/2\). Define
$R(\alpha)=\Delta(\alpha)-\mathbb{E}_Z\Delta_Z(\alpha)$ and $\tilde R(\alpha)=\tilde\Delta(\alpha)-\mathbb{E}_{\tilde Z}\tilde\Delta_Z(\alpha).$ 
Thus, \(R(\alpha)\) is the composition effect in the raw aggregate gap, and \(\tilde R(\alpha)\) is its standardized counterpart. %
\Cref{eq:standardized-residual,eq:standardized-residual-tv-bound} give the explicit form of \(\tilde R(\alpha)\) and show that it is controlled by the difference between the two standardized distributions \(\tilde\pi_a\) and \(\tilde\pi_b\).%

To quantify aggregate--stratum alignment, we %
formulate the following alignment losses
\begin{equation}
\mathcal{A}_{\rm raw}(\alpha)
=\mathbb{E}_Z\{\Delta(\alpha)-\Delta_Z(\alpha)\}^2, \qquad
\mathcal{A}_{\rm std}(\alpha)
=\mathbb{E}_{\tilde Z}\{\tilde\Delta(\alpha)-\tilde\Delta_Z(\alpha)\}^2.
\label{eq:alignment-loss-budget}
\end{equation}
A smaller \(\mathcal{A}\) means that the aggregate gap is closer, on average, to the corresponding stratum gaps.

For mass functions $p,q$ on the stratum grid, we define the total variation $\operatorname{TV}(p,q):=\frac12\sum_z|p(z)-q(z)|$. Although propensity reweighting equalizes the distribution of \(H\), the
conditional distribution of \(Z\) given \(H\) may still differ across groups. We quantify this remaining difference by%
\begin{equation}
\eta_H=\mathbb E_{Q_H}
\!\left[
\operatorname{TV}\!\left\{
\mathbb P(Z\in\cdot\mid G=a,H),
\mathbb P(Z\in\cdot\mid G=b,H)
\right\}
\right].
\end{equation}
Thus, \(\eta_H\) computes the average conditional difference between the two stratum distributions at the same profile \(H\). In particular, \(\eta_H\) vanishes exactly when \(\mathbb{P}(Z\in\cdot\mid G=a,H=h)=\mathbb{P}(Z\in\cdot\mid G=b,H=h)\) for \(Q_H\)-almost every \(h\). Next, define
\begin{equation}
\Delta_{h,z}(\alpha):=
\mathbb E\{M_\alpha(X)\mid G=a,H=h,Z=z\}-\mathbb E\{M_\alpha(X)\mid G=b,H=h,Z=z\}.
\label{eq:profile-cell-gap}
\end{equation}
This quantity compares the set size between groups at a fixed profile $h$ and stratum $z$. When \(\eta_H\) vanishes, we say that composition dominance holds at level $\alpha$ if
\begin{equation}
\label{eq:composition-dominance}
\operatorname*{ess\,sup}_{h,z}|\Delta_{h,z}(\alpha)-\mathbb{E}\{\Delta_{H,Z}(\alpha)\}|<|R(\alpha)|
\ ,
\end{equation}
where the essential supremum and expectation are taken under the joint distribution induced by $H\sim Q_H$ and the conditional distribution of $Z\mid H$.

\begin{restatable}[Alignment under standardization]{theorem}{alignmentlossthm}\label{thm:alignment-loss}
Given a fixed level $\alpha$, the propensity score standardization satisfies
\(
\operatorname{TV}(\tilde\pi_a,\tilde\pi_b)\le\eta_H,
|\tilde R(\alpha)|\le K\eta_H.
\)
Suppose $\eta_H$ vanishes, equivalently
\(
\mathbb P(Z\in\cdot\mid G=a,H=h)=\mathbb P(Z\in\cdot\mid G=b,H=h)
\) for $Q_H$-almost every $h$, then $\tilde R(\alpha)=0$ and
\begin{equation}
\mathcal A_{\rm std}(\alpha)
\le
\operatorname{Var}
\{\Delta_{H,Z}(\alpha)\}\ .
\end{equation}
Furthermore, suppose the composition dominance holds (\Cref{eq:composition-dominance}), then
\begin{equation}
\mathcal A_{\rm std}(\alpha)<\mathcal A_{\rm raw}(\alpha)\ .
\end{equation}
In particular, suppose $\Delta_{h,z}(\alpha)$ is constant for every \(h,z\) pair, then $\mathcal A_{\rm std}(\alpha)=0$.
\end{restatable}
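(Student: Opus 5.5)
We first establish the two inequalities, then specialize to the case where $\eta_H$ vanishes. Throughout, write $p_g(\cdot\mid h)=\mathbb P(Z\in\cdot\mid G=g,H=h)$ and $m_g(h,z)=\mathbb E\{M_\alpha(X)\mid G=g,H=h,Z=z\}\in[0,K]$.

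\textbf{Step 1: the total-variation bound.} By \cref{eq:q-induced-cell-law}, $\tilde\pi_g(z)=\mathbb E_{Q_H}[p_g(z\mid H)]$. The triangle inequality inside the expectation then gives
\begin{equation*}
\tfrac12\sum_z|\tilde\pi_a(z)-\tilde\pi_b(z)|\le \mathbb E_{Q_H}\Big[\tfrac12\sum_z|p_a(z\mid H)-p_b(z\mid H)|\Big]=\eta_H .
\end{equation*}

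\textbf{Step 2: an explicit form for $\tilde R$.} Let $\bar\pi=(\tilde\pi_a+\tilde\pi_b)/2$. Then
\begin{equation*}
\tilde\Delta-\mathbb E_{\tilde Z}\tilde\Delta_Z=\sum_z(\tilde\pi_a-\bar\pi)\tilde V_{a,z}-\sum_z(\tilde\pi_b-\bar\pi)\tilde V_{b,z}=\tfrac12\sum_z(\tilde\pi_a-\tilde\pi_b)(\tilde V_{a,z}+\tilde V_{b,z}).
\end{equation*}
Since $\sum_z(\tilde\pi_a-\tilde\pi_b)=0$, we may center the average $(\tilde V_{a,z}+\tilde V_{b,z})/2$, which lies in $[0,K]$, at $K/2$. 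This yields $|\tilde R|\le \tfrac K2\sum_z|\tilde\pi_a-\tilde\pi_b|= K\,\mathrm{TV}(\tilde\pi_a,\tilde\pi_b)\le K\eta_H$. Cruder bookkeeping would also suffice, since the claimed constant $K$ has slack.

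\textbf{Step 3: the case $\eta_H=0$.} Now $p_a=p_b=:p$ holds $Q_H$-a.s., so $\tilde\pi_a=\tilde\pi_b=\tilde\pi$ and Step 2 gives $\tilde R=0$. Moreover, from the definition, $\tilde V_{g,z}=\tilde\pi(z)^{-1}\mathbb E_{Q_H}[p(z\mid H)m_g(H,z)]$. Hence
\begin{equation*}
\tilde\Delta_z=\mathbb E\{\Delta_{H,Z}\mid Z=z\},
\end{equation*}
where the expectation is taken under the joint law $H\sim Q_H$, $Z\mid H\sim p$. Averaging over $z$ gives $\tilde\Delta=\mathbb E\Delta_{H,Z}$. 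Therefore
\begin{equation*}
\mathcal A_{\rm std}=\operatorname{Var}_{Z}\big(\mathbb E[\Delta_{H,Z}\mid Z]\big)\le \operatorname{Var}(\Delta_{H,Z})
\end{equation*}
by the law of total variance. If $\Delta_{h,z}$ is constant, this variance is zero, so $\mathcal A_{\rm std}=0$.

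\textbf{Step 4: strict improvement under dominance.} Expanding the square around the mean gives
\begin{equation*}
\mathcal A_{\rm raw}=\mathbb E_Z\{\Delta-\Delta_Z\}^2=R(\alpha)^2+\operatorname{Var}_Z\Delta_Z\ge R(\alpha)^2.
\end{equation*}
On the standardized side,
\begin{equation*}
\mathcal A_{\rm std}\le \operatorname{Var}(\Delta_{H,Z})\le \operatorname*{ess\,sup}|\Delta_{h,z}-\mathbb E\Delta_{H,Z}|^2 .
\end{equation*}
By \cref{eq:composition-dominance}, this last quantity is strictly less than $R^2\le\mathcal A_{\rm raw}$, which gives $\mathcal A_{\rm std}<\mathcal A_{\rm raw}$.

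\textbf{Main obstacle.} The delicate step is Step 3: showing that the standardized stratum curve collapses to the conditional mean of $\Delta_{H,Z}$ given $Z$. This requires care with the measure under which the expectations are taken, namely $Q_H$ combined with the common conditional law $p$, and with null cells where $\tilde\pi(z)=0$, which are excluded by the support convention. The remaining steps are bookkeeping.
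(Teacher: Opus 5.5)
Your proof is correct and follows the paper's argument essentially step for step: the triangle inequality under the common mixture $Q_H$ gives the TV bound, and the centered form of $\tilde R$ gives the residual bound. The remaining steps also match the paper: the identity $\tilde\Delta_z(\alpha)=\mathbb E\{\Delta_{H,Z}(\alpha)\mid Z=z\}$ combined with the law of total variance, and the decomposition $\mathcal A_{\rm raw}(\alpha)=R^2(\alpha)+\operatorname{Var}_Z\{\Delta_Z(\alpha)\}$.

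One correction to your aside: cruder bookkeeping would not suffice. Without centering you only get $|\tilde R(\alpha)|\le 2K\,\operatorname{TV}(\tilde\pi_a,\tilde\pi_b)$, and the constant $K$ has no slack. For example, take $H$ constant, $\tilde\pi_a=(3/4,1/4)$, $\tilde\pi_b=(1/4,3/4)$, with set size $K$ in the first stratum and $0$ in the second for both groups; this gives $\tilde R(\alpha)=K/2=K\eta_H$. So the centering step you use is genuinely needed, as the paper itself remarks.
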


\begin{figure*}[t]
\begin{minipage}[t]{\textwidth}
\refstepcounter{algorithm}\label{alg:main-curve-reporting}
\small
\raggedright
\hrule
\vspace{0.7ex}

\noindent\textbf{Algorithm \thealgorithm:} Propensity Standardization

\noindent\textbf{Input:} $\mathcal D_{\rm prop},\mathcal D_{\rm cal},\mathcal D_{\rm test}$; group $G$, stratum $Z$, profile rule $H$; grid $\boldsymbol{\alpha}$.

\small
\setlength{\algorithmicindent}{0.8em}
\begin{algorithmic}[1]
\STATE Construct \(H\) from the frozen outputs. On \(\mathcal D_{\mathrm{prop}}\), fit $\hat e(h)=\hat{\mathbb P}(G=a\mid H=h)$.
\STATE Define standardized target $Q_H$ by \cref{eq:overlap-target}; freeze weights/support.
\STATE On $\mathcal D_{\rm cal}$, perform split CP for all $\alpha$.
\STATE On $\mathcal D_{\rm test}$, set $\hat e_i=\hat e(H_i)$, $w_{a,i}\propto1-\hat e_i$, and $w_{b,i}\propto\hat e_i$; bound/normalize by group.
\FOR{$\alpha\in\boldsymbol{\alpha}$}
    \STATE Form each $\mathcal C_{\alpha,i}$; set $M_{\alpha,i}=|\mathcal C_{\alpha,i}|$.
    \STATE From $M_{\alpha,i}$/cells, compute raw (unweighted) and standardized (\cref{eq:standardized-curve}) group/cell curves.
    \STATE Compare aggregate/strata gap signs; record reversals/strengths (\cref{eq:simpson-product-strength}).
\ENDFOR
\RETURN raw/standardized reports from fixed prediction sets.
\end{algorithmic}
\vspace{0.3ex}
\hrule
\end{minipage}
\vspace{-5pt}
\end{figure*}
\noindent\textbf{Take-away.}
Comparing the current population with a balanced population reveals how severely the composition affects the aggregate--stratum misalignment. A larger reduction indicates that propensity standardization removes a more severe composition effect. The remaining misalignment reflects heterogeneity across the strata of the data.

Operationally, in Theorem~\ref{thm:alignment-loss}, comparing \(\mathcal A_{\rm raw}\) and \(\mathcal A_{\rm std}\) quantifies how sensitive aggregate--stratum misalignment is to the population used for averaging. A large reduction indicates that the observed raw aggregate is strongly shaped by the population mixture of the observed dataset. Algorithm~\ref{alg:main-curve-reporting} summarizes the propensity standardization procedure.  Table~\ref{tab:alignment-decomposition} in Appendix~\ref{app:experiments} displays the decomposition with empirical results. Our analysis here is descriptive, and we make no causal claims.

\begin{figure}[t]
    \centering
    \includegraphics[width=\linewidth]{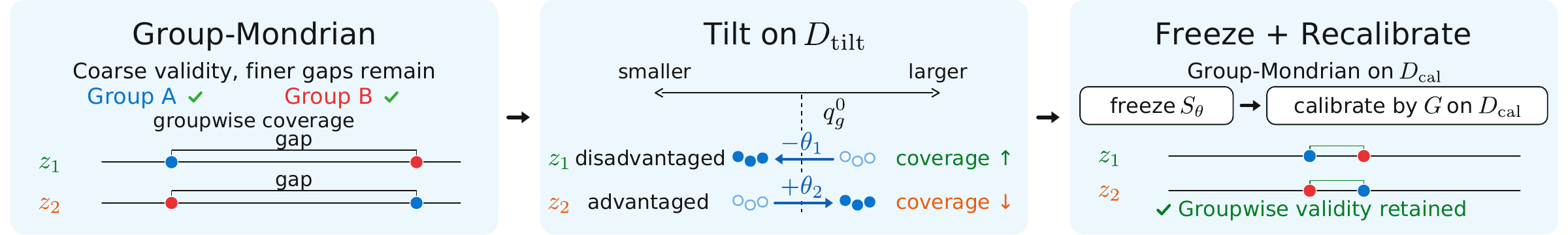}
    \vspace{-10pt}
    \caption{Tilted Mondrian mechanism for finer strata disparity mitigation.}
    \label{fig:tm-mechanism}
    \vspace{-10pt}
\end{figure}
\section{Tilted Mondrian and Mitigation Limits}\label{sec:no-free-repair-intervention}
\vspace{-4pt}
\subsection{Tilted Mondrian CP}
\vspace{-4pt}

We now turn to mitigation and propose Tilted Mondrian CP, a generic framework built on a chosen Mondrian calibration level. The intervention acts through a learned adjustment of the nonconformity score. The algorithm shifts coverage across finer strata to reduce disparities while maintaining groupwise coverage guarantees. %
We identify coverage, the reliability quantity directly guaranteed by CP, as the primary mitigation target.

For each $z\in\mathcal Z$, let $c^0_{g,z}$ denote the baseline cellwise coverage for group \(g\) and stratum \(z\). Let $g^+(z)$ and $g^-(z)$ be the groups with higher and lower baseline coverage, respectively, and define the baseline gap \(d_z=c^0_{g^+(z),z}-c^0_{g^-(z),z}\). Tilted Mondrian uses nonnegative score adjustments \(\theta^+_z\) and \(\theta^-_z\). It raises scores in \(g^+(z)\) by \(\theta_z^+\) and lowers scores in \(g^-(z)\) by \(\theta_z^-\). Let \(S_\theta\) denote the transformed score, with \(\theta=0\) as the baseline. Let \(c^\theta_{g,z}\) denote the resulting cellwise coverage, and define \(d_z^\theta=|c^\theta_{g^+(z),z}-c^\theta_{g^-(z),z}|\) and \(m_z^\theta=d_z-d_z^\theta\). Here, \(d_z^\theta\) is the coverage gap between the two groups in stratum \(z\) after the adjustment, and \(m_z^\theta\) is its reduction from the baseline gap \(d_z\); \(m_z^\theta>0\) indicates an improvement. A score shift is feasible only if the coverage loss in every initially advantaged cell is at most \(\epsilon\), coverage does not decrease in any initially disadvantaged cell, and stratum coverage gaps are non-increasing. During the tilt procedure, we preserve each group's baseline empirical coverage. We assume that \(\mathcal Z\) partitions each group population and that \(\pi_{g^-(z)}(z)>0\) for all \(z\in\mathcal Z\) (recall $\pi_g(z)=\mathbb P(Z=z\mid G=g)$). %
Figure~\ref{fig:tm-mechanism} illustrates the score adjustment and subsequent
groupwise recalibration.  Algorithm~\ref{alg:main-directional-shifting} summarizes Tilted Mondrian CP.

\begin{restatable}[Validity of Tilted Mondrian]{theorem}{scorefixvaliditythm}\label{prop:score-fix-validity}
Let \(\theta\) be selected on a tilt split independent of calibration. If group-Mondrian split conformal calibration is applied to \(S_\theta\) to generate prediction sets $\mathcal C_{\alpha,\theta}(X)$ as in \Cref{eq:conformal-set}, and calibration and test examples are exchangeable conditional on \(G=g\), then \(\mathbb P\{Y\in \mathcal C_{\alpha,\theta}(X)\mid G=g\}\ge 1-\alpha\) for each group \(g\in\{a,b\}\).
\vspace{-4pt}
\end{restatable}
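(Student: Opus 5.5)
The plan is to reduce the statement to the standard split conformal guarantee applied inside each group. Conditioning on the tilt split freezes the transformed score $S_\theta$, so it becomes a fixed measurable function of $(X,Y)$, and Mondrian calibration is then split CP run separately within each group. First I fix the tilt split and condition on it. Because $\theta$ is selected on data independent of the calibration and test points, conditioning on the tilt split turns $\theta$ into a deterministic parameter. Hence $s_\theta(x,y)$ is a fixed function of $(x,y)$, which may also depend on $g$ and $z$ since both are functions of $X$. Applying a fixed measurable map to each exchangeable pair preserves exchangeability. So, conditional on $G=g$ and on the tilt split, the scores $S_{\theta,i}=s_\theta(X_i,Y_i)$ for the calibration points with $G_i=g$, together with $S_{\theta,\mathrm{test}}$, remain exchangeable. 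This uses the hypothesis that calibration and test examples are exchangeable conditional on $G=g$.

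Second, I condition additionally on the number $n_g$ of calibration points in group $g$ and apply the usual rank argument. Let $\hat q_{g}$ be the $\lceil (n_g+1)(1-\alpha)\rceil$-th order statistic of the group-$g$ calibration scores together with $\{\infty\}$, as in \cref{eq:conformal-quantile}. By exchangeability, the rank of $S_{\theta,\mathrm{test}}$ among the $n_g+1$ scores is uniform, with ties broken at random or handled conservatively. Therefore $\mathbb P\{S_{\theta,\mathrm{test}}\le \hat q_g\mid G=g,n_g,\text{tilt}\}\ge \lceil (n_g+1)(1-\alpha)\rceil/(n_g+1)\ge 1-\alpha$. This holds trivially when $n_g=0$, because then $\hat q_g=\infty$.

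Third, I identify the events. By \cref{eq:conformal-set}, $Y\in\mathcal C_{\alpha,\theta}(X)$ if and only if $s_\theta(X,Y)\le \hat q_{G}$, and on $G=g$ this is exactly the event bounded above. Taking expectations over $n_g$ and over the tilt split by the tower property gives the marginal-in-data, group-conditional bound $\mathbb P\{Y\in\mathcal C_{\alpha,\theta}(X)\mid G=g\}\ge 1-\alpha$ for each $g\in\{a,b\}$.

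The main obstacle is the first step: making sure that the data-dependent tilt does not break exchangeability. The key point is that $\theta$, the choice of $g^\pm(z)$, and the baseline gaps $d_z$ must all be computed on the tilt split only and never from calibration scores. I would state this explicitly, noting that the feasibility constraints and the preserved empirical coverage involve tilt-split quantities alone. I would also note that $Z$ must be observable at test time, since the score shift depends on the test point's cell. The remaining steps are the textbook split-conformal argument.
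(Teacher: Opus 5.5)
Your proposal is correct and follows the paper's proof: condition on the tilt split so that $S_\theta$ becomes a fixed nonconformity score, apply the standard within-group split-conformal rank argument to obtain $S_\theta(X,Y)\le \hat q_g$ with probability at least $1-\alpha$, and note that this event is exactly $Y\in\mathcal C_{\alpha,\theta}(X)$. Your version is more careful than the paper's (conditioning on $n_g$, the $n_g=0$ case, independence from the test point); the one small slip is that when $Z=Y$ the stratum is a function of $y$ rather than of $X$, which is harmless because $s_\theta$ is still a fixed function of the observed tuple $(x,y,g)$.
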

\noindent\textbf{Take-away.}
Tilted Mondrian maintains the group-Mondrian CP coverage guarantee. %
The cellwise post-processing adjustment of Tilted Mondrian also allows improvements to occur jointly across strata. When the same group has lower coverage in one cell and higher coverage in another, the two gaps can be reduced together. Let \(q_g^0\) denote the baseline threshold for group \(g\). The following theorem formalizes this case.

\begin{restatable}[Local mitigation]{theorem}{localmitigationprop}
\label{prop:local-mitigation}
Assume $g=g^-(z_i)=g^+(z_j)$ for supported \(z_i,z_j\) with $d_{z_i},d_{z_j}>0$ and $\epsilon>0$. Suppose the population cellwise nonconformity score distributions admit densities that are locally positive at the threshold \(q_g^0\). Then the Tilted Mondrian family admits a nonzero feasible tilt \(\theta\) preserving group coverage such that
$d_{z_i}^\theta<d_{z_i}$ and $d_{z_j}^\theta<d_{z_j}$, equivalently
$m_{z_i}^\theta>0$ and $m_{z_j}^\theta>0$, with all other stratum gaps unchanged.
\vspace{-5pt}
\end{restatable}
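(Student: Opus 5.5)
We construct the tilt explicitly and verify it with a first-order (implicit-function) argument around $\theta=0$, working at the population level with the group threshold as a function of the tilt. Fix the group $g$ with $g=g^-(z_i)=g^+(z_j)$. We tilt only $g$'s cells in $z_i$ and $z_j$; every other tilt is zero. In cell $(g,z_i)$, where $g$ is disadvantaged, we lower scores by $\theta^-_{z_i}=t_i>0$. In cell $(g,z_j)$, where $g$ is advantaged, we raise scores by $\theta^+_{z_j}=t_j>0$. The other group and all other strata are untouched. Write $F_{g,z}$ for the cellwise score CDF and $f_{g,z}$ for its density, positive near $q^0_g$.

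After the tilt, the cell coverages at a group threshold $q$ are $F_{g,z_i}(q+t_i)$, $F_{g,z_j}(q-t_j)$, and $F_{g,z}(q)$ for the remaining $z$. We recalibrate $g$ so that the group coverage
\[
\Phi(q,t_i,t_j)=\pi_g(z_i)F_{g,z_i}(q+t_i)+\pi_g(z_j)F_{g,z_j}(q-t_j)+\sum_{z\ne z_i,z_j}\pi_g(z)F_{g,z}(q)
\]
stays equal to its baseline value $\Phi(q^0_g,0,0)$. The key choice is to balance the two tilts so that the recalibrated threshold does not move:
\[
\pi_g(z_i)\bigl[F_{g,z_i}(q^0_g+t_i)-F_{g,z_i}(q^0_g)\bigr]=\pi_g(z_j)\bigl[F_{g,z_j}(q^0_g)-F_{g,z_j}(q^0_g-t_j)\bigr].
\]
Positivity of $\pi_g(z_i)$ (which holds by the standing assumption $\pi_{g^-(z)}(z)>0$) and $\pi_g(z_j)$ (since $z_j$ is supported), together with local positivity of the densities, makes both sides continuous and strictly increasing in $t$ near $0$. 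So for each small $t_j>0$ there is a unique small $t_i>0$ solving the equation. Then $q=q^0_g$ still solves the coverage equation, and group coverage is preserved exactly.

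With the threshold fixed, all cells $(g,z)$ with $z\ne z_i,z_j$ keep their coverage. All cells of the other group are untouched, since its scores and threshold do not change. Hence every other stratum gap is unchanged. Coverage of $(g,z_i)$ rises by $\delta_i=F_{g,z_i}(q^0_g+t_i)-F_{g,z_i}(q^0_g)>0$. Coverage of $(g,z_j)$ falls by $\delta_j=F_{g,z_j}(q^0_g)-F_{g,z_j}(q^0_g-t_j)>0$. Both go to $0$ as $t_j\to0$. We then choose $t_j$ small enough that the following hold:
\begin{itemize}
\item $\delta_i<2d_{z_i}$ and $\delta_j<2d_{z_j}$, so $d^\theta_{z_i}=|d_{z_i}-\delta_i|<d_{z_i}$ and likewise for $z_j$. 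If we also want the original orientation kept, we take $\delta<d$.
\item $\delta_j\le\epsilon$, so the advantaged-cell loss constraint holds.
\end{itemize}
The disadvantaged cell $(g,z_i)$ only gains coverage, and all other gaps are non-increasing because they are unchanged. So the tilt is feasible, nonzero, and gives $m^\theta_{z_i},m^\theta_{z_j}>0$.

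The main obstacle is the balancing step: ensuring the group threshold is genuinely unchanged rather than only approximately so. This is where strict monotonicity and continuity of $F_{g,z}$ near $q^0_g$ (from locally positive densities) are essential, via the intermediate value theorem. If instead the procedure must recalibrate the threshold from a given tilt, we would apply the implicit function theorem to $\Phi(q,t_i,t_j)=\Phi(q^0_g,0,0)$. Here $\partial_q\Phi=\sum_z\pi_g(z)f_{g,z}(q^0_g)>0$, and along the direction with $\pi_g(z_i)f_{g,z_i}t_i=\pi_g(z_j)f_{g,z_j}t_j$ the threshold shift is second order. In that case the other cells' coverages change only at $o(t)$ and would need extra care to state ``unchanged''. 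This is why we prefer the exact balancing construction. Finally, we note that validity of the group guarantee for the tilted score follows from \Cref{prop:score-fix-validity}.
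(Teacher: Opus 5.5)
Your proposal is correct and follows essentially the paper's argument: shift only the two cells of $g$ so that the $\pi_g$-weighted coverage gain in $(g,z_i)$ exactly offsets the loss in $(g,z_j)$, which keeps the group threshold at $q_g^0$ and leaves every other cell untouched. The only difference is parametrization. The paper fixes the target increments $\delta_i=\eta\,\pi_g(z_j)$ and $\delta_j=\eta\,\pi_g(z_i)$ and uses local positivity to find the shifts that realize them, whereas you fix $t_j$ and solve for $t_i$. Also, you allow $\delta<2d$ where the paper takes $\delta<d$ to rule out reversal; both satisfy the stated feasibility constraints.
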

\noindent\textbf{Take-away.}
Under these conditions, Tilted Mondrian can strictly improve group-Mondrian by reducing coverage disparities without changing group coverage.
\begin{figure*}[t]
\begin{minipage}[t]{\textwidth}
\refstepcounter{algorithm}\label{alg:main-directional-shifting}
\small
\raggedright
\hrule
\vspace{0.7ex}

\noindent\textbf{Algorithm \thealgorithm:} Tilted Mondrian score adjustment

\noindent\textbf{Input:} Baseline nonconformity score $s$; $\mathcal D_{\rm tilt},\mathcal D_{\rm cal},\mathcal D_{\rm test}$; $G,Z$; error level $\alpha$; tolerance $\epsilon$.

\small
\setlength{\algorithmicindent}{0.9em}
\begin{algorithmic}[1]
\STATE On $\mathcal D_{\rm tilt}$, compute baseline group-Mondrian thresholds; record group coverage levels and cellwise coverage $c_{g,z}^0$.
\STATE For each $z$, let $g^+(z)\text{/}g^-(z)$ be the higher/lower-coverage groups and $d_z$ their gap.
\FOR{each deterministic candidate shift $\theta$}
    \STATE Shift candidate scores by $+\theta_z^+$ in $g^+(z)$, $-\theta_z^-$ in $g^-(z)$; set $z=y$ for $Z=Y$, else $z=Z(x)$.
    \STATE Reselect group thresholds on \(\mathcal D_{\rm tilt}\) to keep group coverage levels; compute $c_{g,z}^\theta,d_z^\theta$.
    \STATE Set gap reduction $m_z^\theta=d_z-d_z^\theta$.
    \STATE Retain $\theta$ iff \(c^0_{g^+(z),z}-c^\theta_{g^+(z),z}\le\epsilon,
    c^\theta_{g^-(z),z}\ge c^0_{g^-(z),z}\), and $d_z^\theta\le d_z$ for all $z$.
\ENDFOR
\STATE Select in order: min worst gap; min mean gap; max total mitigation; min shift norm;
and, only for remaining ties, minimum tilt split average set size.
\STATE Freeze score; group-Mondrian recalibrate on $\mathcal D_{\rm cal}$; evaluate once on $\mathcal D_{\rm test}$.
\RETURN selected score and test report.
\end{algorithmic}

\vspace{0.3ex}
\hrule
\end{minipage}
\vspace{-10pt}
\end{figure*}

When finer-stratum fairness and coverage guarantees are themselves the objective, Mondrian calibration can be performed separately within each cell, rather than just at the group level. However, when working with a finite calibration set that becomes finely partitioned, cell-wise coverage suffers from high variance \citep{angelopoulos2023gentle}. Our setting is different. We ask whether disparities within \(Z\) can be mitigated while calibration remains only at the coarser group level \(G\). Tilted Mondrian produces thresholds at the group level after score shifting, avoiding fine partitions of the calibration data, and mitigates disparities at the cell level with score adjustments. Tables~\ref{tab:joint-mondrian-intervention-full} and~\ref{tab:joint-mondrian-rmad-full} in Appendix~\ref{app:tilted-mondrian} compare Tilted Mondrian to the alternative: partitioning at the cell level and applying Mondrian for cell-level thresholds. Overall, Tilted Mondrian offers a practical %
mechanism for fairness improvement beyond the group level.

Tilted Mondrian can also target set size disparities. Let \(\bar M^0_{g,z}\) denote the baseline mean prediction set size and \(L^0_{z,S}=|\bar M^0_{a,z}-\bar M^0_{b,z}|\); define \(L^\theta_{z,S}\) and \(m^\theta_{z,S}=L^0_{z,S}-L^\theta_{z,S}\) analogously after the mitigation. Tilted Mondrian raises nonconformity scores in the group with the larger baseline mean set size and lowers them in the group with the smaller mean set size. Candidate tilts preserve group coverage level on \(\mathcal D_{\rm tilt}\), keep set size gaps and larger-side means non-increasing, and limit smaller-side increases. The adjusted score is frozen before group-Mondrian calibration on \(\mathcal D_{\rm cal}\). Details and coverage trade-offs about Tilted Mondrian for set size are provided in Appendix~\ref{app:alternative-repair-target}.
We also considered a covariate-adaptive alternative inspired by boosted CP \citep{xie2024boosted}. Tilted Mondrian often achieves larger worst-gap reductions and lower mean cell disparities than the boosted variant (Appendix~\ref{app:boosted-tilted-mondrian}). The remaining question is how much local improvement is possible when group coverage and cellwise harm are held fixed.

\subsection{A Mitigation Limit under Fixed Group Coverage}

Tilted Mondrian leaves the cell composition $\pi_g(z)$ unchanged and instead modifies scores to balance cell-level coverage. Because group coverage is the $\pi_g$-weighted average of cell coverage, preserving group coverage requires $\sum_{z\in\mathcal Z}\pi_g(z) (c_{g,z}^{\theta}-c_{g,z}^{0})=0$. Consequently, a coverage gain in one cell must be offset elsewhere within the same group, motivating the following mitigation limit.

\begin{restatable}[Mitigation bound]{theorem}{nofreerepairbudgetthm}\label{thm:no-free-repair-budget}
Consider a class of disparity mitigation methods satisfying the same constraints as Tilted Mondrian: they preserve each group's baseline coverage, lose no more than $\epsilon$ coverage for each advantaged cell, do not reduce coverage in any disadvantaged cell, and do not increase stratum gaps. Assume \(\pi_{g^-(z)}(z)>0\) for every \(z\in\mathcal{Z}\). Then
\begin{equation}
    \max_{z\in\mathcal Z} m_z^\theta
    \le
    \max_{z\in\mathcal Z}
    \min\left\{
        d_z,
        \epsilon\left(
        1+
        \frac{\sum_{r:g^+(r)=g^-(z)}\pi_{g^-(z)}(r)}{\pi_{g^-(z)}(z)}
        \right)
    \right\}.
\end{equation}
\end{restatable}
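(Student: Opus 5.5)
It suffices to prove the bound stratum by stratum. For each \(z\in\mathcal Z\) I will show \(m_z^\theta\le\min\{d_z,\epsilon(1+\sum_{r:g^+(r)=g^-(z)}\pi_{g^-(z)}(r)/\pi_{g^-(z)}(z))\}\). Taking the maximum over \(z\) on both sides then gives the theorem, since a pointwise bound passes to the maxima.

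The first half of the minimum comes directly from the definitions. Because \(d_z^\theta=|c^\theta_{g^+(z),z}-c^\theta_{g^-(z),z}|\ge0\), we have \(m_z^\theta=d_z-d_z^\theta\le d_z\). The second half uses group-coverage preservation, applied to the disadvantaged group \(g:=g^-(z)\). Since \(\mathcal Z\) partitions the group population, group coverage equals \(\sum_r\pi_g(r)c_{g,r}\). Preservation therefore gives \(\sum_r\pi_g(r)(c^\theta_{g,r}-c^0_{g,r})=0\).

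I split the strata \(r\) according to the role group \(g\) plays in them.
\begin{itemize}
\item If \(g=g^-(r)\), the feasibility constraint says coverage does not decrease, so \(c^\theta_{g,r}-c^0_{g,r}\ge0\).
\item If \(g=g^+(r)\), the constraint says the loss is at most \(\epsilon\), so \(c^\theta_{g,r}-c^0_{g,r}\ge-\epsilon\).
\end{itemize}
Write \(\delta_z:=c^\theta_{g,z}-c^0_{g,z}\ge0\) for the gain in the target cell. In the balance equation, drop the nonnegative contributions from the other disadvantaged cells of \(g\). This leaves
\[
\pi_g(z)\delta_z\le\epsilon\sum_{r:g^+(r)=g}\pi_g(r),
\qquad\text{so}\qquad
\delta_z\le\epsilon\,\frac{\sum_{r:g^+(r)=g}\pi_g(r)}{\pi_g(z)},
\]
where the division uses the assumption \(\pi_g(z)>0\). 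If a stratum has \(d_z=0\), I will resolve its labelling by any convention; nothing changes because the first half of the minimum already gives \(m_z^\theta\le0\).

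Next I bound the gap reduction in stratum \(z\) by the movements of its two cells. The constraint "stratum gaps are non-increasing" gives \(d_z^\theta\le d_z\), but it does not rule out a sign flip of the gap, so I cannot drop the absolute value directly. I use the reverse triangle inequality instead:
\[
d_z^\theta\ge\bigl(c^0_{g^+(z),z}-c^0_{g^-(z),z}\bigr)-\bigl(c^0_{g^+(z),z}-c^\theta_{g^+(z),z}\bigr)-\delta_z\ge d_z-\epsilon-\delta_z .
\]
The middle term is at most \(\epsilon\) by the advantaged-cell constraint in stratum \(z\). Hence \(m_z^\theta\le\epsilon+\delta_z\). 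Substituting the bound on \(\delta_z\) gives \(m_z^\theta\le\epsilon\bigl(1+\sum_{r:g^+(r)=g^-(z)}\pi_{g^-(z)}(r)/\pi_{g^-(z)}(z)\bigr)\). Combined with \(m_z^\theta\le d_z\), this is the per-stratum claim, and the maximum over \(z\) finishes the proof.

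The main obstacle is the sign-flip issue in this last step. The definition of \(d_z^\theta\) carries an absolute value, so I must avoid assuming the ordering of the two groups is preserved; the reverse-triangle argument above does this. A second point to check is that the balance equation involves only cells of the single group \(g^-(z)\), which is why the sum in the bound runs over strata where that group is advantaged. The advantaged group \(g^+(z)\)'s own balance equation is not needed, because its loss in cell \(z\) is capped at \(\epsilon\) directly by the constraint.
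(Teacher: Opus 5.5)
Your proof is correct and follows essentially the same route as the paper's. You bound $m_z^\theta$ by $d_z$ and also by the advantaged-side loss plus the disadvantaged-side gain in stratum $z$, cap the loss by $\epsilon$, and cap the gain using the coverage-balance identity for group $g^-(z)$ after discarding that group's other nonnegative gains. Your remark that bounding $m_z^\theta$ by the loss plus the gain needs only $|x|\ge x$, not the no-gap-increase constraint, is a slightly cleaner presentation of the same inequality the paper uses.
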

\noindent\textbf{Take-away.}
Disparities can be reduced by giving up some coverage in advantaged strata. The bound quantifies the maximum improvement achievable by any disparity mitigation strategy satisfying these constraints, including Tilted Mondrian.
\vspace{-3pt}
\section{Empirical evidence}\label{sec:empirical-protocol-evidence}
\vspace{-3pt}
We examine propensity standardization for aligning aggregate set size reports with finer-stratum and Tilted Mondrian for reducing finer-stratum disparities in coverage and set size. Groupwise coverage guarantees are retained throughout the mitigation procedures.

\noindent\textbf{Datasets and tasks.}
The four real-world experimental settings are Bias in Bios occupation prediction with gender groups \(G\) and occupation strata \(Z\) \citep{dearteaga2019bias}; Folktables occupation prediction with gender groups \(G\) and education strata \(Z\) across five ACS years \citep{ding2021retiring}; Folktables income quintile prediction with race groups \(G\) and education strata \(Z\); ATUS time-use activity prediction with gender groups \(G\) and employment strata \(Z\) \citep{bls2024atus}. We evaluate propensity standardization on set size disparities, and Tilted Mondrian mitigation of finer-stratum coverage disparities, along with its set size variant.

\noindent\textbf{Models and scores.}
We begin with training, calibration, and test data splits. %
$\mathcal D_{\rm train}$ fits DistilBERT \citep{sanh2019distilbert} for Bias in Bios, year-specific histogram gradient boosting for Folktables, and regularized histogram gradient boosting for ATUS. We test with four score functions based on model probabilities $\hat p(x)$: SIMPLE, $s(x,y)=1-\hat p_y(x)$, APS \citep{romano2020classification}, RAPS \citep{angelopoulos2021uncertainty}, and SAPS \citep{huang2024conformal}. The calibration data is divided into two disjoint parts; one remains \(\mathcal D_{\rm cal}\), and the other is used as \(\mathcal D_{\rm prop}\) or \(\mathcal D_{\rm tilt}\). $\mathcal D_{\rm prop}$ fits propensity weights on profile $H$ using a feed-forward network for Bias in Bios, or logistic regression for other datasets. $\mathcal D_{\rm tilt}$ selects the score adjustments $\theta_z$. $\mathcal D_{\rm cal}$ calibrates the group-Mondrian thresholds. Finally, $\mathcal D_{\rm test}$ is used for evaluation. Each split is random from the overall pool of data, making data in each set exchangeable. Appendix~\ref{app:experiments} gives the detailed experiment protocols and model specifications.

\noindent\textbf{Metrics.}
For propensity standardization, Table~\ref{tab:empirical-evidence} reports the number of reversals, high-conflict reversals (conflict strength \(>0.020\)), and conflict strength from \Cref{eq:simpson-product-strength}. %
The normalized effective sample size (ESS) \citep{kish1992weighting} is \((\sum_i w_i)^2/\sum_i w_i^2\), divided by $\vert \mathcal{D}_{\rm{test}}\vert$. 

For coverage disparity mitigation with Tilted Mondrian, Table~\ref{tab:score-intervention-simple}
reports the worst stratum coverage gap before and after adjustment,
\(L^0_{\max}=\max_z d_z\) and
\(L^\theta_{\max}=\max_z d_z^\theta\), together with the largest
stratum gap reduction \(\max_z m_z^\theta\). Harm is the largest coverage loss among initially advantaged cells using tolerance \(\epsilon=0.01\), and
\(\Delta|C|\) is the change in average test set size. %
Similarly, for set size disparity mitigation, let \(L^0_{z,S}\) and \(L^\theta_{z,S}\) denote the stratum set size gaps before and after adjustment, and define \(m^\theta_{z,S}=L^0_{z,S}-L^\theta_{z,S}\). Table~\ref{tab:set-size-intervention-simple} reports \(L^0_{\max,S}=\max_z L^0_{z,S}\), \(L^\theta_{\max,S}=\max_z L^\theta_{z,S}\), and the largest stratum gap reduction \(\max_z m^\theta_{z,S}\). Harm is the largest set size increase among initially smaller set cells using tolerance \(\epsilon=0.02\), and \(\Delta\mathrm{Cov}\) is the change in coverage.

\vspace{-3pt}
\subsection{Evidence for Propensity Standardization}
\vspace{-3pt}

The experiments quantify the composition sensitivity of prediction sets by examining the relevant metrics before and after propensity standardization is applied. %
Reductions in conflict strength reflect better aggregate--stratum alignment, while the ESS fraction records how much comparable support remains. Table~\ref{tab:empirical-evidence} shows that standardization reduces conflict strength across the reported settings and usually reduces the number of reversals, revealing a strong composition effect in the observed data. For Bias in Bios and ATUS, conflict strength decreases greatly across all score variants. The Folktables occupation task shows that high conflict strength reversals are nearly eliminated in every year. These reductions reflect Theorem~\ref{thm:alignment-loss} and imply that raw aggregate--stratum disagreement is strongly sensitive to the composition effect, so raw set-size curves should be interpreted accordingly. %
Table~\ref{tab:alignment-decomposition} in Appendix~\ref{app:experiments} reports the corresponding alignment loss decomposition. Variability across data splits is examined in Table~\ref{tab:reporting-resplit-robustness} in Appendix~\ref{app:split-robustness}.

\begin{table}[!t]
\caption{Metrics before and after propensity standardization is applied. 
Top: Bias in Bios/ATUS across score functions $s(x, y)$. Bottom: Folktables with SIMPLE. %
}
\vspace{-4pt}
\label{tab:empirical-evidence}
\label{tab:score-specific-evidence}
\label{tab:folktables-cross-year}
\centering
\small
\begingroup
\renewcommand{\arraystretch}{1.0}

\begin{minipage}[t]{0.49\textwidth}
\centering
\setlength{\tabcolsep}{1pt}
\begin{tabular*}{\linewidth}{@{\extracolsep{\fill}}lccc@{}}
\toprule
\multicolumn{4}{c}{\emph{Bias in Bios}} \\
\cmidrule(lr){1-4}
Score
& Reversals
& \begin{tabular}[c]{@{}c@{}}Conflict strength\end{tabular}
& ESS \\
\midrule
SIMPLE & $20\to2$ & $0.101\to0.006$ & $0.833$ \\
APS & $41\to7$ & $0.120\to0.012$ & $0.833$ \\
RAPS   & $56\to1$ & $0.151\to0.008$ & $0.833$ \\
SAPS   & $38\to0$ & $0.131\to0.005$ & $0.833$ \\
\bottomrule
\end{tabular*}
\end{minipage}
\hfill
\begin{minipage}[t]{0.49\textwidth}
\centering
\setlength{\tabcolsep}{1pt}
\begin{tabular*}{\linewidth}{@{\extracolsep{\fill}}lccc@{}}
\toprule
\multicolumn{4}{c}{\emph{ATUS}} \\
\cmidrule(lr){1-4}
Score
& Reversals
& \begin{tabular}[c]{@{}c@{}}Conflict strength\end{tabular}
& ESS \\
\midrule
SIMPLE & $36\to10$ & $0.464\to0.003$ & $0.982$ \\
APS    & $72\to51$ & $1.635\to0.125$ & $0.982$ \\
RAPS   & $72\to54$ & $1.228\to0.086$ & $0.982$ \\
SAPS   & $69\to45$ & $0.039\to0.007$ & $0.982$ \\
\bottomrule
\end{tabular*}
\end{minipage}

\vspace{5pt}

\begin{minipage}[t]{0.5\textwidth}
\centering
\setlength{\tabcolsep}{3pt}
\begin{tabular}{@{}l@{\hspace{6pt}}c@{\hspace{4pt}}c@{\hspace{4pt}}c@{\hspace{6pt}}c@{}}
\toprule
\multicolumn{5}{c}{\emph{Occupation task} (SIMPLE)} \\
\cmidrule(lr){1-5}
Year
& Reversals
& \begin{tabular}[c]{@{}c@{}}High conflict\end{tabular}
& \begin{tabular}[c]{@{}c@{}}Strength\end{tabular}
& \begin{tabular}[c]{@{}c@{}}ESS\end{tabular} \\
\midrule
2014 & $182{\to}114$ & $94\to1$ & $9.378{\to}0.092$ & $0.666$ \\
2015 & $182{\to}160$ & $87\to0$ & $7.809{\to}0.319$ & $0.611$ \\
2016 & $191{\to}137$ & $80\to0$ & $7.751{\to}0.111$ & $0.702$ \\
2017 & $192{\to}102$ & $70\to0$ & $6.822{\to}0.544$ & $0.629$ \\
2018 & $183{\to}101$ & $69\to0$ & $7.904{\to}0.543$ & $0.621$ \\
\bottomrule
\end{tabular}
\end{minipage}\hfill
\begin{minipage}[t]{0.5\textwidth}
\centering
\setlength{\tabcolsep}{2pt}
\begin{tabular}{@{}l@{\hspace{6pt}}c@{\hspace{4pt}}c@{\hspace{4pt}}c@{\hspace{6pt}}c@{}}
\toprule
\multicolumn{5}{c}{\emph{Income quintile task} (SIMPLE)} \\
\cmidrule(lr){1-5}
Year
& Reversals
& \begin{tabular}[c]{@{}c@{}}High conflict\end{tabular}
& \begin{tabular}[c]{@{}c@{}}Strength\end{tabular}
& \begin{tabular}[c]{@{}c@{}}ESS\end{tabular} \\
\midrule
2014 & $182{\to}107$ & $9\to0$  & $2.059{\to}0.069$ & $0.950$ \\
2015 & $182{\to}156$ & $30\to0$ & $2.400{\to}0.085$ & $0.954$ \\
2016 & $194{\to}108$ & $1\to0$  & $2.104{\to}0.102$ & $0.953$ \\
2017 & $182{\to}202$ & $27\to0$ & $2.675{\to}0.144$ & $0.949$ \\
2018 & $182{\to}144$ & $18\to0$ & $2.652{\to}0.109$ & $0.949$ \\
\bottomrule
\end{tabular}
\end{minipage}
\endgroup
\vspace{-4pt}
\end{table}

\begin{table}[!t]
\caption{Tilted Mondrian for coverage disparity mitigation. %
}
\vspace{-4pt}
\label{tab:score-intervention-simple}
\centering
\small
\setlength{\tabcolsep}{2.0pt}
\renewcommand{\arraystretch}{1.05}
\begin{tabular*}{\textwidth}{@{\extracolsep{\fill}}llcrlcc@{}}
\toprule
Experiment & Score & \(L^0_{\max}\to L^\theta_{\max}\) & \(\max_z m_z^\theta\) & Harm & \(\Delta |C|\) \\
\midrule
Bias in Bios & SIMPLE & \(0.161\to0.090\) & \(0.071\) (surgeon) & \(0.010\) & \(0.012\) \\
Folktables occupation & SIMPLE & \(0.037\to0.014\) & \(0.024\) (Less than high school) & \(0.010\) & \(0.007\) \\
Folktables income & SIMPLE & \(0.033\to0.024\) & \(0.009\) (Bachelor) & \(0.003\) & \(0.001\) \\
ATUS & SIMPLE & \(0.010\to0.013\) & \(0.001\) (employed) & \(0.009\) & \(0.000\) \\
\bottomrule
\end{tabular*}
\vspace{-4pt}
\end{table}
\begin{table}[!t]
\caption{Tilted Mondrian for set size disparity mitigation.
}
\vspace{-4pt}
\label{tab:set-size-intervention-simple}
\centering
\small
\setlength{\tabcolsep}{2.0pt}
\renewcommand{\arraystretch}{1.05}
\begin{tabular*}{\textwidth}{@{\extracolsep{\fill}}llcllc@{}}
\toprule
Experiment & Score
& \(L^0_{\max,S}\to L^\theta_{\max,S}\)
& \(\max_z m_{z,S}^\theta\)
& Harm
& \(\Delta\mathrm{Cov} (\mathrm{pp})\) \\
\midrule
Bias in Bios
& SIMPLE
& \(0.065\to0.063\)
& \(0.005\) (psychologist)
& \(0.000\)
& \(-0.041\) \\

Folktables occupation
& SIMPLE
& \(0.180\to0.142\)
& \(0.058\) (High school or GED)
& \(0.020\)
& \(0.012\) \\

Folktables income
& SIMPLE
& \(0.188\to0.146\)
& \(0.041\) (Graduate or professional)
& \(0.003\)
& \(0.007\) \\

ATUS
& SIMPLE
& \(0.240\to0.238\)
& \(0.002\) (not in labor force)
& \(0.002\)
& \(0.000\) \\
\bottomrule
\end{tabular*}
\vspace{-4pt}
\end{table}
\vspace{-3pt}
\subsection{Evidence for Tilted Mondrian Disparity Mitigation}
\vspace{-3pt}

Table~\ref{tab:score-intervention-simple} reports the results for the SIMPLE score function at \(\alpha=0.1\). Tilted Mondrian reduces the test set's worst stratum-level gap in Bias in Bios and both Folktables tasks (year 2018) with only small average set-size changes; ATUS is an exception where very little disparity existed initially. %
Table~\ref{tab:appendix-score-intervention-remaining} in Appendix~\ref{app:no-free-score-intervention} reports the remaining score functions, where again Tilted Mondrian reduces disparities on the three datasets that initially display significant differences. These reductions are consistent with Theorem~\ref{prop:local-mitigation}, which shows that Tilted Mondrian can reduce finer-stratum coverage disparities. Table~\ref{tab:intervention-resplit-robustness} in Appendix~\ref{app:split-robustness} shows variability across data splits.

Table~\ref{tab:set-size-intervention-simple} demonstrates that Tilted Mondrian decreases the worst set size disparity for all four datasets. On Folktables we see relatively large reductions in the worst set size disparity. These reductions empirically support the set-size extension of Tilted Mondrian, showing that the algorithm can mitigate finer-stratum set-size disparities. %
Table~\ref{tab:set-size-repair-resplit-robustness} in Appendix~\ref{app:alternative-repair-target} shows variability across data splits.

\vspace{-4pt}
\section{Discussion and conclusion}\label{sec:limitations-conclusion}
\vspace{-4pt}

\noindent\textbf{Conclusion.} This paper studies a coarse-to-fine fairness problem in group-Mondrian CP. Aggregate group disparities can be shaped by population composition. Coarse group coverage guarantees may still conceal large differences within finer strata. Propensity standardization evaluates the composition effect by reweighting the set sizes over a standardized distribution. The resulting change in aggregate--stratum misalignment between the observed and standardized population reveals how strong the composition effect is on the disagreement. More broadly, this assessment captures how population mixture interacts with the disparity structure, information that cannot be easily obtained from base rates alone. Moving from assessment to mitigation, Tilted Mondrian intervenes directly at the score level without requiring model retraining. It provides a generic, modular refinement layer on top of a chosen Mondrian calibration scheme, mitigating fine-grained disparities while preserving groupwise coverage. The same principle applies to finer Mondrian hierarchies. Importantly, Tilted Mondrian does not require dividing the calibration data into increasingly small cells, offering the practitioner flexibility to choose a suitable grouping level of interest and still direct the intervention toward disparities at a finer level. A method-agnostic bound further quantifies the limits of such fine-grained fairness improvement. The Tilted Mondrian framework also extends to set size, mitigating disparities across finer strata while retaining groupwise coverage.

\newpage

\bibliographystyle{iclr2027_conference}
\bibliography{references}

\clearpage
\appendix
\section*{Appendix Contents}

\vspace{-0.3em}

\noindent
\begin{minipage}[t]{0.18\textwidth}
    \vspace{0pt}
    \centering
    \resizebox{2.19cm}{!}{
        \begin{tikzpicture}[
  x=2.8cm,
  y=2.8cm,
  line join=miter,
  cell/.style={draw=ink,line width=0.42pt}
]

\definecolor{paper}{RGB}{244,246,246}
\definecolor{paperwarm}{RGB}{239,238,231}
\definecolor{papercool}{RGB}{228,232,233}
\definecolor{yellow}{RGB}{228,206,87}
\definecolor{blue}{RGB}{0,39,127}
\definecolor{red}{RGB}{213,85,48}
\definecolor{ink}{RGB}{15,15,15}

\path[cell,fill=papercool] (0.000000,1.000000) -- (-0.233831,0.766169) -- (-0.091856,0.594712) -- (0.252456,0.747544) -- cycle;
\path[cell,fill=paper] (0.440355,0.559645) -- (0.200520,0.250496) -- (0.334089,-0.042142) -- (0.749191,0.250809) -- cycle;
\path[cell,fill=paperwarm] (-0.615823,0.384177) -- (-0.764039,0.235961) -- (-0.558245,-0.116109) -- (-0.375354,0.224582) -- cycle;
\path[cell,fill=paper] (0.200520,0.250496) -- (-0.124457,-0.168403) -- (0.209556,-0.275309) -- (0.328454,-0.029796) -- cycle;
\path[cell,fill=yellow] (0.749191,0.250809) -- (0.539075,0.102524) -- (0.824600,-0.175400) -- (1.000000,0.000000) -- cycle;
\path[cell,fill=papercool] (0.539075,0.102524) -- (0.334089,-0.042142) -- (0.401491,-0.189813) -- (0.692442,-0.046761) -- cycle;

\path[cell,fill=paperwarm] (-0.375354,0.224582) -- (-0.565545,-0.129708) -- (-0.125869,-0.170223) -- (-0.008479,-0.018907) -- cycle;
\path[cell,fill=paper] (-0.565545,-0.129708) -- (-0.671996,-0.328004) -- (-0.433385,-0.566615) -- (-0.125869,-0.170223) -- cycle;
\path[cell,fill=paperwarm] (-0.764039,0.235961) -- (-1.000000,0.000000) -- (-0.671996,-0.328004) -- (-0.558245,-0.116109) -- cycle;
\path[cell,fill=red] (-0.252387,-0.333306) -- (-0.433385,-0.566615) -- (-0.197433,-0.802567) -- (-0.062857,-0.462426) -- cycle;
\path[cell,fill=papercool] (-0.062857,-0.462426) -- (-0.197433,-0.802567) -- (0.000000,-1.000000) -- (0.294274,-0.705726) -- cycle;
\path[cell,fill=yellow] (-0.233831,0.766169) -- (-0.436534,0.563466) -- (-0.129078,0.382777) -- (-0.079830,0.580188) -- cycle;

\path[cell,fill=paper] (0.252456,0.747544) -- (-0.091856,0.594712) -- (0.196965,0.245914) -- (0.440355,0.559645) -- cycle;
\path[cell,fill=blue] (-0.079830,0.580188) -- (-0.197920,0.106822) -- (-0.008479,-0.018907) -- (0.196965,0.245914) -- cycle;
\path[cell,fill=paperwarm] (0.328454,-0.029796) -- (0.073839,-0.555552) -- (0.294274,-0.705726) -- (0.529573,-0.470427) -- cycle;
\path[cell,fill=ink] (-0.124457,-0.168403) -- (-0.252387,-0.333306) -- (0.073839,-0.555552) -- (0.209556,-0.275309) -- cycle;
\path[cell,fill=paper] (0.692442,-0.046761) -- (0.401491,-0.189813) -- (0.529573,-0.470427) -- (0.824600,-0.175400) -- cycle;
\path[cell,fill=papercool] (-0.436534,0.563466) -- (-0.615823,0.384177) -- (-0.197920,0.106822) -- (-0.129078,0.382777) -- cycle;

\draw[ink,line width=0.72pt]
  (0.000000,1.000000) -- (-1.000000,0.000000) --
  (0.000000,-1.000000) -- (1.000000,0.000000) -- cycle;
\end{tikzpicture}

    }
\end{minipage}%
\hfill
\begin{minipage}[t]{0.82\textwidth}
    \vspace{0pt}
    \begin{itemize}
        \item \apptoclink{app:proofs}
        {Appendix A: Proofs and technical discussions}

        \item \apptoclink{app:uncertainty-principle}
        {Appendix B: A product form set--error uncertainty principle}

        \item \apptoclink{app:experiments}
        {Appendix C: Experiment specific protocols and evaluation details}

        \item \apptoclink{app:targets}
        {Appendix D: Standardization targets and profile choice}

        \item \apptoclink{app:no-free-score-intervention}
        {Appendix E: Tilted Mondrian details}

        \item \apptoclink{app:split-robustness}
        {Appendix F: Finite sample robustness}

        \item \apptoclink{app:alternative-repair-target}
        {Appendix G: Tilted Mondrian for set size and fundamental tradeoffs}
    \end{itemize}
\end{minipage}

\section{Proofs and Technical Discussions}\label{app:proofs}

\subsection{Proofs of the main results}
This appendix records the proofs and technical discussions behind the curve summaries.  Throughout the appendix the predictor is fixed. The random variable being re-averaged is the delivered prediction-set size $M_\alpha(X)=|\mathcal{C}_\alpha(X)|$.

\begin{table}[h]
\caption{Notation often used in the main text and appendices. Tildes denote quantities induced by the profile target distribution $Q_H$; they do not change the delivered prediction sets.}
\label{tab:notation}
\centering
\footnotesize
\setlength{\tabcolsep}{3pt}
\renewcommand{\arraystretch}{1.08}
\begin{tabularx}{\columnwidth}{@{}p{0.34\columnwidth}X@{}}
\toprule
Symbol & Meaning \\
\midrule
$X,Y,\mathcal Y,K$ & Input, label, label space, and number of classes \\
$s(x,y),\hat q_\alpha$ & Nonconformity score and split-conformal threshold \\
$\mathcal C_\alpha(x),M_\alpha(X)$ & Delivered prediction set and set size \\
$\boldsymbol{\alpha}$ & Grid of nominal miscoverage levels used for curve reporting \\
$\alpha^\star$ & Selected information efficiency anchor\\
$M_\alpha^+(X)$ & Nonempty set-size version used in Proposition~\ref{prop:list-fano} \\
$\mathcal{H},\mathcal{H}_{\mathrm{bin}}$ & Entropy and binary entropy \\
$G\in\{a,b\},Z,\mathcal{Z},z$
& Comparison group, stratum variable, space, and a realized stratum \\
$\pi_g(z),\tilde\pi_g(z)$
& Raw and $Q_H$-induced stratum distributions \\
$V_g,V_{g,z},\tilde V_g,\tilde V_{g,z}$
& Raw and standardized group and stratum curves \\
$\Delta,\Delta_z,\tilde\Delta,\tilde\Delta_z$
& Raw and standardized aggregate and stratum gaps \\
$\mathbb E_Z,\mathbb E_{\tilde Z}$
& Expectations under the symmetric raw and standardized distributions \\
$\operatorname{Var}_Z,\operatorname{Var}_{\tilde Z}$
& Variance under the symmetric raw and standardized distributions \\
$R,\tilde R,\Gamma_z$
& Raw residual, standardized residual, and conflict strength \\
$H,P_H,e(h),Q_H$
& Profile, marginal distribution, propensity, and standardized distribution \\
$w_a(H),w_b(H)$
& Standardization factors $1-e(H)$ and $e(H)$ \\
$\mathcal A_{\rm raw},\mathcal A_{\rm std}$
& Raw and standardized alignment losses \\
$\eta_H$
& Average conditional reporting-stratum imbalance under $Q_H$ \\
$\Delta_{h,z}(\alpha)$
& Conditional group set-size gap at profile $h$ and stratum $z$ \\
$g^+(z),g^-(z)$
& Groups with higher and lower baseline coverage in stratum $z$ \\
$\theta_z^+,\theta_z^-$
& Nonnegative shifts for the higher/lower coverage groups in stratum $z$ \\
$S_\theta,\mathcal C_{\alpha,\theta}$
& Score after intervention and resulting set; $\mathcal C_{\alpha,0}$ is the baseline set \\
$c^0_{g,z},c^\theta_{g,z}$
& Baseline and intervened cell coverage \\
$d_z,d_z^\theta,m_z^\theta$
& Baseline stratum coverage gap, intervened gap, and gap reduction \\
$\epsilon$
& Harm budget for initially advantaged cells \\
Data splits
& $\mathcal{D}_{\mathrm{train}}$, $\mathcal{D}_{\mathrm{cal}}$, $\mathcal{D}_{\mathrm{prop}}/\mathcal{D}_{\mathrm{tilt}}$, and $\mathcal{D}_{\mathrm{test}}$ \\
\bottomrule
\end{tabularx}
\end{table}
\paragraph{Proof of Proposition~\ref{prop:common-profile-balance}.}
\begingroup
\renewcommand{\label}[1]{}
\commonprofilebalanceprop*
\endgroup
\begin{proof}
\label{proof:common-profile-balance}
Let $e(h)=\mathbb{P}(G=a\mid H=h)$.  For group $a$,
\begin{equation}
\begin{aligned}
&\mathbb{E}\{f(H)\mathbf{1}(G=a)(1-e(H))\}\nobreak\\
&\qquad =\int f(h)e(h)\{1-e(h)\} d P_H,
\end{aligned}
\end{equation}
and for group $b$,
\begin{equation}
\begin{aligned}
&\mathbb{E}\{f(H)\mathbf{1}(G=b)e(H)\}\nobreak\\
&\qquad =\int f(h)\{1-e(h)\}e(h) d P_H.
\end{aligned}
\end{equation}
The normalizing constants are identical. Both normalized weighted distributions are therefore $Q_H$.
\end{proof}

An equivalent symmetric form of \cref{eq:curve-aggregation}, in Section~\ref{sec:standardization}, is obtained by adding and subtracting the same cellwise terms. The raw aggregate gap can be written as
\begin{equation}
\begin{aligned}
\Delta(\alpha)
&=\frac12\sum_z\{\pi_a(z)+\pi_b(z)\}\Delta_z(\alpha)\nobreak\\
&\quad+\frac12\sum_z\{\pi_a(z)-\pi_b(z)\}
\{V_{a,z}(\alpha)+V_{b,z}(\alpha)\}.
\end{aligned}
\end{equation}
The first term is $\mathbb{E}_Z\Delta_Z(\alpha)$ under the symmetric raw stratum distribution, and the second term is $R(\alpha)$. The same calculation after standardization gives the corresponding identity for the $Q_H$-induced stratum distributions and the cell curves defined around \cref{eq:q-induced-cell-law}:
\begin{equation}
\begin{aligned}
\tilde\Delta(\alpha)
&=\frac12\sum_z\{\tilde\pi_a(z)+\tilde\pi_b(z)\}\tilde\Delta_z(\alpha)\nobreak\\
&\quad+\frac12\sum_z\{\tilde\pi_a(z)-\tilde\pi_b(z)\}
\{\tilde V_{a,z}(\alpha)+\tilde V_{b,z}(\alpha)\},
\end{aligned}
\label{eq:standardized-residual}
\end{equation}
which is the displayed identity \(\tilde\Delta(\alpha)=\mathbb{E}_{\tilde Z}\tilde\Delta_Z(\alpha)+\tilde R(\alpha)\); \cref{eq:standardized-residual} records the residual explicitly. For the residual bound, write $r_z=\tilde\pi_a(z)-\tilde\pi_b(z)$ and $s_z=\tilde V_{a,z}(\alpha)+\tilde V_{b,z}(\alpha)$. Since $\sum_z r_z=0$ and $0\le s_z\le 2K$,
\begin{equation}
\begin{aligned}
|\tilde R(\alpha)|
&=\frac12\left|\sum_z r_z\{s_z-K\}\right|\nobreak\\
&\le \frac K2\sum_z|r_z|
=K\,\operatorname{TV}(\tilde\pi_a,\tilde\pi_b).
\end{aligned}
\label{eq:standardized-residual-tv-bound}
\end{equation}
This is the residual bound used in Section~\ref{sec:standardization}. The centering step sharpens the direct range bound and shows that the remaining standardized residual is controlled by the total-variation distance between the two \(Q_H\)-induced stratum distributions.

\paragraph{Proof of Theorem~\ref{thm:alignment-loss}.}
\begingroup
\renewcommand{\label}[1]{}
\alignmentlossthm*
\endgroup
\begin{proof}
\label{proof:alignment-loss}
Write $p_g(z\mid h)=\mathbb P(Z=z\mid G=g,H=h)$. By
\cref{eq:q-induced-cell-law}, both propensity reweighted groups use the same distribution $Q_H$, so
\[
\tilde\pi_g(z)=\int p_g(z\mid h) d Q_H.
\]
The triangle inequality under this common mixture gives

\begin{align}
\operatorname{TV}(\tilde\pi_a,\tilde\pi_b)
&=\frac12\sum_z
\left|\int\{p_a(z\mid h)-p_b(z\mid h)\}d Q_H\right|\nonumber\\
&\le
\int\frac12\sum_z|p_a(z\mid h)-p_b(z\mid h)| d Q_H
=\eta_H.
\end{align}
Combining this with \cref{eq:standardized-residual-tv-bound} yields
\[
|\tilde R(\alpha)|
\le K\,\operatorname{TV}(\tilde\pi_a,\tilde\pi_b)
\le K\eta_H.
\]

Now suppose $\eta_H=0$. Since the integrand defining $\eta_H$ is
nonnegative,
\[
p_a(\cdot\mid h)=p_b(\cdot\mid h)
\]
for $Q_H$-almost every $h$; write the common conditional distribution as
$p(\cdot\mid h)$. Hence
\[
\tilde\pi_a(z)=\tilde\pi_b(z)
=\int p(z\mid h) d Q_H=:\tilde\pi(z),
\]
so \cref{eq:standardized-residual} immediately gives
$\tilde R(\alpha)=0$.

For brevity within the proof, let
\[
m_g(h,z)
:=
\mathbb E\{M_\alpha(X)\mid G=g,H=h,Z=z\},
\]
so that $\Delta_{h,z}(\alpha)=m_a(h,z)-m_b(h,z)$. 
From the definition of $\tilde V_{g,z}(\alpha)$ and the common conditional cell distribution,
\begin{align}
\tilde\Delta_z(\alpha)
&=\tilde V_{a,z}(\alpha)-\tilde V_{b,z}(\alpha)\nonumber\\
&=
\frac{\int p(z\mid h)\Delta_{h,z}(\alpha) d Q_H}
     {\tilde\pi(z)}\nonumber\\
&=
\mathbb E\{\Delta_{H,Z}(\alpha)\mid Z=z\},
\label{eq:standardized-cell-gap-conditional}
\end{align}
where the expectations and variances are under the joint distribution obtained by first drawing
$H\sim Q_H$ and then $Z\sim p(\cdot\mid H)$. Therefore, using
\cref{eq:alignment-loss-budget} and $\tilde R(\alpha)=0$,
\begin{align}
\mathcal A_{\rm std}(\alpha)
&=\operatorname{Var}_{\tilde Z}\{\tilde\Delta_Z(\alpha)\}\nonumber\\
&=\operatorname{Var}\!\left[
\mathbb E\{\Delta_{H,Z}(\alpha)\mid Z\}
\right].
\end{align}
The law of total variance now gives
\begin{align}
\operatorname{Var}
\{\Delta_{H,Z}(\alpha)\}
&=
\operatorname{Var}\!\left[
\mathbb E\{\Delta_{H,Z}(\alpha)\mid Z\}
\right]\nonumber\\
&\quad+
\mathbb E\!\left[
\operatorname{Var}\{\Delta_{H,Z}(\alpha)\mid Z\}
\right]\nonumber\\
&\ge \mathcal A_{\rm std}(\alpha),
\end{align}
which proves \(\mathcal A_{\rm std}(\alpha)\le\operatorname{Var}\{\Delta_{H,Z}(\alpha)\}\).

Now suppose composition dominance holds. By \cref{eq:composition-dominance},
\begin{align}
\operatorname{Var}
\{\Delta_{H,Z}(\alpha)\}
&=
\mathbb E
\!\left[
\left\{
\Delta_{H,Z}(\alpha)-
\mathbb E[\Delta_{H,Z}(\alpha)]
\right\}^2
\right]\nonumber\\
&\le
(
\operatorname*{ess\,sup}_{h,z}
\left|
\Delta_{h,z}(\alpha)-\mathbb E[\Delta_{H,Z}(\alpha)]
\right|
)^2\nonumber\\
&<
R^2(\alpha).
\label{eq:composition-dominance-variance}
\end{align}
By the definition
$R(\alpha)=\Delta(\alpha)-\mathbb E_Z\Delta_Z(\alpha)$,
\begin{align}
\mathcal A_{\rm raw}(\alpha)
&=
\mathbb E_Z
\left[
\left\{
R(\alpha)+
\mathbb E_Z\Delta_Z(\alpha)-
\Delta_Z(\alpha)
\right\}^2
\right]\nonumber\\
&=
R^2(\alpha)+
\operatorname{Var}_Z
\{\Delta_Z(\alpha)\}\nonumber\\
&\ge
R^2(\alpha).
\label{eq:raw-alignment-lower}
\end{align}
Combining
\cref{eq:composition-dominance-variance,eq:raw-alignment-lower}
gives
\begin{equation}
\mathcal A_{\rm std}(\alpha) \le
\operatorname{Var}
\{\Delta_{H,Z}(\alpha)\}<
R^2(\alpha) \le
\mathcal A_{\rm raw}(\alpha),
\end{equation}
and hence
\begin{equation}
\mathcal A_{\rm std}(\alpha)<\mathcal A_{\rm raw}(\alpha).
\end{equation}
If \(\Delta_{h,z}(\alpha)\) is constant for every \((h,z)\) pair, then \(\operatorname{Var}\{\Delta_{H,Z}(\alpha)\}=0\) and \(\mathcal{A}_{\rm std}(\alpha)=0.\)
\end{proof}

\paragraph{Proof of Theorem~\ref{prop:score-fix-validity}.}
\begingroup
\renewcommand{\label}[1]{}
\scorefixvaliditythm*
\endgroup
\begin{proof}
Condition on the tilt split. Then \(\theta\) is fixed before calibration, so \(S_\theta\) is an ordinary nonconformity score. Within each group \(g\), the calibration true label scores and the future true label score are exchangeable conditional on \(G=g\). The standard group Mondrian split conformal rank argument gives \(\mathbb P\{S_\theta(X,Y)\le \tilde{q}_g\mid G=g\}\ge 1-\alpha\), where \(\tilde{q}_g\) is the group specific conformal quantile computed on the calibration split after score shifting. Since \(Y\in \mathcal C_{\alpha,\theta}(X)\) exactly when \(S_\theta(X,Y)\le \tilde{q}_G\), the claim follows.
\end{proof}

\paragraph{Proof of Theorem~\ref{prop:local-mitigation}.}
\begingroup
\renewcommand{\label}[1]{}
\localmitigationprop*
\endgroup
\begin{proof}
Let $F_{g,z}$ be the baseline true-label score CDF, $q_g^0$ the baseline group
threshold, and write $\pi_i=\pi_g(z_i)$ and $\pi_j=\pi_g(z_j)$. Since both cells
are supported, $\pi_i,\pi_j>0$. Choose $\eta>0$ small enough that
\[
\delta_i=\eta\pi_j<d_{z_i},\qquad
\delta_j=\eta\pi_i<\min\{d_{z_j},\epsilon\}.
\]
Local positivity gives $\theta_i,\theta_j>0$ such that lowering scores in
$(g,z_i)$ by $\theta_i$ increases its coverage by $\delta_i$, while raising
scores in $(g,z_j)$ by $\theta_j$ decreases its coverage by $\delta_j$; set all
other shifts to zero. Since
\[
\pi_i\delta_i=\pi_j\delta_j,
\]
group coverage is unchanged. Hence
$d_{z_i}^\theta=d_{z_i}-\delta_i$ and
$d_{z_j}^\theta=d_{z_j}-\delta_j$, with all other gaps unchanged. The chosen
bounds prevent gap reversal and satisfy the harm constraint, proving
feasibility.
\end{proof}

\paragraph{Proof of Theorem~\ref{thm:no-free-repair-budget}.}
\begingroup
\renewcommand{\label}[1]{}
\nofreerepairbudgetthm*
\endgroup
\begin{proof}
Recall that \(\pi_g(z)=\mathbb P(Z=z\mid G=g)\), and that
\(\mathcal Z\) partitions each group population. For each stratum \(z\), let
\[
    h_z=c^0_{g^+(z),z}-c^\theta_{g^+(z),z},
    \qquad
    u_z=c^\theta_{g^-(z),z}-c^0_{g^-(z),z}.
\]
Thus \(h_z\) is the coverage loss on the initially advantaged side and \(u_z\) is the coverage gain on the initially disadvantaged side. Since \(d_z^\theta\ge0\), the single-cell mitigation satisfies \(m_z^\theta=d_z-d_z^\theta\le d_z\). The no-loss condition on disadvantaged cells gives \(u_z\ge0\). The no-gap-increase condition gives \(0\le h_z+u_z\le 2d_z\), because the intervened signed gap is \(d_z-h_z-u_z\). Hence
\[
    m_z^\theta=d_z-|d_z-h_z-u_z|\le h_z+u_z.
\]
The harm constraint gives \(h_z\le\epsilon\). Fixed group coverage gives, for every group \(g\),
\[
    \sum_{r:g^-(r)=g}\pi_g(r)u_r
    =
    \sum_{r:g^+(r)=g}\pi_g(r)h_r
    \le
    \epsilon\sum_{r:g^+(r)=g}\pi_g(r) .
\]
Because all disadvantaged-cell gains are nonnegative, every stratum \(z\) satisfies
\[
    u_z\le
    \frac{\epsilon\sum_{r:g^+(r)=g^-(z)}\pi_{g^-(z)}(r)}{\pi_{g^-(z)}(z)} .
\]
Combining this with \(h_z\le\epsilon\) gives
\[
    m_z^\theta\le h_z+u_z
    \le
    \epsilon\left(
    1+
    \frac{\sum_{r:g^+(r)=g^-(z)}\pi_{g^-(z)}(r)}{\pi_{g^-(z)}(z)}
    \right).
\]
Together with \(m_z^\theta\le d_z\), this yields the displayed cellwise bound. Taking the maximum over strata proves the result.
\end{proof}

\subsection{Finite-sample reporting stability and technical discussions}\label{app:finite-sample-reporting-stability}
\paragraph{Limitations.} Standardization requires adequate overlap and a profile that reflects the relevant composition; weak support, omitted information, task heterogeneity, or a weak classifier can result in lingering disagreement. The current framework is stated for binary group contrasts, and a multigroup formulation remains future work. Our Tilted Mondrian framework is interpretable in a fairness context but relies on a finite heuristic approach. The tilt split also trades the calibration sample size for score adjustment selection and calibration. Future work may focus on more efficient constrained optimization for establishing fine-grained fairness with explicit cell-level costs and trade-offs. Evaluation across additional real-world settings would further assess the framework's generality.

As noted in Section~\ref{sec:empirical-protocol-evidence}, the normalized Effective Sample Size (ESS) serves as a diagnostic for our standardization. The behavior of the ESS exposes a limit of identifiability. If profile features strongly separate the demographic groups, overlap weights vanish for the majority of samples, causing the ESS to drop. This warns practitioners that the standardized gap is driven by a thin subpopulation. In the extreme case where profile features perfectly proxy group membership (i.e., $e(h) \in \{0, 1\})$, all overlap weights become zero, rendering the standardized disparity mathematically undefined.

\paragraph{Fixed-policy reporting.}
All reporting identities are applied after a thresholding policy has already been chosen and the prediction sets have been formed. The policy may be pooled or group-conditional. The raw and standardized curves then re-average the same delivered set sizes under different reporting populations.  Thus the algebra below does not require the same thresholding policy in every experiment; it requires that, within each experiment, the thresholding policy be fixed before reporting weights and curve summaries are computed.

\paragraph{Split-conformal rank argument.}
After the score rule is specified, the calibration scores $U_i=s(X_i,Y_i)$ and the test score $U_{n+1}=s(X,Y)$ are exchangeable. With the usual conservative split-conformal order statistic, the event $Y\notin\mathcal{C}_\alpha(X)$ is the event that $U_{n+1}$ exceeds the selected calibration threshold.  Exchangeability implies that this event has probability at most $\alpha$, up to the standard finite-sample conservativeness from ties and order-statistic rounding.  The groupwise statement applies the same rank argument after conditioning on $G=g$ and using only calibration examples from that group.

\paragraph{Finite-grid H\'ajek stability.}
For group $g$, the population standardized curve is
\begin{equation}
    \tilde V_g(\alpha)
    =
    \frac{\mathbb{E}\{\mathbf{1}(G=g)w_g(H)M_\alpha(X)\}}
         {\mathbb{E}\{\mathbf{1}(G=g)w_g(H)\}},
\end{equation}
where $w_a(H)=1-e(H)$ and $w_b(H)=e(H)$. On the test split, the corresponding stabilized H\'ajek report is the finite-sample ratio in \cref{eq:hajek-curve}, denoted by $\tilde V'_g(\alpha)$.  Cellwise reports use the same ratio after multiplying numerator and denominator by $\mathbf{1}(Z_i=z)$.  This normalized ratio form is the classical H\'ajek style estimator \citep{hajek1971comment}; the bounded weights used here follow the propensity reweighting construction of \citet{li2018balancing}.

Suppose the finite-sample propensity standardized factors are bounded, the corresponding population denominator is bounded away from zero, and the operating grid $\Lambda$ is finite.  Since $0\le M_\alpha(X_i)\le K$, concentration for bounded averages \citep{hoeffding1963probability} gives uniform convergence of the numerator and denominator over $\alpha\in\Lambda$.  The ratio map is Lipschitz on denominators bounded away from zero, so
\begin{equation}
    \sup_{\alpha\in\Lambda}
    \left|\tilde V'_g(\alpha)-\tilde V_g(\alpha)\right|
    =o_p(1)
\end{equation}
when the fitted weights converge to the population standardized weights and the stabilized denominator remains nondegenerate. The same argument applies to cellwise reports with positive weighted denominators. This is a stability statement for the reported curve estimates rather than a coverage argument.

\paragraph{Plug-in propensity estimates.}
The standardized target in the population is written in terms of $e(H)=\mathbb{P}(G=a\mid H)$, whereas the finite-sample reports use a fitted propensity $e'(H)$ learned on the propensity split and then held fixed on the test split. This is the standard plug-in use of estimated propensity scores in weighting estimators \citep{hirano2003efficient}. In our setting the claim is descriptive rather than causal: the fitted propensity defines the standardized report used to re-average already delivered prediction-set sizes. If the stabilized weight is a Lipschitz function of the fitted propensity on the clipped range and $\mathbb{E}|e'(H)-e(H)|=r_n$, then the population weighted curve formed with $e'$ differs from the ideal standardized curve by at most a constant multiple of $Kr_n$, provided the group denominators are bounded away from zero. This follows by writing the difference between two ratio functionals, bounding the numerator and denominator perturbations by the weight perturbation, and using the boundedness of $M_\alpha$. The fixed bounded-weight rule and H\'ajek normalization make this perturbation controlled in finite samples, while balance checks assess whether the learned weighted report is stable \citep{austin2015moving}.

\paragraph{Reporting cell interpretation.}
Propensity standardization aligns the groups in the coordinates of the chosen prediction profile $H$ \citep{rosenbaum1983central,li2018balancing}. Under the shared profile distribution, reporting-cell balance is governed by the within-profile distributions $\mathbb P(Z\in\cdot\mid G,H)$. Theorem~\ref{thm:alignment-loss} summarizes their remaining discrepancy through $\eta_H$ and its composition effect through $\tilde R$. A standardized cell curve applies the learned reweighting within each test stratum $Z=z$. This design keeps $H$ as the frozen standardization profile and $Z$ as the reporting coordinate for cellwise Simpson-type disagreement. Exact induced distribution equality follows under $Z\perp G\mid H$. We therefore report both profile balance and residual reporting-cell mixture summaries.

\paragraph{Post-conformal reporting note.}
The delivered set $\mathcal{C}_\alpha(x)$ is determined by the specified score and the split conformal calibration quantile. Under the separated reporting protocol in Section~\ref{sec:empirical-protocol-evidence}, the profile map, propensity fit, stabilization rule, and H\'ajek averages are applied after these sets have been fixed. The score, order statistic, and membership rule in \cref{eq:conformal-set} remain unchanged, so the event $Y\in\mathcal{C}_\alpha(X)$ is identical before and after reporting standardization.

\section{A Product Form Set--Error Uncertainty Principle}
\label{app:uncertainty-principle}

We first give the full Fano argument \citep{correia2024information} underlying
the intrinsic ambiguity scale presented in Section~\ref{sec:curves}.
Throughout, all logarithms are natural. Write \(\mathcal H(Y\mid X)\) for the
conditional entropy of \(Y\) given \(X\), let
\(\mathcal H_{\mathrm{bin}}(u)=-u\log u-(1-u)\log(1-u)\), and denote
\([t]_+=\max(t,0)\).
\begin{restatable}[Fano set-size floor]{proposition}{listfanosizeprop}
\label{prop:list-fano}
For the split CP sets in
\cref{eq:conformal-set}. %
Let \(\mathcal D_{\rm cal}=\{(X_i,Y_i)\}_{i=1}^n\) and the fresh test point \((X,Y)\) be i.i.d. draws from the same distribution with the nonempty set size convention $M_\alpha^+(X)=M_\alpha(X)\vee1$.
If $\alpha<1/2$, then
\begin{equation}
\label{eq:entropy-floor}
    \mathbb{E}M_\alpha^+(X)\ge
    \exp\Bigl\{\bigl[
    \mathcal{H}(Y\mid X)
    -\mathcal{H}_{\mathrm{bin}}(\alpha)
    -\alpha\log(K-1)
    \bigr]_+\Bigr\}.
\end{equation}
\end{restatable}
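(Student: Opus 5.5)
The plan is to apply a list version of Fano's inequality to the random prediction set, treating the calibration sample as side information that is independent of the test pair. Write $D=\mathcal D_{\rm cal}$. Because $(X,Y)$ is independent of $D$, we have $\mathcal H(Y\mid X)=\mathcal H(Y\mid X,D)$, and $\mathcal C_\alpha(X)$ is a deterministic function of $(X,D)$. The entropy on the left is therefore the conditional entropy of $Y$ given everything that determines the set.

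\textbf{Step 1: remove empty sets.} Empty sets break the $\log(K-1)$ term, since on such an event $Y$ ranges over all $K$ labels. To avoid this, define the augmented set $\mathcal C^+(X)=\mathcal C_\alpha(X)$ when it is nonempty, and otherwise a fixed singleton label. This set is still a function of $(X,D)$ and has size exactly $M_\alpha^+(X)$. Its error indicator $E=\mathbf 1\{Y\notin\mathcal C^+(X)\}$ satisfies $E\le \mathbf 1\{Y\notin\mathcal C_\alpha(X)\}$. Hence, by the split CP guarantee (which holds marginally over $D$), $p:=\mathbb P(E=1)\le\alpha$.

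\textbf{Step 2: list Fano.} By the chain rule,
\(\mathcal H(Y\mid X,D)\le \mathcal H(E\mid X,D)+\mathcal H(Y\mid E,X,D)\).
\begin{itemize}
\item First term: by concavity of $\mathcal H_{\mathrm{bin}}$ and Jensen, it is at most $\mathcal H_{\mathrm{bin}}(p)$.
\item Second term, on $\{E=0\}$: $Y$ lies in a set of size $M^+_\alpha$, which contributes at most $\mathbb E[(1-E)\log M_\alpha^+]\le \mathbb E\log M_\alpha^+$, using $\log M^+_\alpha\ge0$.
\item Second term, on $\{E=1\}$: $Y$ lies in the complement, of size $K-M^+_\alpha\le K-1$, which contributes at most $p\log(K-1)$.
\end{itemize}
A final application of Jensen gives $\mathbb E\log M^+_\alpha\le\log\mathbb E M^+_\alpha$. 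Altogether,
\[
\mathcal H(Y\mid X)\le \mathcal H_{\mathrm{bin}}(p)+p\log(K-1)+\log\mathbb E M_\alpha^+(X).
\]

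\textbf{Step 3: monotonicity in the error level.} Let $\phi(p)=\mathcal H_{\mathrm{bin}}(p)+p\log(K-1)$. Its derivative is $\phi'(p)=\log\{(1-p)(K-1)/p\}$, which is nonnegative for $p\le (K-1)/K$. Since $K\ge2$, this interval contains $[0,1/2]$. Combined with $p\le\alpha<1/2$, this gives $\phi(p)\le\phi(\alpha)$, which is exactly where the hypothesis $\alpha<1/2$ enters.

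\textbf{Step 4: conclude.} Rearranging and exponentiating gives the bound without the positive part. The positive part $[\cdot]_+$ then follows from the trivial bound $\mathbb E M^+_\alpha\ge1$.

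The main obstacle is bookkeeping rather than depth. One must ensure the error probability entering Fano is the marginal one over $D$, so that the CP guarantee applies; this is handled by conditioning on $(X,D)$ and using concavity. One must also handle empty sets so that $\log(K-1)$, rather than $\log K$, appears; this is the purpose of the augmentation in Step 1.
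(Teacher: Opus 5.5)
Your proposal is correct and follows essentially the same route as the paper's proof: augment empty sets to a nonempty list, use independence of $\mathcal D_{\rm cal}$ to replace $\mathcal H(Y\mid X)$ by $\mathcal H(Y\mid X,\mathcal D_{\rm cal})$, and split via the error indicator $E$ into $\mathcal H_{\mathrm{bin}}(\mathbb P(E=1))+\mathbb E[(1-E)\log M_\alpha^+(X)]+\mathbb P(E=1)\log(K-1)$. You then use $\mathbb P(E=1)\le\alpha<1/2$ for monotonicity, and finish with Jensen and $M_\alpha^+\ge 1$ for the positive part, exactly as the paper does; the only cosmetic difference is that you treat $\mathcal H_{\mathrm{bin}}(p)+p\log(K-1)$ jointly (which incidentally shows the bound persists for $\alpha\le (K-1)/K$), whereas the paper uses monotonicity of each term on $[0,1/2]$ separately.
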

\begin{proof}
Let $\mathcal C_\alpha^+(X)$ be the nonempty version of the set and let
$E=\mathbf 1\{Y\notin\mathcal C_\alpha^+(X)\}$.
Because the nonempty replacement only adds a label when the original
set is empty, split-conformal validity gives
$\mathbb P(E=1)\le\alpha$. Since \(D_{\mathrm{cal}}\perp(X,Y)\) and \(E\) is determined by
\((X,Y,D_{\mathrm{cal}})\),
\begin{equation}
\begin{aligned}
\mathcal H(Y\mid X)
&=\mathcal H(Y\mid X,D_{\mathrm cal})\\
&=\mathcal H(E\mid X,D_{\mathrm cal})
  +\mathcal H(Y\mid X,D_{\mathrm cal},E)\\
&\le
\mathcal H_{\rm bin}\!\bigl(\mathbb P(E=1)\bigr)
+\mathbb E\!\left[(1-E)\log M_\alpha^+(X)\right]
+\mathbb P(E=1)\log(K-1)\\
&\le
\mathcal H_{\rm bin}\!\bigl(\mathbb P(E=1)\bigr)
+\mathbb E\log M_\alpha^+(X)
+\mathbb P(E=1)\log(K-1).
\end{aligned}
\end{equation}
Since $\mathbb P(E=1)\le\alpha<1/2$,
\begin{equation}
\mathcal H(Y\mid X)
\le
\mathcal H_{\mathrm{bin}}(\alpha)
+\mathbb E\log M_\alpha^+(X)
+\alpha\log(K-1).
\end{equation}
Because $M_\alpha^+(X)\ge1$, the positive part may be taken.
Jensen's inequality then gives
$\mathbb E M_\alpha^+(X)
\ge\exp\{\mathbb E\log M_\alpha^+(X)\}$,
which yields \cref{eq:entropy-floor}.
\end{proof}

\noindent\textbf{Take-away.} 
CP sets cannot be arbitrarily small. When the label \(Y\) remains hard to predict from the input \(X\), the procedure must use larger sets on average.

\paragraph{Price of increasing coverage.}
Write
\[
C:=\exp\{\mathcal H(Y\mid X)\}
\]
and define
\begin{equation}
L_K (\alpha)
:=
C\exp\!\left\{
\alpha\log(\alpha)
+(1-\alpha)\log(1-\alpha)
-\alpha\log(K-1)
\right\}.
\label{eq:LK-floor}
\end{equation}
For $\alpha<1/2$, Proposition~\ref{prop:list-fano} gives
$\mathbb E[M_\alpha^+(X)]\ge L_K (\alpha)$ whenever this is the
nontrivial branch of the floor. Substituting nominal coverage
$\gamma=1-\alpha\in(1/2,1)$, define
\begin{equation}
G_K (\gamma):=L_K (1-\gamma).
\label{eq:GK-coverage}
\end{equation}
Thus $G_K (\gamma)$ is the information-theoretic lower bound at
nominal coverage $\gamma$. For $K\ge2$ and $1/2<\gamma<1$,
\begin{equation}
G_K '(\gamma)
=
G_K (\gamma)
\left\{
\log\!\left(\frac{\gamma}{1-\gamma}\right)
+\log(K-1)
\right\}
>0,
\label{eq:GK-first}
\end{equation}
and
\begin{equation}
G_K ''(\gamma)
=
G_K (\gamma)
\left[
\left\{
\log\!\left(\frac{(K-1)\gamma}{1-\gamma}\right)
\right\}^{2}
+
\frac{1}{\gamma(1-\gamma)}
\right]
>0.
\label{eq:GK-second}
\end{equation}
Therefore, $G_K (\gamma)$ is strictly increasing and strictly convex in
nominal coverage. Increasing nominal coverage raises the lower bound
set size burden, and this burden rises progressively faster as coverage
approaches one.

The first derivative measures the instantaneous increase in the lower bound
per unit increase in nominal coverage. Define the proportional price
\begin{equation}
r_K (\gamma)
:=
\frac{G_K '(\gamma)}{G_K (\gamma)}
=
\log\!\left(\frac{(K-1)\gamma}{1-\gamma}\right).
\label{eq:rK-price}
\end{equation}
For a small coverage increase $d\gamma>0$,
\begin{equation}
\frac{G_K (\gamma+d\gamma)-G_K (\gamma)}
{G_K (\gamma)}
\approx
r_K (\gamma)d\gamma.
\label{eq:rK-local}
\end{equation}
Hence a one percentage point coverage increase, $d\gamma=0.01$, corresponds
approximately to a $r_K (\gamma)\%$ increase in the lower-bound burden.
Moreover,
\begin{equation}
r_K '(\gamma)
=
\frac{1}{\gamma(1-\gamma)}
>0,
\qquad
\lim_{\gamma\to1}r_K (\gamma)=\infty.
\label{eq:rK-divergence}
\end{equation}
Thus the proportional lower-bound price becomes arbitrarily large as
nominal coverage approaches one. For example, the same one-percentage-point
increase in coverage is cheaper (in terms of the set-size lower bound) when moving from $80\%$ to
$81\%$ than when moving from $95\%$ to $96\%$.

\paragraph{Finite coverage increments.}
Let $0<\delta<\alpha<1/2$. Define the relative lower-bound inflation
associated with increasing nominal coverage by $\delta$ as
\begin{equation}
R_{K, \delta}(\alpha)
=
\frac{L_K (\alpha-\delta)-L_K (\alpha)}
{L_K (\alpha)}.
\label{eq:RK-finite}
\end{equation}
For fixed $K$ and $\delta$, $R_{K, \delta}(\alpha)$ is strictly decreasing
in $\alpha$. Suppose a practitioner wants to increase nominal coverage by
$\delta$, equivalently decreasing the error rate from $\alpha$ to
$\alpha-\delta$, and is willing to tolerate at most a proportional
lower-bound inflation $r>0$. The change is within tolerance exactly when
\begin{equation}
\begin{aligned}
R_{K, \delta}(\alpha)\le r
&\iff
\frac{L_K (\alpha-\delta)}{L_K (\alpha)}
\le r+1\\
&\iff
\log L_K (\alpha-\delta)-\log L_K (\alpha)
\le \log(r+1).
\end{aligned}
\label{eq:RK-tolerance}
\end{equation}
Because $R_{K, \delta}(\alpha)$ is strictly decreasing in $\alpha$, there is
at most one interior cutoff $ \alpha^*$ satisfying
\begin{equation}
\log L_K ( \alpha^*-\delta)-\log L_K ( \alpha^*)=\log(r+1).
\label{eq:RK-cutoff}
\end{equation}
Whenever such a cutoff exists,
\begin{equation}
\alpha\ge \alpha^*
\ \Longrightarrow\
R_{K, \delta}(\alpha)\le r,
\qquad
\alpha< \alpha^*
\ \Longrightarrow\
R_{K, \delta}(\alpha)>r.
\label{eq:RK-regimes}
\end{equation}
Now, the question is when does the cutoff exist? For a fixed $\delta$, the possible current error rates satisfy $\delta < \alpha < 1/2$. Since $R_{K, \delta}(\alpha)$ is strictly decreasing in $\alpha$, its smallest possible value is 
\begin{equation*}
    R_{\min} = \lim_{\alpha \to 1/2} R_{K, \delta}(\alpha) = \frac{L_K(1/2 - \delta)}{L_K(1/2)} - 1,
\end{equation*}
and its largest possible value is
\begin{equation*}
    R_{\max} = \lim_{\alpha \to \delta} R_{K, \delta}(\alpha) = \frac{L_K(0)}{L_K(\delta)} - 1 = \exp\{\mathcal{H}_{\text{bin}}(\delta) + \delta \log(K-1)\} - 1.
\end{equation*}
Therefore, 
\begin{itemize}
    \item If $r \le R_{\min}$, the desired coverage gain is never affordable within $0 < \alpha < 1/2$.
    \item If $r \ge R_{\max}$, it is affordable for every admissible starting $\alpha$.
    \item If $R_{\min} < r < R_{\max}$, there is a unique cutoff $\alpha^*$. This cutoff can be calculated by one-dimensional root finding for \cref{eq:RK-cutoff}.
\end{itemize}

The above computation illustrates an important operating rule. It provides a useful guide on coverage that one hopes to attain and inflation in set size that one expects to tolerate. For a proposed coverage increase $\delta$ and a tolerated proportional inflation $r$ in the set-size lower bound, one can solve \cref{eq:RK-cutoff} once and compare the current error level $\alpha$ with $\alpha^*$. If $\alpha\ge\alpha^*$, the proposed coverage increase remains within the tolerance; if $\alpha<\alpha^*$, the same coverage increase violates that tolerance, indicating that one must either request a smaller coverage increment or accept a larger set-size inflation.

The expected set size curve records the arithmetic burden delivered to a user. Multiplying the floor in \cref{eq:entropy-floor} by the corresponding error channel factor gives %
\begin{equation}
\underbrace{
\mathbb E M_\alpha^+(X)
}_{\substack{\text{expected set-size}\\\text{burden}}}
\times
\underbrace{
\exp\!\left\{\mathcal H_{\mathrm{bin}}(\alpha)\right\}
(K-1)^\alpha
}_{\substack{\text{effective error-channel}\\\text{uncertainty}}}
\geq
\underbrace{
\exp\!\left\{\mathcal H(Y\mid X)\right\}
}_{\text{intrinsic label perplexity}}.
\label{eq:product-set-error-uncertainty}
\end{equation}
The two factors represent uncertainty retained for downstream selection and uncertainty delegated to tolerated misses. Thus they cannot be simultaneously compressed below intrinsic label perplexity. In this product form sense, \Cref{eq:product-set-error-uncertainty} has a trade-off interpretation. That is, reducing the burden carried by one channel requires the other to represent the remainder.

\begin{restatable}[Information efficiency anchor]{proposition}
{informationanchorprop}
\label{prop:uncertainty-anchor}
For the split CP family in \cref{eq:conformal-quantile,eq:conformal-set}, suppose that \(\alpha<1/2\) for every \(\alpha \in \boldsymbol{\alpha}\), and define
\begin{equation}
\alpha^\star\in\arg\min_{\alpha\in\boldsymbol{\alpha}}
\left\{
\mathbb E M_\alpha^+(X)
\exp\!\left\{\mathcal H_{\mathrm{bin}}(\alpha)\right\}
(K-1)^\alpha
\right\}.
\label{eq:uncertainty-anchor}
\end{equation}
Then \(\alpha^\star\) is the valid grid level with the smallest multiplicative excess above intrinsic label perplexity, and hence comes closest to saturating \Cref{eq:product-set-error-uncertainty}.
\end{restatable}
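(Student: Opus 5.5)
The plan is to reduce the claim to the product-form inequality \cref{eq:product-set-error-uncertainty}, which follows from Proposition~\ref{prop:list-fano}. For each grid level \(\alpha\in\boldsymbol{\alpha}\), let
\[
\Phi(\alpha):=\mathbb E M_\alpha^+(X)\exp\{\mathcal H_{\mathrm{bin}}(\alpha)\}(K-1)^\alpha ,
\]
which is the objective in \cref{eq:uncertainty-anchor}. Since every grid level satisfies \(\alpha<1/2\), Proposition~\ref{prop:list-fano} applies at each \(\alpha\).

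Drop the positive part in \cref{eq:entropy-floor}, which only weakens the floor, and multiply both sides by the positive factor \(\exp\{\mathcal H_{\mathrm{bin}}(\alpha)+\alpha\log(K-1)\}\). This gives \(\Phi(\alpha)\ge C:=\exp\{\mathcal H(Y\mid X)\}\) for every \(\alpha\in\boldsymbol{\alpha}\).

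Define the multiplicative excess above intrinsic label perplexity as \(\rho(\alpha):=\Phi(\alpha)/C\ge 1\). We need \(C>0\) to divide; this holds because \(C=\exp\{\cdot\}\ge 1\), as conditional entropy is nonnegative for a finite label space. Dividing by the constant \(C\), which does not depend on \(\alpha\), preserves the ordering, so \(\arg\min_\alpha \Phi(\alpha)=\arg\min_\alpha \rho(\alpha)\). Every grid level yields valid split CP sets by the standard rank argument. Hence \(\alpha^\star\) is a valid level of smallest excess \(\rho\), and \(\rho(\alpha^\star)\ge 1\) with equality exactly when \cref{eq:product-set-error-uncertainty} is saturated. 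So no other grid level comes closer to equality, in ratio or equivalently in log-gap \(\log\Phi(\alpha)-\mathcal H(Y\mid X)\).

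A minimizer exists because \(\boldsymbol{\alpha}\) is finite. This should be stated explicitly, and in case of ties any element of the argmin works, matching the ``\(\in\)'' in the statement.

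The main obstacle is interpretive rather than technical: making precise what ``closest to saturating'' means. I would fix it as the multiplicative (equivalently logarithmic) gap, since that is the natural scale of the product bound; an additive gap \(\Phi(\alpha)-C\) yields the same argmin anyway because \(C\) is constant. I would also remark that the expectation in \(\Phi\) is over both the calibration data and the test point, consistent with Proposition~\ref{prop:list-fano}.
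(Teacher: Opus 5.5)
Your proposal is correct and follows the same route as the paper. The paper's proof is just that the floor $\exp\{\mathcal H(Y\mid X)\}$ in \cref{eq:product-set-error-uncertainty} does not depend on $\alpha$, so minimizing the product over the grid is the same as minimizing its ratio to that floor. Your additional details make explicit what the paper's two-line proof leaves implicit: deriving the product bound from Proposition~\ref{prop:list-fano} by dropping the positive part, checking $C\ge 1>0$, and noting that a minimizer exists because $\boldsymbol{\alpha}$ is finite.
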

\begin{proof}
The entropy floor in \Cref{eq:product-set-error-uncertainty} is constant across the finite grid \(\boldsymbol{\alpha}\). Hence minimizing the product in \Cref{eq:uncertainty-anchor} also minimizes its multiplicative excess above that floor.
\end{proof}

\noindent\textbf{Take-away.}
Selecting coverage levels trades smaller prediction sets against more tolerated errors. The proposition identifies the level that comes closest to the best possible balance between the two.

\paragraph{Operational implications.}
In a shortlist-and-fallback system, the set channel retains candidates for human or algorithmic selection, whereas tolerated misses are delegated to escalation, additional measurement, or exception handling. Therefore, the anchor \(\alpha^\star\) provides a reference for the least redundant allocation between these uncertainty channels.

The principle also identifies when set valued output is necessary. If the error channel is smaller than \(\exp\{\mathcal{H}(Y\mid X)\}\), then \Cref{eq:product-set-error-uncertainty} forces \(\mathbb P\{M_\alpha^+(X)>1\}>0\). That is, a singleton output cannot represent the intrinsic uncertainty. In addition, the minimized product can also compare procedures, with a smaller excess indicating less redundant uncertainty representation. These uses do not make \(\alpha^\star\) a universal deployment optimum, because cost of misses, review, and escalation remain application dependent. For a broader information theoretic perspective on conformal prediction, see \citet{correia2024information}.

\section{Experiment Protocols and Evaluation Details}\label{app:experiments}\label{app:protocols}

This appendix collects experimental details and evaluation summaries for the empirical studies in Sections~\ref{sec:empirical-protocol-evidence} and~\ref{sec:no-free-repair-intervention}. The experiments use Bias in Bios \citep{dearteaga2019bias}, Folktables/ACS \citep{ding2021retiring}, and ATUS \citep{bls2024atus}. In the standardization table, an arrow denotes raw to standardized re-averaging on the same test prediction sets. The score, conformal threshold, and prediction set membership rule are fixed before reporting; reporting weights enter only the curve averages on the test split. Our code is available at \url{https://github.com/GreenPenguin001/tilted-mondrian}.

For a supported grid point $(z,\alpha)$ on the test split, a reversal is recorded when the aggregate and stratum gaps have opposite signs. Its raw conflict strength is $\Gamma'_z(\alpha)=[-\Delta'(\alpha)\Delta'_z(\alpha)]_+$, and the standardized version uses the gaps analogously. Total conflict strength sums this quantity over the supported test grid. High conflict strength is a supported reversal with $\Gamma'_z(\alpha)>0.020$. This threshold is descriptive only; the continuous conflict strength remains the main magnitude summary.  Effective sample size is the Kish ESS of all test reporting weights divided by the total test sample size.  Balance is reported in the coordinates of the fixed profile $H$ and, when relevant, in the reporting stratum mixture.

Finite sample standardization estimates use stabilized H\'ajek averages. On $\mathcal{D}_{\mathrm{test}}$, let $e'_i=e'(H_i)$ be the group propensity fitted on $\mathcal{D}_{\mathrm{prop}}$ and then treated as fixed. The standardized factors are $r'_{a,i}=1-e'_i$ for group $a$ and $r'_{b,i}=e'_i$ for group $b$. After the prespecified bounded-weight stabilization, the within group weights are
\begin{equation}
    w'_{g,i}
    =
    \frac{r'_{g,i}}
         {\sum_{j\in\mathcal{D}_{\mathrm{test}}}\mathbf{1}(G_j=g)r'_{g,j}}.
\end{equation}
The aggregate standardized curve is estimated by
\begin{equation}
    \tilde V'_g(\alpha)=\sum_{i\in\mathcal{D}_{\mathrm{test}}}\mathbf{1}(G_i=g)w'_{g,i}M_\alpha(X_i).
    \label{eq:hajek-curve}
\end{equation}
Cellwise curves use the same standardized factors with the H\'ajek ratio restricted to $Z_i=z$.

\begin{algorithm}[!ht]
\caption{Propensity Standardization for Disparity Assessment (detailed version of Algorithm~\ref{alg:main-curve-reporting})}
\label{alg:raw-overlap-curve-reporting}

\noindent\textbf{Input:}
Disjoint splits
$\mathcal D_{\mathrm{train}},\mathcal D_{\mathrm{prop}},
\mathcal D_{\mathrm{cal}},\mathcal D_{\mathrm{test}}$;
group $G$, reporting stratum $Z$, profile rule $H$, score policy,
thresholding policy, operating grid $\boldsymbol{\alpha}$, and supported-cell rule.

\vspace{0.4ex}
\begin{algorithmic}[1]
\STATE Fit classifier $f$ on $\mathcal D_{\mathrm{train}}$; freeze $f$, the score policy, and the induced class probabilities.
\STATE Construct $H$ from the frozen outputs. On $\mathcal D_{\mathrm{prop}}$, fit $e'(h)=\widehat{\mathbb P}(G=a\mid H=h)$; freeze the propensity, bounded-weight stabilization, and supported-cell rules.
\STATE On $\mathcal D_{\mathrm{cal}}$, obtain the threshold(s) prescribed by the fixed thresholding policy for every $\alpha\in\boldsymbol{\alpha}$.
\STATE On $\mathcal D_{\mathrm{test}}$, compute $H_i$, $e'_i=e'(H_i)$, and standardized factors $r'_{a,i}=1-e'_i$, $r'_{b,i}=e'_i$; apply the frozen stabilization and H\'ajek-normalize within group to obtain $w'_{g,i}$.
\FOR{$\alpha\in\boldsymbol{\alpha}$}
    \STATE Construct each $\mathcal C_{\alpha,i}$ once under the fixed
thresholding policy; set
$M_{\alpha,i}=|\mathcal C_{\alpha,i}|$.
    \STATE Compute raw group and cell curves $V'_g(\alpha),V'_{g,z}(\alpha)$ by unweighted means on the frozen supported grid.
    \STATE Compute $\tilde V'_g(\alpha)$ by \cref{eq:hajek-curve} and $\tilde V'_{g,z}(\alpha)$ by the same H\'ajek ratio restricted to $Z_i=z$.
    \STATE Form raw and standardized aggregate/stratum gaps; record sign reversals and product strengths by \cref{eq:simpson-product-strength}.
\ENDFOR
\STATE Record ESS fractions and profile/cell-mixture balance for the frozen reporting weights.
\end{algorithmic}

\vspace{0.4ex}
\noindent\textbf{Output:}
Raw and standardized group/cell curves; reversal counts; conflict strengths; effective sample size; balance checks.

\vspace{0.3ex}
\noindent\textbf{Invariant:}
Both reports use the same test values $M_\alpha(X_i)$ and the same supported $(z,\alpha)$ grid. Reweighting changes neither $f$, the score, the calibrated threshold(s), nor $\mathcal C_\alpha(X_i)$.
\end{algorithm}

\begin{table*}[t]
\caption{Exact split sizes for the empirical reports. The split columns are training, calibration, propensity/tilt, and testing.}
\label{tab:appendix-split-sizes}
\centering
\small
\setlength{\tabcolsep}{4.8pt}
\renewcommand{\arraystretch}{1.08}
\begin{tabular*}{0.95\textwidth}{@{\extracolsep{\fill}}llrrrrr@{}}
\toprule
Setting & Year(s) & Train & Cal. & Prop./Tilt & Test & Total \\
\midrule
Bias in Bios & all & 127,023 & 79,390 & 79,390 & 31,756 & 317,559 \\
Folktables occupation & 2014 & 163,527 & 102,204 & 102,205 & 40,882 & 408,818 \\
Folktables occupation & 2015 & 162,130 & 101,331 & 101,332 & 40,533 & 405,326 \\
Folktables occupation & 2016--2018 each & 164,195 & 102,622 & 102,622 & 41,049 & 410,488 \\
Folktables income quintile & 2014--2018 each & 164,195 & 102,622 & 102,622 & 41,049 & 410,488 \\
ATUS & all & 71,356 & 44,683 & 44,655 & 17,783 & 178,477 \\
\bottomrule
\end{tabular*}
\end{table*}

\paragraph{Conformal score construction.}
Let $\hat p_y(x)$ denote the frozen fitted probability for candidate label
$y$, let
$\hat p_{(1)}(x)\ge\cdots\ge\hat p_{(K)}(x)$
be the sorted class probabilities, and let $r_y(x)$ be the rank of $y$.
For randomized adaptive scores, let $U\sim{\rm Unif}(0,1)$ denote the
auxiliary randomization, treated as part of the frozen score rule. All
scores follow the convention that smaller values are more conforming:
\begin{align}
s_{\rm SIMPLE}(x,y)
&=1-\hat p_y(x),\\
s_{\rm APS}(x,y;U)
&=\sum_{j<r_y(x)}\hat p_{(j)}(x)+U\hat p_y(x),\\
s_{\rm RAPS}(x,y;U)
&=s_{\rm APS}(x,y;U)
+\lambda\{r_y(x)-k_{\rm reg}\}_+,\\
s_{\rm SAPS}(x,y;U)
&=
\begin{cases}
U\hat p_{(1)}(x), & r_y(x)=1,\\
\hat p_{(1)}(x)+\lambda\{r_y(x)-2+U\}, & r_y(x)>1.
\end{cases}
\end{align}
The experiment-specific score parameters and thresholding policies are
listed in Table~\ref{tab:appendix-score-grids}. In every experiment, the
score rule is fixed before split conformal calibration, and reporting
weights enter neither the score nor the calibration quantile.

\begin{table*}[t]
\caption{Score policies, operating grids, and support rules. All raw-to-standardized arrows use the same test prediction sets.}
\label{tab:appendix-score-grids}
\centering
\footnotesize
\setlength{\tabcolsep}{3.5pt}
\renewcommand{\arraystretch}{1.12}
\begin{tabularx}{\textwidth}{@{}>{\raggedright\arraybackslash}p{0.18\textwidth}>{\raggedright\arraybackslash}p{0.19\textwidth}>{\raggedright\arraybackslash}p{0.22\textwidth}>{\raggedright\arraybackslash}p{0.17\textwidth}>{\raggedright\arraybackslash}X@{}}
\toprule
Setting & Threshold policy & Scores & $\alpha$ grid & Support rule \\
\midrule
Bias in Bios & Gender-conditional thresholds & SIMPLE; APS; SAPS($\lambda=0.2$); RAPS($\lambda=0.2$, $k_{\mathrm{reg}}=5$) & 0.020 to 0.100; step 0.005 & At least 200 test observations per Gender group within each occupation stratum; the same supported grid is used for raw and standardized summaries. \\
Folktables occupation & Pooled threshold within year & SIMPLE & 0.010 to 0.100; step 0.001 & At least 200 test observations per group within each education stratum. \\
Folktables income quintile & Pooled threshold within year & SIMPLE & 0.010 to 0.100; step 0.001 & Same test education stratum support rule as Folktables occupation. \\
ATUS & Pooled threshold & SIMPLE primary; APS, RAPS($\lambda=0.01$, $k_{\mathrm{reg}}=5$), and SAPS($\lambda=0.2$) checks & 0.020 to 0.250; step 0.010 & Prespecified employment stratum support rule. \\
\bottomrule
\end{tabularx}
\end{table*}

\paragraph{Prediction profile construction.}
For Bias in Bios and the two Folktables tasks, $H$ is built from the frozen class-probability vector. For a $K$-class task, the profile contains one-hot indicators for the top first, second, and third predicted labels, the three largest probabilities $p_{(1)},p_{(2)},p_{(3)}$, normalized entropy, the two margins $p_{(1)}-p_{(2)}$ and $p_{(2)}-p_{(3)}$, and the top-three mass.  Its dimension is therefore $3K+7$, giving 37 features for $K=10$ and 22 features for $K=5$. ATUS uses four probability features because the final closed-world task has the top-10 activity vocabulary but the implemented profile rule omits top-label indicators. Across all experiments, $H$ excludes the true group, true label, reporting stratum, realized set size, conformal thresholds, alpha-specific membership indicators, and final curve summaries.

\begin{table*}[t]
\caption{Prediction profiles and propensity models for the standardized report. The reporting weight model enters only the curve summaries on the test split.}
\label{tab:appendix-profile-overlap}
\centering
\footnotesize
\setlength{\tabcolsep}{3.0pt}
\renewcommand{\arraystretch}{1.14}
\begin{tabularx}{0.94\textwidth}{@{}>{\raggedright\arraybackslash}p{0.15\textwidth}>{\raggedright\arraybackslash}p{0.24\textwidth}>{\raggedright\arraybackslash}p{0.29\textwidth}>{\raggedright\arraybackslash}X@{}}
\toprule
Setting & Predictor and score & Profile $H$ & Propensity model \\
\midrule
Bias in Bios & DistilBERT top-10 occupation classifier; SIMPLE, APS, SAPS, and RAPS. & 37-dimensional frozen probability profile. & Feed-forward neural-network propensity model with hidden layers $(128,64)$ and ReLU activations; fitted on the propensity split. \\
Folktables occupation & Histogram gradient boosting on ACS person records, one model per year; SIMPLE. & 37-dimensional frozen softmax profile; Gender and education are excluded from $H$. & Logistic propensity model with balanced class weights; fitted separately by year on the propensity split. \\
Folktables income quintile & Histogram gradient boosting for income quintiles, one model per year; SIMPLE. & 22-dimensional frozen softmax profile; race and education are excluded from $H$. & Logistic propensity model with balanced class weights; fitted separately by year on the propensity split. \\
ATUS & Regularized histogram gradient boosting on diary, time, calendar, demographic, region, and household features; SIMPLE with APS, RAPS, and SAPS checks. & Four probability features: $p_{\mathrm{top1}}$, $p_{\mathrm{top2}}$, margin, and entropy. & Logistic propensity model fitted on the propensity split. \\
\bottomrule
\end{tabularx}
\vspace{0.5mm}
\begin{minipage}{0.94\textwidth}
\small
\emph{Notes.} In the logistic rows, standardized means that the coordinates of $H$ are centered and scaled before logistic fitting. All standardized reports use prespecified bounded weights and H\'ajek normalization. Exact training hyperparameters for all base and propensity models, including learning rates, regularization, batch sizes, epochs or iterations, and random seeds, are provided in the supplementary code.
\end{minipage}
\end{table*}

\paragraph{Empirical alignment loss decomposition.}
Table~\ref{tab:alignment-decomposition} reports the realized component changes. Squared residuals decrease by $59.7$--$99.3\%$ in every setting, and stratum-gap variance decreases by $55.0$--$92.6\%$ for Bias in Bios and Folktables. ATUS has increased stratum-gap variance, while its composition reduction yields a $36.1$--$57.0\%$ net alignment-loss decrease.

\begin{table}[H]
\caption{Relative reductions (\%) in grid-averaged alignment loss and its two components. Ranges are across scores or years; negative values denote increases.}
\label{tab:alignment-decomposition}
\centering
\small
\setlength{\tabcolsep}{8pt}
\renewcommand{\arraystretch}{1.08}
\begin{tabular}{lccc}
\toprule
Dataset & Alignment loss & Squared composition residual \(R^2\) & Stratum-gap variance
\tabularnewline
\midrule
Bias in Bios
& $66.5$--$72.6$
& $98.7$--$99.3$
& $60.7$--$69.9$
\tabularnewline

Folktables occupation
& $85.1$--$92.8$
& $75.0$--$94.2$
& $85.9$--$92.6$
\tabularnewline

Folktables income
& $60.0$--$72.5$
& $88.8$--$92.2$
& $55.0$--$69.0$
\tabularnewline

ATUS
& $36.1$--$57.0$
& $59.7$--$60.9$
& $-88.8$--$-5.9$
\tabularnewline
\bottomrule
\end{tabular}
\end{table}

\subsection{Bias in Bios text classification}\label{app:exp-biobias}

The text experiment uses the Hugging Face dataset
\texttt{LabHC/bias\_in\_bios} to predict one of the top 10 occupations:
attorney, dentist, journalist, nurse, photographer, physician, professor, psychologist,
surgeon, and teacher. The comparison groups are Male and Female, and occupation is the reporting stratum for stratum curve summaries.  The true occupation label is the prediction target and is used after prediction to define reporting cells. It is not included in the reporting profile used for standardization, and no realized set size, calibration threshold, or reversal indicator enters the profile.

The DistilBERT classifier is initialized from
\texttt{distilbert-base-uncased} \citep{sanh2019distilbert}.  Bias in Bios uses groupwise conformal thresholds for the delivered sets and reports SIMPLE, APS \citep{romano2020classification}, SAPS \citep{huang2024conformal}, and RAPS \citep{angelopoulos2021uncertainty} on the grid in Table~\ref{tab:appendix-score-grids}. The profile used for standardization consists of fixed probability-profile summaries derived from the classifier output.  The same reporting profile and fit in Table~\ref{tab:appendix-profile-overlap} are used across score variants, so the score specific entries in Table~\ref{tab:score-specific-evidence} compare the same standardization design under different set construction rules.

\begin{table}[t]
\caption{Bias in Bios balance evaluations for the standardized report on the test split.}
\label{tab:appendix-bias-in-bios-diagnostics}
\centering
\footnotesize
\setlength{\tabcolsep}{4pt}
\renewcommand{\arraystretch}{1.12}
\begin{tabularx}{\columnwidth}{@{}>{\raggedright\arraybackslash}X c@{}}
\toprule
Measure & Value \\
\midrule
Effective sample size fraction & $0.833$ \\
Profile mean standardized mean difference & $0.0924\to0.0133$ \\
Profile maximum standardized mean difference & $0.4331\to0.0458$ \\
\bottomrule
\end{tabularx}
\end{table}

All four Bias in Bios score variants follow the same test inclusion rule. We retain an occupation only when both Gender groups contain at least 200 test observations. For each retained occupation, we evaluate every \(\alpha\) value on the prespecified grid, using exactly the same occupation--\(\alpha\) pairs for the raw and standardized summaries. A reversal is counted when \(\Delta(\alpha)\Delta_z(\alpha)<0\), \(|\Delta(\alpha)|\ge 0.01\), and \(|\Delta_z(\alpha)|\ge 0.02\). The conflict strength remains \([-\Delta(\alpha)\Delta_z(\alpha)]_+\), summed over the same supported occupation--\(\alpha\) pairs.

\subsection{Folktables occupation prediction}\label{app:exp-folktables-occupation}

The Folktables occupation task uses ACS person records from 2014 through 2018.  Each year is treated as a separate problem.  The prediction target is top 10 occupation, the comparison group is Gender, and the reporting stratum is education bucket. The classifier is a histogram gradient boosting model and the score is SIMPLE. Education can be a covariate for the classifier, but it is excluded from the profile $H$ used for reporting standardization. This leaves education stratum balance as an evaluation of the standardized report.

The exact yearly split sizes are listed in Table~\ref{tab:appendix-split-sizes}.  The test grid in Table~\ref{tab:appendix-score-grids} uses the resolved support rule of at least 200 test observations in each comparison group within the reporting stratum. The same support rule is applied to raw and standardized summaries before comparing reversal counts or conflict strengths.

\begin{table}[t]
\caption{Folktables occupation evaluation summary across 2014--2018.}
\label{tab:appendix-folktables-occupation-diagnostics}
\centering
\footnotesize
\setlength{\tabcolsep}{4pt}
\renewcommand{\arraystretch}{1.12}
\begin{tabularx}{\columnwidth}{@{}>{\raggedright\arraybackslash}Xcc@{}}
\toprule
Measure & Raw & Standardized \\
\midrule
Reversals, summed & $930$ & $614$ \\
High conflict strength reversals & $400$ & $1$ \\
Conflict strength & $39.664$ & $1.609$ \\
Years with reversal count drop & \multicolumn{2}{c}{$5/5$} \\
Years with strength drop & \multicolumn{2}{c}{$5/5$} \\
Median cell total variation & $0.124$ & $0.045$ \\
Median maximum standardized mean difference & $0.783$ & $0.355$ \\
Effective sample size fraction, range & \multicolumn{2}{c}{$0.611$--$0.702$} \\
Effective sample size fraction, median & \multicolumn{2}{c}{$0.629$} \\
\bottomrule
\end{tabularx}
\end{table}

These results support the tabular replication claim: all five years show lower reversal counts and lower total conflict strength after standardization, while high conflict reversals nearly vanish.

\subsection{Folktables income quintile stress test}\label{app:exp-folktables-income}

The income quintile stress test also uses ACS person records from 2014 through 2018, but changes both the target and comparison group. The target is income quintile, the comparison groups are White and Black, and the reporting strata are education buckets. This design separates supported reversal counts from the magnitude of aggregate and stratum disagreement.

The split roles, score, support rule, and raw to standardized comparison are the same as in Appendix~\ref{app:exp-folktables-occupation}; exact yearly split sizes and grids are listed in Tables~\ref{tab:appendix-split-sizes} and~\ref{tab:appendix-score-grids}. The profile $H$ uses fixed softmax summaries and excludes race and education. The restricted classifier feature policy removes direct income proxy features before prediction and propensity reporting, keeping the standardized comparison tied to prediction difficulty summaries.

\begin{table}[t]
\caption{Folktables income quintile evaluation summary across 2014--2018.}
\label{tab:appendix-folktables-income-diagnostics}
\centering
\footnotesize
\setlength{\tabcolsep}{4pt}
\renewcommand{\arraystretch}{1.12}
\begin{tabularx}{\columnwidth}{@{}>{\raggedright\arraybackslash}Xcc@{}}
\toprule
Measure & Raw & Standardized \\
\midrule
Reversals, summed & $922$ & $717$ \\
High conflict strength reversals & $85$ & $0$ \\
Conflict strength & $11.890$ & $0.509$ \\
Years with reversal count drop & \multicolumn{2}{c}{$4/5$} \\
Years with strength drop & \multicolumn{2}{c}{$5/5$} \\
Median cell total variation & $0.120$ & $0.034$ \\
Median maximum standardized mean difference & $0.290$ & $0.037$ \\
Effective sample size fraction, range & \multicolumn{2}{c}{$0.949$--$0.954$} \\
Effective sample size fraction, median & \multicolumn{2}{c}{$0.950$} \\
\bottomrule
\end{tabularx}
\end{table}

The 2017 row explains why the paper reports more than reversal counts alone.  Supported reversals increase from $182$ to $202$, while high conflict strength reversals change from $27$ to $0$ and conflict strength changes from $2.675$ to $0.144$.  The standardized curve therefore reduces the magnitude of aggregate and stratum disagreement even though one discrete count rises.

\subsection{ATUS activity prediction}\label{app:exp-atus}

The ATUS experiment uses daily time use diary episodes. A closed world top 10 non sleep activity vocabulary is learned from training respondents, and episodes outside that vocabulary are removed in every split. The closed world vocabulary leaves 178,477 analyzed episodes, with 17,783 test observations for final curve evaluation, as shown in Table~\ref{tab:appendix-split-sizes}.  The comparison group is Gender and the reporting stratum is employment status.

The classifier is a regularized histogram gradient boosting classifier trained on time, calendar, demographic, region, and household features available before reporting. The primary score is SIMPLE, with APS \citep{romano2020classification}, RAPS \citep{angelopoulos2021uncertainty}, and SAPS \citep{huang2024conformal} as robustness checks on the grid in Table~\ref{tab:appendix-score-grids}. The standardization profile uses the four fixed probability features in Table~\ref{tab:appendix-profile-overlap}. The summaries report profile balance, employment stratum mixture balance, and effective sample size for the standardized comparison.

\begin{table}[t]
\caption{ATUS evaluation details for the primary SIMPLE score on the test split.}
\label{tab:appendix-atus-diagnostics}
\centering
\footnotesize
\setlength{\tabcolsep}{4pt}
\renewcommand{\arraystretch}{1.12}
\begin{tabularx}{\columnwidth}{@{}>{\raggedright\arraybackslash}Xcc@{}}
\toprule
Measure & Raw & Standardized \\
\midrule
SIMPLE reversals & $36$ & $10$ \\
SIMPLE conflict strength & $0.463730$ & $0.002697$ \\
Maximum profile standardized mean difference & $0.0817$ & $0.0101$ \\
Employment cell total variation & $0.142$ & $0.091$ \\
Effective sample size fraction & \multicolumn{2}{c}{$0.981913$} \\
\bottomrule
\end{tabularx}
\end{table}

\FloatBarrier

\section{Standardization targets and profile choice}\label{app:targets}

The main text uses the standardized profile distribution $Q_H$ because the empirical goal is a stable standardized comparison. This appendix compares the raw, marginal-profile, standardized, and uniform-cell reports using the notation of Appendix~\ref{app:proofs}. Here $P_H$ denotes the marginal distribution of the prediction profile $H=H(X)$ in the evaluation population: $P_H(B)=\mathbb{P}\{H(X)\in B\}$. It is not group-specific; raw group reports average over $P(H\in\cdot\mid G=g)$, the marginal-profile report averages both groups over \(P_H\), and the main standardized report averages both groups over \(Q_H\).

\paragraph{Observed-mixture report.}
The raw curve uses the profile distribution observed within group \(g\):
\begin{equation}
    V_g(\alpha)=
    \mathbb{E}_{H\sim P(H\mid G=g)}
    \mathbb{E}\{M_\alpha(X)\mid G=g,H\}.
\end{equation}
It therefore reports the mean set size delivered to group \(g\) under that group's observed mixture of prediction profiles.

\paragraph{Marginal profile report.}
Another option is to average both groups over the marginal profile distribution \(P_H\):
\begin{equation}
    V_g^{P_H}(\alpha)=
    \mathbb{E}_{H\sim P_H}
    \mathbb{E}\{M_\alpha(X)\mid G=g,H\}.
\end{equation}
In the two-group case with \(e(h)=\mathbb{P}(G=a\mid H=h)\), the corresponding inverse-probability factors are proportional to
\begin{equation}
    \frac{1}{e(H)}
    \qquad\text{for group } a,
    \qquad
    \frac{1}{1-e(H)}
    \qquad\text{for group } b.
\end{equation}
These factors become large when \(e(H)\) is close to \(0\) or \(1\), so the marginal-profile report can be sensitive to weak overlap.

\paragraph{Standardized report.}
The main analysis instead uses \(Q_H\) from \cref{eq:overlap-target}, giving the standardized curve \(\tilde V_g(\alpha)\) in \cref{eq:standardized-curve}. The corresponding factors are
\begin{equation}
    w_a(H)=1-e(H),
    \qquad
    w_b(H)=e(H).
\end{equation}
Both factors are bounded. Profiles dominated by one group therefore receive less weight, and \(\tilde V_g(\alpha)\) compares the groups over profile regions represented in both.

\paragraph{Comparing the reports.}
These contrasts differ in the distribution used to average the same delivered set sizes. The raw gap uses the group-specific distributions \(P(H\mid G=g)\), the marginal-profile contrast
\(V_a^{P_H}(\alpha)-V_b^{P_H}(\alpha)\)
uses \(P_H\), and the standardized contrast
\(\tilde\Delta(\alpha)=\tilde V_a(\alpha)-\tilde V_b(\alpha)\)
uses \(Q_H\). They are therefore different population indices, not competing estimates of the same quantity. A difference between the raw and standardized gaps reflects sensitivity to the reporting population; the predictor, score, calibration threshold, and prediction sets remain unchanged.

\begin{table*}[t]
\caption{Several population indices for the same delivered set sizes. The paper uses the raw and standardized reports because together they show the observed burden and the standardized burden.}
\label{tab:target-comparison}
\centering
\footnotesize
\setlength{\tabcolsep}{3pt}
\renewcommand{\arraystretch}{1.12}
\begin{tabularx}{\textwidth}{@{}p{0.16\textwidth}p{0.18\textwidth}p{0.20\textwidth}X@{}}
\toprule
Report & Target population & Weight behavior & Interpretation \\
\midrule
Raw group curve & Group-specific $P(H\mid G=g)$ & No reweighting & Observed-mixture burden delivered to each group. \\
Marginal-profile curve & Marginal distribution $P_H$ & $1/e(H)$ and $1/\{1-e(H)\}$ & Full-population standardized contrast; sensitive to weak support. \\
Standardized curve & Common standardized distribution $Q_H$ & Bounded factors $1-e(H)$ and $e(H)$ & Propensity standardized contrast; bounded and stable. \\
Uniform-cell curve & Selected reporting cells equally weighted & Cell weights chosen by design & Cell-equal sensitivity summary; can overweight rare or unstable cells. \\
\bottomrule
\end{tabularx}
\end{table*}

\paragraph{Why standardize the prediction profile \(H\) rather than the cells.}
A distribution defined directly over the cells provides a useful alternative summary, but it does not serve the same purpose as propensity standardization. The variable \(Z\) defines where finer-stratum differences are examined, and its meaning and granularity vary across tasks. Matching \(Z\) controls differences in the proportions of the reported strata. Our goal is further to compare set sizes under similar prediction characteristics, which can remain different within the same stratum. In particular, \(H\) is constructed from frozen classifier summaries that are directly relevant to the delivered prediction sets. We use the standardized distribution \(Q_H\) because it emphasizes profile regions represented in both groups. The resulting comparison is based on prediction profiles that are both relevant to the delivered sets and well represented in the two groups, rather than only on matching the proportions of the reported strata.

\paragraph{Reporting cells.}
Standardization balances functions of \(H\), not the reporting variable \(Z\) unless \(Z\) is included in \(H\). In our experiments, \(Z\) is often excluded deliberately because it is the axis on which finer-stratum differences are reported. The remaining difference between \(\tilde\pi_a\) and \(\tilde\pi_b\) therefore records residual reporting-cell composition after profile standardization.

\paragraph{Choosing the prediction profile.}
The profile \(H\) determines which prediction characteristics are balanced before the group curves are compared. We construct \(H\) from frozen classifier summaries such as top probabilities, margins, entropy, and top-label indicators. We do not include realized labels, realized set sizes, calibration thresholds, or final curve summaries. The reporting variable \(Z\) is also excluded when it is the axis of comparison. Support, balance, and effective sample size are then checked for the fitted reweighting.

\paragraph{Finite-sample stabilization.}
The population definitions use the ideal propensity \(e(H)\). In the experiments, the propensity is estimated on the propensity split and then frozen before test evaluation. Standardized curves are computed with normalized H\'ajek means after the prespecified bounded-weight stabilization. Effective sample size and balance checks accompany these estimates to summarize finite-sample support. The reporting weights do not enter the conformal calibration quantile.

\section{Tilted Mondrian details}\label{app:no-free-score-intervention}

\subsection{Tilted Mondrian}
\label{app:tilted-mondrian}

\paragraph{A demonstration example.}

As an auxiliary \(28\) occupation illustration, we finetune DistilBERT for one epoch on all $28$ Bias in Bios occupations, using a $40/25/25/10$ train/calibration/tilt/test split, SIMPLE scores at $\alpha=0.10$, group-Mondrian calibration with $\epsilon=0.01$. The intervention is selected using only $\mathcal D_{\mathrm{tilt}}$, then frozen before recalibration and test evaluation. For the ``model'' occupation, baseline coverage on the tilt split was $37.7\%$ in the Male group and $88.4\%$ in the Female group, a gap of $50.7$ percentage points, with a similarly large test gap of $48.8$ percentage points, $41.1\%$ versus $89.8\%$. For ``paralegal'', Tilted Mondrian increased Male tilt split coverage from $30.3\%$ to $40.9\%$ while Female coverage remained $55.3\%$, reducing the tilt split gap by $10.6$ percentage points. That is, from $25.0$ to $14.4$ percentage points. After recalibration, the test gap decreased from $21.3$ to $0.9$ percentage points, a reduction of $20.4$ percentage points.

The intervention report uses the same prespecified \(G\times Z\) cells as Section~\ref{sec:empirical-protocol-evidence}. Bias in Bios has label-defined \(Z=Y\), whereas Folktables and ATUS use covariate-defined intervention cells. Table~\ref{tab:appendix-score-intervention-cells} lists the group--stratum definitions.

\begin{table}[H]
\caption{Intervention cells for Tilted Mondrian.}
\label{tab:appendix-score-intervention-cells}
\centering
\footnotesize
\setlength{\tabcolsep}{4pt}
\renewcommand{\arraystretch}{1.10}
\begin{tabularx}{\columnwidth}{@{}>{\raggedright\arraybackslash}X>{\raggedright\arraybackslash}X@{}}
\toprule
Experiment & Group and intervention stratum \\
\midrule
Bias in Bios & Gender and occupation, with \(Z=Y\) \\
Folktables occupation & Gender and education bucket \\
Folktables income & Race group and education bucket \\
ATUS & Gender and employment status \\
\bottomrule
\end{tabularx}
\end{table}

The tilt split is used to choose the intervention, which is then frozen. The calibration split then supplies ordinary group Mondrian calibration for the frozen score \(S_\theta\), and the test split supplies the final evaluation. The Bias in Bios RAPS row uses \(\lambda=0.01\) as a separate robustness instance; its Section~\ref{sec:empirical-protocol-evidence} curve report uses \(\lambda=0.2\). The search routine is finite and deterministic rather than exhaustive: it combines coordinate paths, critical threshold values, stratum-wise low dimensional block updates, beam search, and Halton search. Each stratum-wise block treats \((\theta_z^+,\theta_z^-)\) jointly and considers advantaged-side, disadvantaged-side, and joint directional updates. The hyperparameters use grid \(0,0.025,\dots,3,\) beam width \(20\), boundary projection grid with \(41\) points, and \(1000\) Halton candidates.

The feasibility checks in Algorithm~\ref{alg:no-free-score-intervention} define the candidate set for the score search. On the tilt split, a retained directional score shift preserves each group's covered count, satisfies the \(\epsilon\) harm limit for cells that were initially advantaged, maintains coverage in cells that were initially disadvantaged, and leaves every stratum gap weakly smaller. Theorem~\ref{thm:no-free-repair-budget} turns these primitive checks into a tilt split result for the largest single stratum mitigation. The improvement is controlled by the original stratum gap and the same group coverage budget available in cells where that group was initially advantaged. Algorithm~\ref{alg:no-free-score-intervention} chooses \(\theta=(\theta^+,\theta^-)\), forms \(S_\theta\), recalibrates the frozen score, and evaluates the induced prediction sets on the test split.

\begin{algorithm}[H]
\caption{Tilted Mondrian CP with directional score shifting (detailed version of Algorithm~\ref{alg:main-directional-shifting})}
\label{alg:no-free-score-intervention}

\noindent\textbf{Input:}
Base nonconformity score $S_0$; binary group $G$ and intervention stratum $Z$; tilt split $\mathcal D_{\mathrm{tilt}}$, calibration split $\mathcal D_{\mathrm{cal}}$, test split $\mathcal D_{\mathrm{test}}$; nominal level $\alpha$, harm budget $\epsilon$, and fixed group-Mondrian rank/tie rule.

\vspace{0.4ex}
\begin{algorithmic}[1]
\STATE On $\mathcal D_{\mathrm{tilt}}$, use the fixed rank/tie rule to compute baseline group-Mondrian thresholds $q_g^0$, sets $\mathcal C_{\alpha,0}$, and covered counts
\[
    k_g^0=
    \sum_{i\in\mathcal D_{\mathrm{tilt}}:G_i=g}
    \mathbf 1\{Y_i\in \mathcal C_{\alpha,0}(X_i)\}.
\]
\STATE For each audited stratum $z$, compute
\[
    c^0_{g,z}=
    \frac{
    \sum_{i\in\mathcal D_{\mathrm{tilt}}}
    \mathbf 1\{G_i=g,Z_i=z,Y_i\in \mathcal C_{\alpha,0}(X_i)\}
    }{
    \sum_{i\in\mathcal D_{\mathrm{tilt}}}
    \mathbf 1\{G_i=g,Z_i=z\}
    },
\]
set $g^+(z)=\arg\max_g c^0_{g,z}$, $g^-(z)=\arg\min_g c^0_{g,z}$, and $d_z=c^0_{g^+(z),z}-c^0_{g^-(z),z}$.
\STATE For nonnegative $\theta_z^+,\theta_z^-$, define
\[
S_\theta(x,y)=S_0(x,y)+
\begin{cases}
\theta_y^+\mathbf 1\{G=g^+(y)\}-\theta_y^-\mathbf 1\{G=g^-(y)\}, & Z=Y,\\[1ex]
\begin{aligned}
\theta_{Z(x)}^+\mathbf 1\{G=g^+(Z(x))\}{}\\[0ex]
-\theta_{Z(x)}^-\mathbf 1\{G=g^-(Z(x))\},
\end{aligned} & Z=Z(X).
\end{cases}
\]
\STATE Build a finite deterministic candidate set from coordinate paths, critical threshold grids, stratum-wise updates of $(\theta_z^+,\theta_z^-)$, block updates, beam, and Halton search.
\FOR{each candidate $\theta$}
    \STATE Reselect thresholds $q_g^\theta$ on \(\mathcal D_{\mathrm{tilt}}\) using the same rank/tie rule so that every group covered count remains exactly $k_g^0$.
    \STATE Recompute $c^\theta_{g,z}$; set $d_z^\theta=|c^\theta_{a,z}-c^\theta_{b,z}|$ and $m_z^\theta=d_z-d_z^\theta$.
    \STATE Retain $\theta$ iff, for every $z$,
    \[
    c^0_{g^+(z),z}-c^\theta_{g^+(z),z}\le\epsilon,\qquad
    c^\theta_{g^-(z),z}\ge c^0_{g^-(z),z},\qquad
    d_z^\theta\le d_z.
    \]
\ENDFOR
\STATE Select retained $\theta$ lexicographically by smallest worst stratum gap, smallest mean stratum gap, largest total mitigation, smallest directional shift norm, and, only for remaining ties, smallest prespecified average set size cost on the tilt split.
\STATE Record tilt split feasibility checks and the Theorem~\ref{thm:no-free-repair-budget} result for the selected $\theta$.
\STATE Freeze $S_\theta$. On $\mathcal D_{\mathrm{cal}}$, obtain final group-Mondrian thresholds at level $\alpha$.
\STATE Evaluate once on $\mathcal D_{\mathrm{test}}$; report test coverage gaps, harmed/helped cells, set sizes, selected shifts, and achieved mitigations.
\end{algorithmic}

\vspace{0.4ex}
\noindent\textbf{Output:}
Frozen score $S_\theta$, final calibrated group thresholds, tilt split feasibility report, and test coverage/set size report.

\vspace{0.3ex}
\noindent\textbf{Invariant:}
$\mathcal D_{\mathrm{test}}$ enters neither candidate generation, feasibility checking, ranking, nor calibration; it is evaluated once after the score and calibration thresholds are frozen.
\end{algorithm}

\FloatBarrier

\begin{table}[H]
\caption{Test APS, RAPS, and SAPS directional score intervention results. Columns follow Table~\ref{tab:score-intervention-simple}. Harm is the largest tilt split coverage loss among initially advantaged cells. Rounding values may conceal changes at higher precision.}
\label{tab:appendix-score-intervention-remaining}
\centering
\small
\setlength{\tabcolsep}{2.0pt}
\renewcommand{\arraystretch}{1.05}
\begin{tabular*}{\textwidth}{@{\extracolsep{\fill}}llccccc@{}}
\toprule
Experiment & Score & \(L^0_{\max}\to L^\theta_{\max}\) & \(\Delta L_{\max}\) & \(\max_z m_z^\theta\) & Harm & \(\Delta |C|\) \\
\midrule
Bias in Bios & APS & \(0.046{\to}0.043\) & \(0.003\) &  \(0.008\) (nurse) & \(0.0077\) & \(0.0414\) \\
Bias in Bios ($\lambda=0.01$) & RAPS & \(0.036{\to}0.036\) & \(0.000\) &  \(0.006\) (nurse) & \(0.0050\) & \(0.3389\) \\
Bias in Bios & SAPS & \(0.178{\to}0.083\) & \(0.095\) &  \(0.173\) (surgeon) & \(0.0082\) & \(0.0090\) \\
Folktables occupation & APS & \(0.105{\to}0.090\) & \(0.015\) &  \(0.019\) (Bachelor) & \(0.0098\) & \(-0.1008\) \\
Folktables occupation & RAPS & \(0.101{\to}0.084\) & \(0.017\) &  \(0.017\) (Less than high school) & \(0.0098\) & \(-0.0491\) \\
Folktables occupation & SAPS & \(0.028{\to}0.013\) & \(0.015\) &  \(0.022\) (Bachelor) & \(0.0100\) & \(0.0078\) \\
Folktables income & APS & \(0.094{\to}0.065\) & \(0.029\) &  \(0.029\) (Less than high school) & \(0.0096\) & \(-0.0031\) \\
Folktables income & RAPS & \(0.094{\to}0.064\) & \(0.029\) &  \(0.029\) (Less than high school) & \(0.0100\) & \(-0.0039\) \\
Folktables income & SAPS & \(0.020{\to}0.020\) & \(0.000\) &  \(0.009\) (Bachelor) & \(0.0040\) & \(0.0017\) \\
ATUS & APS & \(0.013{\to}0.009\) & \(0.004\) &  \(0.004\) (unemployed) & \(0.0093\) & \(-0.0063\) \\
ATUS & RAPS & \(0.006{\to}0.016\) & \(-0.010\) &  \(0.001\) (employed) & \(0.0073\) & \(0.0017\) \\
ATUS & SAPS & \(0.007{\to}0.036\) & \(-0.028\) &  \(0.002\) (employed) & \(0.0093\) & \(0.0004\) \\
\bottomrule
\end{tabular*}
\end{table}

Because feasibility is enforced on the tilt split, a test stratum gap can increase after the intervention is frozen.

Table~\ref{tab:appendix-no-free-certificates} reports the tilt split result from Theorem~\ref{thm:no-free-repair-budget} for every dataset and score. The active term indicates whether the maximum improvement is controlled by the original stratum gap or by the harm budget. The test column reports the achieved test split mitigation for comparison.

\begin{table}[H]
\caption{Tilt split result for Tilted Mondrian. The Theorem~\ref{thm:no-free-repair-budget} bound is evaluated on the tilt split because the intervention feasibility constraints are checked there. The test column reports the corresponding achieved mitigation on the test split and is not used as a feasibility check.}
\label{tab:appendix-no-free-certificates}
\centering
\footnotesize
\setlength{\tabcolsep}{2.4pt}
\renewcommand{\arraystretch}{1.08}
\begin{tabular*}{\textwidth}{@{\extracolsep{\fill}}llcccc@{}}
\toprule
Dataset
& Score
& \shortstack{Theorem~\ref{thm:no-free-repair-budget}\\bound}
& Active term
& \shortstack{Tilt split\\$\max_z m_z^\theta$}
& \shortstack{Test\\$\max_z m_z^\theta$}
\\
\midrule
Bias in Bios & SIMPLE & 0.156 & gap \(d_z\) & 0.125 & 0.071 \\
 & APS & 0.028 & \(\epsilon\)-budget term & 0.017 & 0.008 \\
 & RAPS ($\lambda=0.01$) & 0.041 & gap \(d_z\) & 0.040 & 0.006 \\
 & SAPS & 0.166 & gap \(d_z\) & 0.164 & 0.173 \\
\addlinespace
Folktables occupation & SIMPLE & 0.037 & gap \(d_z\) & 0.022 & 0.024 \\
 & APS & 0.039 & \(\epsilon\)-budget term & 0.021 & 0.019 \\
 & RAPS & 0.039 & \(\epsilon\)-budget term & 0.015 & 0.017 \\
 & SAPS & 0.035 & gap \(d_z\) & 0.025 & 0.022 \\
\addlinespace
Folktables income & SIMPLE & 0.005 & gap \(d_z\) & 0.005 & 0.009 \\
 & APS & 0.054 & \(\epsilon\)-budget term & 0.020 & 0.029 \\
 & RAPS & 0.054 & \(\epsilon\)-budget term & 0.022 & 0.029 \\
 & SAPS & 0.009 & gap \(d_z\) & 0.009 & 0.009 \\
\addlinespace
ATUS & SIMPLE & 0.009 & gap \(d_z\) & 0.001 & 0.001 \\
 & APS & 0.024 & gap \(d_z\) & 0.020 & 0.004 \\
 & RAPS & 0.017 & gap \(d_z\) & 0.016 & 0.001 \\
 & SAPS & 0.020 & gap \(d_z\) & 0.009 & 0.002 \\
\bottomrule
\end{tabular*}
\end{table}

\begin{table*}[t]
\caption{Sensitivity to the harm budget $\epsilon$.
Each entry reports $B_\epsilon/M_\epsilon$, where $B_\epsilon$ is the
Theorem~\ref{thm:no-free-repair-budget} bound and
$M_\epsilon=\max_z m_z^\theta$ is the achieved tilt split mitigation.
A $\dagger$ indicates that the $\epsilon$-budget term is active in the bound;
otherwise the original stratum gap $d_z$ is active.}
\label{tab:epsilon-sensitivity-theorem6}
\centering
\footnotesize
\setlength{\tabcolsep}{3.5pt}
\renewcommand{\arraystretch}{1.06}
\begin{tabular*}{\textwidth}{@{\extracolsep{\fill}}lccccc@{}}
\toprule
Score
& $\epsilon=.01$
& $\epsilon=.02$
& $\epsilon=.03$
& $\epsilon=.04$
& $\epsilon=.05$ \\
\midrule

\multicolumn{6}{@{}l}{\textbf{Bias in Bios}}\\[-1pt]
SIMPLE
& $0.1557/0.1245$
& $0.1557/0.1197$
& $0.1557/0.1268$
& $0.1557/0.1341$
& $0.1557/0.1363$ \\
APS
& $0.0276^{\dagger}/0.0169$
& $0.0477/0.0169$
& $0.0477/0.0169$
& $0.0477/0.0169$
& $0.0477/0.0169$ \\
RAPS ($\lambda=.01$)
& $0.0414/0.0399$
& $0.0414/0.0399$
& $0.0414/0.0399$
& $0.0414/0.0399$
& $0.0414/0.0399$ \\
SAPS
& $0.1656/0.1640$
& $0.1656/0.1373$
& $0.1656/0.1293$
& $0.1656/0.1628$
& $0.1656/0.1628$ \\

\addlinespace[2pt]
\multicolumn{6}{@{}l}{\textbf{Folktables occupation}}\\[-1pt]
SIMPLE
& $0.0370/0.0225$
& $0.0370/0.0336$
& $0.0370/0.0336$
& $0.0370/0.0336$
& $0.0370/0.0336$ \\
APS
& $0.0395^{\dagger}/0.0211$
& $0.0543^{\dagger}/0.0284$
& $0.0815^{\dagger}/0.0439$
& $0.0963/0.0439$
& $0.0963/0.0483$ \\
RAPS
& $0.0395^{\dagger}/0.0150$
& $0.0543^{\dagger}/0.0280$
& $0.0815^{\dagger}/0.0280$
& $0.0875/0.0366$
& $0.0875/0.0648$ \\
SAPS
& $0.0348/0.0253$
& $0.0348/0.0273$
& $0.0348/0.0282$
& $0.0348/0.0282$
& $0.0348/0.0282$ \\

\addlinespace[2pt]
\multicolumn{6}{@{}l}{\textbf{Folktables income}}\\[-1pt]
SIMPLE
& $0.0054/0.0049$
& $0.0054/0.0049$
& $0.0054/0.0049$
& $0.0054/0.0049$
& $0.0054/0.0049$ \\
APS
& $0.0540^{\dagger}/0.0198$
& $0.0878/0.0309$
& $0.0878/0.0381$
& $0.0878/0.0404$
& $0.0878/0.0404$ \\
RAPS
& $0.0540^{\dagger}/0.0222$
& $0.0878/0.0324$
& $0.0878/0.0371$
& $0.0878/0.0475$
& $0.0878/0.0475$ \\
SAPS
& $0.0092/0.0092$
& $0.0092/0.0092$
& $0.0092/0.0092$
& $0.0092/0.0092$
& $0.0092/0.0092$ \\

\addlinespace[2pt]
\multicolumn{6}{@{}l}{\textbf{ATUS}}\\[-1pt]
SIMPLE
& $0.0092/0.0009$
& $0.0092/0.0009$
& $0.0092/0.0009$
& $0.0092/0.0009$
& $0.0092/0.0009$ \\
APS
& $0.0242/0.0203$
& $0.0242/0.0231$
& $0.0242/0.0231$
& $0.0242/0.0231$
& $0.0242/0.0231$ \\
RAPS
& $0.0170/0.0157$
& $0.0170/0.0157$
& $0.0170/0.0157$
& $0.0170/0.0157$
& $0.0170/0.0157$ \\
SAPS
& $0.0195/0.0091$
& $0.0195/0.0090$
& $0.0195/0.0090$
& $0.0195/0.0090$
& $0.0195/0.0090$ \\
\bottomrule
\end{tabular*}
\end{table*}
Table~\ref{tab:epsilon-sensitivity-theorem6} shows the sensitivity analysis of Tilted Mondrian mitigation limit. When the harm budget \(\epsilon\) is the main factor to limit the mitigation, increasing \(\epsilon\) enlarges the theoretical mitigation limit; once the original stratum gap \(d_z\) becomes the main factor, the bound saturates. Configurations that are already saturated at \(\epsilon=0.01\) therefore remain unchanged across the grid. Across all settings, the mitigation stays within the corresponding theoretical limit.

\paragraph{Stress test under direct finer-cell calibration.}
Direct finer-cell calibration is a natural and direct choice when coverage control is required in every \((G,Z)\) cell. Tilted Mondrian is not intended to replace this construction. It instead acts as an additional score adapter on top of group-Mondrian calibration. In our setting, \(Z\) indexes the finer strata targeted for mitigation, for either coverage or set-size disparities, while the final calibration remains at \(G\). Equalizing coverage does not necessarily equalize set size, and equalizing set size does not necessarily equalize coverage. Tilted Mondrian is used to reduce the finer-stratum disparities while satisfying prespecified constraints. When the finer strata are imbalanced, direct cellwise calibration estimates quantiles from unequal and sometimes small calibration samples, making these quantiles more discrete and more sensitive to sampling variation. This can also yield conservatively large prediction sets in small cells.

We additionally compare the distribution of delivered set sizes under Tilted Mondrian and direct \((G,Z)\)-cell calibration on the same \(20\) resplits for coverage mitigation. Because the procedures calibrate at different granularities, this is a descriptive stress test rather than a like-for-like comparison. Furthermore, direct cellwise Mondrian does not impose the loss-control condition used by Tilted Mondrian. On the test split, we further record the largest cellwise coverage
loss from the group-Mondrian over strata initially identified on \(\mathcal D_{\rm tilt}\). 
Using the \(1\)-percentage-point harm budget, this loss exceeds \(1\) pp in \(289/320\)
resplits under direct cellwise calibration, compared with \(103/320\)
under Tilted Mondrian. Tables~\ref{tab:joint-mondrian-intervention-full} and~\ref{tab:joint-mondrian-rmad-full} report two dispersion measures of the final set sizes, the Gini coefficient and relative mean absolute deviation (RMAD), computed within each test split and then averaged across resplits. Tilted Mondrian yields lower dispersion in \(15/16\) data--score configurations under both measures, and in \(282/320\) and \(296/320\) paired resplits, respectively.

\begin{table}[t]
\centering
\caption{Set size dispersion under direct finer-cell calibration: Gini coefficient.
For each resplit, let $S_i=|\mathcal C(X_i)|$ and
$\bar S=n^{-1}\sum_i S_i$.
We compute
$G=(2n^2\bar S)^{-1}\sum_{i,j}|S_i-S_j|$.
Entries are mean $\pm$ SD over the same 20 paired resplits.
Lower is better; bold marks the lower value.}
\label{tab:joint-mondrian-intervention-full}
\small
\setlength{\tabcolsep}{3pt}
\renewcommand{\arraystretch}{1.0}
\begin{tabular*}{\textwidth}{@{\extracolsep{\fill}}llcccc@{}}
\toprule
Score & Method
& Bias in Bios
& \shortstack{Folktables\\occupation}
& \shortstack{Folktables\\income}
& ATUS \\
\midrule
SIMPLE
& Tilted
& $\mathbf{0.0436\pm0.0032}$
& $\mathbf{0.2579\pm0.0009}$
& $\mathbf{0.1915\pm0.0009}$
& $\mathbf{0.1390\pm0.0007}$ \\
& Cellwise
& $0.1118\pm0.0031$
& $0.2722\pm0.0007$
& $0.1930\pm0.0008$
& $0.1536\pm0.0011$ \\
\addlinespace[1pt]

APS
& Tilted
& $\mathbf{0.3786\pm0.0246}$
& $\mathbf{0.2789\pm0.0012}$
& $\mathbf{0.2259\pm0.0012}$
& $0.1185\pm0.0011$ \\
& Cellwise
& $0.4068\pm0.0010$
& $0.3005\pm0.0012$
& $0.2262\pm0.0010$
& $\mathbf{0.1133\pm0.0015}$ \\
\addlinespace[1pt]

RAPS
& Tilted
& $\mathbf{0.3036\pm0.0153}$
& $\mathbf{0.2391\pm0.0012}$
& $\mathbf{0.2259\pm0.0011}$
& $\mathbf{0.0998\pm0.0009}$ \\
& Cellwise
& $0.3687\pm0.0009$
& $0.2619\pm0.0014$
& $0.2262\pm0.0010$
& $0.1041\pm0.0016$ \\
\addlinespace[1pt]

SAPS
& Tilted
& $\mathbf{0.0434\pm0.0035}$
& $\mathbf{0.2643\pm0.0006}$
& $\mathbf{0.2082\pm0.0009}$
& $\mathbf{0.0380\pm0.0007}$ \\
& Cellwise
& $0.1790\pm0.0039$
& $0.2729\pm0.0008$
& $0.2092\pm0.0008$
& $0.0886\pm0.0014$ \\
\bottomrule
\end{tabular*}
\end{table}

\begin{table}[t]
\centering
\caption{Set size dispersion under direct finer-cell calibration:
relative mean absolute deviation.
For each resplit, let $S_i=|\mathcal C(X_i)|$ and
$\bar S=n^{-1}\sum_i S_i$.
We compute
$\mathrm{RMAD}=(n\bar S)^{-1}\sum_i|S_i-\bar S|$.
Entries are mean $\pm$ SD over the same 20 paired resplits.
Lower is better; bold marks the lower value.}
\label{tab:joint-mondrian-rmad-full}
\small
\setlength{\tabcolsep}{4pt}
\renewcommand{\arraystretch}{1.0}
\begin{tabular*}{\textwidth}{@{\extracolsep{\fill}}llcccc@{}}
\toprule
Score & Method
& Bias in Bios
& \shortstack{Folktables\\occupation}
& \shortstack{Folktables\\income}
& ATUS \\
\midrule
SIMPLE
& Tilted
& $\mathbf{0.068\pm0.007}$
& $\mathbf{0.397\pm0.001}$
& $\mathbf{0.286\pm0.002}$
& $\mathbf{0.203\pm0.001}$ \\
& Cellwise
& $0.209\pm0.006$
& $0.417\pm0.002$
& $0.289\pm0.002$
& $0.222\pm0.002$ \\
\addlinespace[1pt]

APS
& Tilted
& $\mathbf{0.564\pm0.040}$
& $\mathbf{0.411\pm0.002}$
& $\mathbf{0.350\pm0.002}$
& $0.172\pm0.002$ \\
& Cellwise
& $0.609\pm0.002$
& $0.427\pm0.002$
& $0.352\pm0.001$
& $\mathbf{0.165\pm0.002}$ \\
\addlinespace[1pt]

RAPS
& Tilted
& $\mathbf{0.467\pm0.028}$
& $\mathbf{0.372\pm0.002}$
& $\mathbf{0.350\pm0.002}$
& $\mathbf{0.145\pm0.002}$ \\
& Cellwise
& $0.557\pm0.001$
& $0.390\pm0.002$
& $0.352\pm0.001$
& $0.152\pm0.003$ \\
\addlinespace[1pt]

SAPS
& Tilted
& $\mathbf{0.086\pm0.007}$
& $\mathbf{0.419\pm0.002}$
& $\mathbf{0.337\pm0.001}$
& $\mathbf{0.055\pm0.002}$ \\
& Cellwise
& $0.312\pm0.007$
& $0.428\pm0.002$
& $0.339\pm0.001$
& $0.132\pm0.003$ \\
\bottomrule
\end{tabular*}
\end{table}
\subsection{Boosted Tilted Mondrian}
\label{app:boosted-tilted-mondrian}

Motivated by gradient boosted conformal score adaptation~\citep{xie2024boosted}, we consider a covariate adaptive extension of Tilted Mondrian. It keeps the same mitigation target, exact constraints, and recalibration protocol as Algorithm~\ref{alg:no-free-score-intervention}, but replaces the stratum-wise constants $\theta_z^\pm$ by nonnegative functions $\theta_z^\pm(x)$. Using the same stratum convention as above,
\[
S_\theta(x,y)=S_0(x,y)+\theta_z^+(x)\mathbf 1\{G=g^+(z)\}-\theta_z^-(x)\mathbf 1\{G=g^-(z)\}.
\]
Let $\mathcal Z$ denote the strata and $n_{g,z}$ the tilt split size of cell $(g,z)$. Let $\sigma(u)=(1+e^{-u})^{-1}$, and let $\rho_g>0$ be the fixed group-specific smoothing scale. Using the baseline covered count $k_g^0$ defined above, define the soft group threshold $\xi_g(\theta)$ by
\[
\sum_{i:G_i=g}\sigma\!\left(\frac{\xi_g(\theta)-S_\theta(X_i,Y_i)}{\rho_g}\right)=k_g^0.
\]
The corresponding soft cell coverage and gap are
\[
\tilde c_{g,z}(\theta)=\frac{1}{n_{g,z}}\sum_{i:G_i=g,Z_i=z}
\sigma\!\left(\frac{\xi_g(\theta)-S_\theta(X_i,Y_i)}{\rho_g}\right),
\qquad
\tilde d_z(\theta)=\left|\tilde c_{g^+(z),z}(\theta)-\tilde c_{g^-(z),z}(\theta)\right|.
\]
We perform boosting with the log-sum-exp surrogate
\[
\tilde L(\theta)=\frac{1}{50}\log\sum_{z\in\mathcal Z}\exp\!\left\{50\,\tilde d_z(\theta)\right\}.
\]
Thus, every stratum contributes to the gradient, with larger gaps receiving more weight. Let $\mathcal B$ denote the nonnegative amplitude grid. The smooth objective only proposes search directions: every candidate is re-evaluated with the exact tilt split thresholds and the original hard constraints. For the experiments, we use depth-three XGBoost trees, log-sum-exp coefficient \(50\), the amplitude grid \(\mathcal B=\{0\}\cup\{2^{-16},\ldots,2^3\}\), and select among \(0\)–\(10\) boosting rounds by three-fold cross-validation within \(\mathcal D_{\mathrm{tilt}}\).

\begin{algorithm}[H]
\caption{Boosted Tilted Mondrian}
\label{alg:boosted-tilted-mondrian}
\begin{algorithmic}[1]

\STATE \textbf{Input:} Base score $S_0$; group $G$ and stratum $Z$;
tilt, calibration, and test splits
$\mathcal D_{\mathrm{tilt}}$, $\mathcal D_{\mathrm{cal}}$,
and $\mathcal D_{\mathrm{test}}$; level $\alpha$; harm budget $\epsilon$;
fixed smoothing scales $\{\rho_g\}_g$; nonnegative amplitude grid
$\mathcal B$; fixed group-Mondrian rank/tie rule.

\STATE On $\mathcal D_{\mathrm{tilt}}$, compute the baseline covered counts
$k_g^0$, cell coverages $c_{g,z}^0$, mitigation directions $g^\pm(z)$, and
strata $\mathcal Z$.

\STATE Use three-fold cross-validation entirely within
$\mathcal D_{\mathrm{tilt}}$ to select the boosting round $T^\star$.

\STATE Refit on the full $\mathcal D_{\mathrm{tilt}}$. Initialize the
smooth boosting path at $\theta=0$ and the best exact feasible adapter at
$\theta^\star=0$.

\FOR{$t=1,\ldots,T^\star$}

    \STATE Fit depth-three trees $h_{z,t}^{+}(x)$ and $h_{z,t}^{-}(x)$
    to the negative gradients of $\tilde L(\theta)$ with respect to
    $\theta_z^{+}(x)$ and $\theta_z^{-}(x)$.

    \STATE For each $(\lambda^+,\lambda^-)\in\mathcal B^2$, form
    candidates
    \[
    \theta_{z}^{\prime +}(x)
    =
    \bigl[\theta_z^+(x)+\lambda^+h_{z,t}^{+}(x)\bigr]_+,
    \qquad
    \theta_{z}^{\prime -}(x)
    =
    \bigl[\theta_z^-(x)+\lambda^-h_{z,t}^{-}(x)\bigr]_+ .
    \]

    \STATE For every candidate $\theta'$, recompute the exact tilt split
    thresholds under the fixed rank/tie rule and retain only candidates
    satisfying the original Tilted Mondrian constraints on
    $\mathcal D_{\mathrm{tilt}}$.

    \STATE Update $\theta^\star$ to the lexicographically best adapter
    among the current $\theta^\star$ and all retained candidates, using
    the same exact ranking rule as Algorithm~\ref{alg:no-free-score-intervention}.

    \STATE Let
    \(
    \theta_{\mathrm{smooth}}
    \in
    \arg\min_{\theta'\ \mathrm{retained}}
    \tilde L(\theta').
    \)

    \IF{$\tilde L(\theta_{\mathrm{smooth}})
        < \tilde L(\theta)$}
        \STATE $\theta\leftarrow\theta_{\mathrm{smooth}}$.
    \ELSE
        \STATE \textbf{break}.
    \ENDIF

\ENDFOR

\STATE Freeze $S_{\theta^\star}$. On $\mathcal D_{\mathrm{cal}}$, compute
the modified true-label scores and obtain the final group-Mondrian
thresholds $\hat q_g$ at level $\alpha$.

\STATE On $\mathcal D_{\mathrm{test}}$, form
\[
\mathcal C_\alpha(x)
=
\{y:S_{\theta^\star}(x,y)\leq \hat q_{G(x)}\},
\]
and report test coverage gaps and set sizes.

\STATE \textbf{Output:} Frozen boosted score $S_{\theta^\star}$,
calibrated group thresholds $\{\hat q_g\}_g$, and test
coverage/set size report.

\end{algorithmic}
\end{algorithm}

Tables~\ref{tab:btm-simple-robustness}--\ref{tab:btm-paired-mean-gap} compare the Boosted Tilted Mondrian with Tilted Mondrian using the SIMPLE score.

\begin{table}[!t]
\caption{SIMPLE robustness comparison between Tilted Mondrian and Boosted Tilted Mondrian. Tilted Mondrian (TM) is summarized over 20 resplits; Boosted Tilted Mondrian (BTM) over the ten shared resplits. For either method, $\Delta L_{\max}=L_{\max}^{0}-L_{\max}^{\mathrm{final}}$, where $L_{\max}$ is the worst supported fine-grained coverage gap; positive values indicate improvement. Continuous entries are mean $\pm$ SD. Null interventions count the unchanged solution.}
\label{tab:btm-simple-robustness}
\centering
\small
\setlength{\tabcolsep}{1.7pt}
\renewcommand{\arraystretch}{1.05}
\begin{tabular*}{\textwidth}{@{\extracolsep{\fill}}lccccc@{}}
\toprule
Method
& \shortstack{Tilt split worst-gap\\reduction $\Delta L_{\max}$}
& \shortstack{Test worst-gap\\reduction $\Delta L_{\max}$}
& \shortstack{Tilt split \\ maximum harm}
& \shortstack{Test mean\\set size change}
& \shortstack{Null\\interventions} \\
\midrule
\multicolumn{6}{@{}l}{\emph{Bias in Bios}}\\[-1pt]
TM (20)  & $\phantom{-}0.0778\pm0.0397$ & $\phantom{-}0.0680\pm0.0345$ & $0.0090\pm0.0020$ & $\phantom{-}0.0038\pm0.0034$ & $0/20$ \\
BTM (10) & $\phantom{-}0.0145\pm0.0144$ & $\phantom{-}0.0145\pm0.0112$ & $0.0041\pm0.0027$ & $\phantom{-}0.0002\pm0.0003$ & $0/10$ \\
\midrule
\multicolumn{6}{@{}l}{\emph{Folktables occupation}}\\[-1pt]
TM (20)  & $\phantom{-}0.0231\pm0.0025$ & $\phantom{-}0.0209\pm0.0060$ & $0.0099\pm0.0001$ & $\phantom{-}0.0040\pm0.0023$ & $0/20$ \\
BTM (10) & $\phantom{-}0.0099\pm0.0049$ & $\phantom{-}0.0075\pm0.0055$ & $0.0076\pm0.0035$ & $\phantom{-}0.0051\pm0.0058$ & $0/10$ \\
\midrule
\multicolumn{6}{@{}l}{\emph{Folktables income}}\\[-1pt]
TM (20)  & $\phantom{-}0.0157\pm0.0049$ & $-0.0002\pm0.0082$ & $0.0074\pm0.0018$ & $-0.0002\pm0.0015$ & $0/20$ \\
BTM (10) & $\phantom{-}0.0071\pm0.0095$ & $\phantom{-}0.0012\pm0.0027$ & $0.0035\pm0.0035$ & $\phantom{-}0.0010\pm0.0017$ & $3/10$ \\
\midrule
\multicolumn{6}{@{}l}{\emph{American Time Use Survey}}\\[-1pt]
TM (20)  & $\phantom{-}0.0031\pm0.0052$ & $-0.0067\pm0.0119$ & $0.0032\pm0.0043$ & $\phantom{-}0.0015\pm0.0024$ & $9/20$ \\
BTM (10) & $\phantom{-}0.0029\pm0.0064$ & $-0.0032\pm0.0068$ & $0.0011\pm0.0031$ & $\phantom{-}0.0005\pm0.0012$ & $8/10$ \\
\bottomrule
\end{tabular*}
\end{table}

\begin{table}[!t]
\caption{Final mean fine-grained coverage gap on the SIMPLE resplits. Entries are averages across resplits; lower is better. This complements Table~\ref{tab:btm-simple-robustness}, which reports worst-gap reduction, by measuring the average remaining disparity across strata.}
\label{tab:btm-paired-mean-gap}
\centering
\small
\setlength{\tabcolsep}{3.5pt}
\renewcommand{\arraystretch}{1.05}
\begin{tabular*}{\textwidth}{@{\extracolsep{\fill}}lcccc@{}}
\toprule
& \multicolumn{2}{c}{Tilt split final mean gap} & \multicolumn{2}{c}{Test final mean gap} \\
\cmidrule(lr){2-3}\cmidrule(lr){4-5}
Dataset & \shortstack{Tilted\\Mondrian} & \shortstack{Boosted Tilted\\Mondrian} & \shortstack{Tilted\\Mondrian} & \shortstack{Boosted Tilted\\Mondrian} \\
\midrule
Bias in Bios & $\mathbf{0.03291}$ & $0.05111$ & $\mathbf{0.03353}$ & $0.04918$ \\
Folktables occupation & $\mathbf{0.00812}$ & $0.01615$ & $\mathbf{0.00994}$ & $0.01845$ \\
Folktables income & $\mathbf{0.00067}$ & $0.00398$ & $\mathbf{0.01127}$ & $0.01132$ \\
American Time Use Survey & $\mathbf{0.00576}$ & $0.00641$ & $0.01294$ & $\mathbf{0.01047}$ \\
\bottomrule
\end{tabular*}
\end{table}

\noindent
\textbf{Take-away.} The covariate adaptivity of BTM does not translate into stronger mitigation. Tilted Mondrian often achieves larger worst-gap reductions and lower average remaining stratum disparities. This suggests that the structured search is better suited to the constrained mitigation.

\FloatBarrier
\section{Finite Sample Robustness}
\label{app:split-robustness}

To assess finite-sample sensitivity, we freeze each trained predictor and its probability outputs and repeatedly repartition the remaining observations. For standardization, the resplits contain calibration, propensity-estimation, and test data; for mitigation, they contain tilt, calibration, and test data. Across \(50\) standardization resplits, conflict strength decreases in \(899\) of \(900\) configuration-resplit evaluations, with normalized test effective sample size fractions between \(0.601\) and \(0.984\) (Table~\ref{tab:reporting-resplit-robustness}). Across \(20\) mitigation resplits per dataset and score, all \(320\) mitigation problems satisfy the prespecified constraints, although test worst-gap improvements vary by dataset and score (Table~\ref{tab:intervention-resplit-robustness}).

\begin{table}[!t]
\caption{Fixed model robustness of propensity standardization. Each row summarizes 50 calibration/propensity/test resplits. Conflict strength and normalized test effective sample size are reported as mean $\pm$ SD; relative reduction is the median [P5, P95]. D/T/I counts resplits in which standardization decreases, ties, or increases the reversal count.}
\label{tab:reporting-resplit-robustness}
\centering
\small
\setlength{\tabcolsep}{2.1pt}
\renewcommand{\arraystretch}{1.03}
\begin{tabular*}{\textwidth}{@{\extracolsep{\fill}}lcccc@{}}
\toprule
Setting & \begin{tabular}[c]{@{}c@{}}Raw $\to$ standardized\\strength\end{tabular}
& \begin{tabular}[c]{@{}c@{}}Relative reduction\\median [P5, P95] (\%)\end{tabular}
& \begin{tabular}[c]{@{}c@{}}Reversal\\D/T/I\end{tabular}
& Test ESS \\
\midrule
\multicolumn{5}{@{}l}{\emph{Bias in Bios}}\\[-1pt]
SIMPLE & $0.118\pm0.034{\to}0.006\pm0.004$ & $95.3\,[83.6,99.5]$ & $49/0/1$ & $0.824\pm0.004$ \\
APS & $0.152\pm0.068{\to}0.012\pm0.008$ & $92.5\,[77.1,98.0]$ & $48/1/1$ & $0.824\pm0.004$ \\
SAPS   & $0.151\pm0.038{\to}0.006\pm0.004$ & $96.4\,[90.2,98.7]$ & $50/0/0$ & $0.824\pm0.004$ \\
RAPS   & $0.156\pm0.056{\to}0.010\pm0.006$ & $93.3\,[81.6,98.4]$ & $49/1/0$ & $0.824\pm0.004$ \\
\midrule
\multicolumn{5}{@{}l}{\emph{Folktables occupation}}\\[-1pt]
2014 & $9.610\pm0.535{\to}0.121\pm0.071$ & $99.0\,[97.5,99.6]$ & $50/0/0$ & $0.666\pm0.003$ \\
2015 & $8.054\pm0.463{\to}0.419\pm0.212$ & $95.4\,[90.4,98.6]$ & $44/0/6$ & $0.609\pm0.004$ \\
2016 & $6.883\pm0.468{\to}0.264\pm0.121$ & $96.3\,[92.9,98.7]$ & $49/0/1$ & $0.696\pm0.007$ \\
2017 & $6.616\pm0.350{\to}0.669\pm0.184$ & $90.0\,[83.2,94.7]$ & $50/0/0$ & $0.640\pm0.005$ \\
2018 & $7.625\pm0.537{\to}0.483\pm0.159$ & $93.7\,[89.8,97.1]$ & $50/0/0$ & $0.619\pm0.005$ \\
\midrule
\multicolumn{5}{@{}l}{\emph{Folktables income}}\\[-1pt]
2014 & $1.665\pm0.475{\to}0.072\pm0.039$ & $96.0\,[91.1,98.7]$ & $41/0/9$ & $0.950\pm0.002$ \\
2015 & $1.800\pm0.593{\to}0.057\pm0.039$ & $96.9\,[93.5,99.0]$ & $39/1/10$ & $0.954\pm0.001$ \\
2016 & $2.145\pm0.551{\to}0.130\pm0.069$ & $94.2\,[88.3,96.7]$ & $43/0/7$ & $0.953\pm0.001$ \\
2017 & $2.317\pm0.607{\to}0.132\pm0.064$ & $94.9\,[88.3,97.6]$ & $33/1/16$ & $0.950\pm0.001$ \\
2018 & $2.208\pm0.610{\to}0.145\pm0.073$ & $93.9\,[88.8,96.0]$ & $42/0/8$ & $0.949\pm0.001$ \\
\midrule
\multicolumn{5}{@{}l}{\emph{ATUS}}\\[-1pt]
SIMPLE & $0.280\pm0.077{\to}0.015\pm0.025$ & $97.8\,[79.3,99.2]$ & $47/1/2$ & $0.982\pm0.001$ \\
APS    & $1.372\pm0.201{\to}0.198\pm0.149$ & $87.3\,[69.4,96.6]$ & $32/16/2$ & $0.982\pm0.001$ \\
RAPS   & $1.016\pm0.152{\to}0.138\pm0.110$ & $88.7\,[70.6,97.0]$ & $32/18/0$ & $0.982\pm0.001$ \\
SAPS   & $0.029\pm0.018{\to}0.014\pm0.014$ & $61.7\,[15.6,98.4]$ & $47/2/1$ & $0.982\pm0.001$ \\
\bottomrule
\end{tabular*}
\end{table}

\FloatBarrier

\begin{table}[!t]
\caption{Fixed-model robustness of Tilted Mondrian for coverage. Each row summarizes 20 tilt/calibration/test resplits; continuous entries are mean $\pm$ SD. \(L^0_{\max}\), \(L^\theta_{\max}\), \(\Delta L_{\max}\), and \(\Delta|C|\) are evaluated on the test split. Harm is the largest tilt split coverage loss among initially advantaged cells. P/T/N counts positive, numerically tied, or negative test worst-gap reductions. All 320 mitigation problems satisfy the prespecified feasibility constraints.}
\label{tab:intervention-resplit-robustness}
\centering
\small
\setlength{\tabcolsep}{1.0pt}
\renewcommand{\arraystretch}{1.03}
\begin{tabular*}{\textwidth}{@{\extracolsep{\fill}}lccccc@{}}
\toprule
Score & $L^0_{\max}{\to} L^\theta_{\max}$ & $\Delta L_{\max}$ & Harm & $\Delta|C|$ & P/T/N \\
\midrule
\multicolumn{6}{@{}l}{\emph{Bias in Bios}}\\[-1pt]
SIMPLE & $0.172\pm0.031{\to}0.104\pm0.031$ & $\phantom{-}0.0680\pm0.0345$ & $0.0090\pm0.0020$ & $\phantom{-}0.0038\pm0.0034$ & $20/0/0$ \\
APS    & $0.043\pm0.007{\to}0.040\pm0.005$ & $\phantom{-}0.0025\pm0.0030$ & $0.0036\pm0.0022$ & $\phantom{-}0.2955\pm0.2220$ & $19/1/0$ \\
RAPS   & $0.041\pm0.009{\to}0.034\pm0.007$ & $\phantom{-}0.0074\pm0.0082$ & $0.0046\pm0.0012$ & $\phantom{-}0.4799\pm0.0959$ & $12/7/1$ \\
SAPS   & $0.179\pm0.034{\to}0.084\pm0.016$ & $\phantom{-}0.0951\pm0.0328$ & $0.0093\pm0.0011$ & $\phantom{-}0.0071\pm0.0036$ & $20/0/0$ \\
\midrule
\multicolumn{6}{@{}l}{\emph{Folktables occupation}}\\[-1pt]
SIMPLE & $0.044\pm0.007{\to}0.023\pm0.006$ & $\phantom{-}0.0209\pm0.0060$ & $0.0099\pm0.0001$ & $\phantom{-}0.0040\pm0.0023$ & $20/0/0$ \\
APS    & $0.103\pm0.007{\to}0.089\pm0.008$ & $\phantom{-}0.0147\pm0.0026$ & $0.0093\pm0.0014$ & $-0.0915\pm0.0128$ & $20/0/0$ \\
RAPS   & $0.096\pm0.007{\to}0.081\pm0.007$ & $\phantom{-}0.0150\pm0.0037$ & $0.0098\pm0.0002$ & $-0.0455\pm0.0063$ & $20/0/0$ \\
SAPS   & $0.043\pm0.008{\to}0.023\pm0.009$ & $\phantom{-}0.0197\pm0.0074$ & $0.0091\pm0.0015$ & $\phantom{-}0.0057\pm0.0033$ & $20/0/0$ \\
\midrule
\multicolumn{6}{@{}l}{\emph{Folktables income}}\\[-1pt]
SIMPLE & $0.020\pm0.006{\to}0.020\pm0.010$ & $-0.0002\pm0.0082$ & $0.0074\pm0.0018$ & $-0.0003\pm0.0015$ & $10/1/9$ \\
APS    & $0.091\pm0.017{\to}0.069\pm0.021$ & $\phantom{-}0.0213\pm0.0117$ & $0.0088\pm0.0028$ & $-0.0053\pm0.0051$ & $20/0/0$ \\
RAPS   & $0.091\pm0.017{\to}0.068\pm0.021$ & $\phantom{-}0.0229\pm0.0104$ & $0.0096\pm0.0018$ & $-0.0057\pm0.0049$ & $20/0/0$ \\
SAPS   & $0.021\pm0.009{\to}0.023\pm0.012$ & $-0.0024\pm0.0067$ & $0.0063\pm0.0022$ & $\phantom{-}0.0008\pm0.0011$ & $7/1/12$ \\
\midrule
\multicolumn{6}{@{}l}{\emph{ATUS}}\\[-1pt]
SIMPLE & $0.019\pm0.011{\to}0.026\pm0.017$ & $-0.0067\pm0.0119$ & $0.0033\pm0.0043$ & $\phantom{-}0.0015\pm0.0024$ & $2/10/8$ \\
APS    & $0.026\pm0.015{\to}0.028\pm0.015$ & $-0.0018\pm0.0136$ & $0.0064\pm0.0038$ & $-0.0048\pm0.0046$ & $8/5/7$ \\
RAPS   & $0.020\pm0.014{\to}0.025\pm0.017$ & $-0.0050\pm0.0133$ & $0.0052\pm0.0035$ & $-0.0005\pm0.0035$ & $6/3/11$ \\
SAPS   & $0.020\pm0.007{\to}0.027\pm0.016$ & $-0.0071\pm0.0145$ & $0.0040\pm0.0039$ & $\phantom{-}0.0009\pm0.0016$ & $6/8/6$ \\
\bottomrule
\end{tabular*}
\end{table}

\FloatBarrier
\section{Tilted Mondrian for Set Size and Fundamental Tradeoffs}\label{app:alternative-repair-target}

We use coverage as the primary mitigation target in the main text because it is the reliability quantity guaranteed by CP. Here we apply the same Tilted Mondrian construction to set size and report the resulting coverage changes. For cell \((g,z)\), let \(\bar M_{g,z}\) denote its mean prediction-set size. The baseline set-size gap in stratum \(z\) is
\[
L^0_{z,S}=|\bar M^0_{a,z}-\bar M^0_{b,z}|,
\]
and after a score adjustment \(\theta\) it is
\[
L^\theta_{z,S}=|\bar M^\theta_{a,z}-\bar M^\theta_{b,z}|.
\]
The set-size-targeted search uses these gaps in place of the coverage gaps \(d_z\) and \(d_z^\theta\), with the same tilt/calibration/test separation and group-coverage constraint. Harm is the largest set-size increase among cells that initially have the smaller mean set size; the harm budget is \(0.02\). For Folktables income, \(K=5\) and deterministic RAPS uses \(k_{\mathrm{reg}}=5\). Its regularization term is therefore zero at every label rank, so RAPS and APS coincide in this experiment.

\FloatBarrier
\begin{table}[H]
\caption{Fixed-model robustness of set-size-targeted Tilted Mondrian. Each row summarizes 20 tilt/calibration/test resplits; continuous entries are mean $\pm$ SD. Set-size gaps and \(\Delta\mathrm{Cov.}\) are evaluated on the test split. Harm is the largest tilt split set-size increase among initially smaller set cells. P/T/N counts positive, tied, or negative test worst-gap reductions.}
\label{tab:set-size-repair-resplit-robustness}
\centering
\small
\setlength{\tabcolsep}{1.0pt}
\renewcommand{\arraystretch}{1.03}
\begin{tabular*}{\textwidth}{@{\extracolsep{\fill}}lccccc@{}}
\toprule
Score & $L^0_{\max,S}{\to} L^\theta_{\max,S}$ & $\Delta L_{\max,S}$ & Harm & $\Delta\mathrm{Cov.}$ (pp) & P/T/N \\
\midrule
\multicolumn{6}{@{}l}{\emph{Bias in Bios}}\\[-1pt]
SIMPLE & $0.066\pm0.024{\to}0.065\pm0.025$ & $\phantom{-}0.0013\pm0.0039$ & $0.0038\pm0.0059$ & $-0.001\pm0.013$ & $5/12/3$ \\
APS    & $1.269\pm0.054{\to}0.514\pm0.058$ & $\phantom{-}0.7551\pm0.0790$ & $0.0061\pm0.0064$ & $-0.019\pm0.092$ & $20/0/0$ \\
RAPS   & $1.163\pm0.041{\to}0.515\pm0.050$ & $\phantom{-}0.6480\pm0.0573$ & $0.0052\pm0.0083$ & $-0.028\pm0.089$ & $20/0/0$ \\
SAPS   & $0.052\pm0.016{\to}0.054\pm0.032$ & $-0.0019\pm0.0301$ & $0.0018\pm0.0022$ & $-0.001\pm0.021$ & $11/5/4$ \\
\midrule
\multicolumn{6}{@{}l}{\emph{Folktables occupation}}\\[-1pt]
SIMPLE & $0.217\pm0.022{\to}0.168\pm0.022$ & $\phantom{-}0.0490\pm0.0111$ & $0.0198\pm0.0002$ & $\phantom{-}0.009\pm0.022$ & $20/0/0$ \\
APS    & $2.238\pm0.022{\to}2.233\pm0.022$ & $\phantom{-}0.0044\pm0.0040$ & $0.0004\pm0.0006$ & $\phantom{-}0.000\pm0.002$ & $17/2/1$ \\
RAPS   & $1.621\pm0.040{\to}1.580\pm0.044$ & $\phantom{-}0.0411\pm0.0313$ & $0.0159\pm0.0043$ & $\phantom{-}0.008\pm0.019$ & $20/0/0$ \\
SAPS   & $0.273\pm0.027{\to}0.217\pm0.032$ & $\phantom{-}0.0560\pm0.0148$ & $0.0193\pm0.0009$ & $\phantom{-}0.006\pm0.022$ & $20/0/0$ \\
\midrule
\multicolumn{6}{@{}l}{\emph{Folktables income}}\\[-1pt]
SIMPLE & $0.183\pm0.041{\to}0.136\pm0.045$ & $\phantom{-}0.0467\pm0.0256$ & $0.0150\pm0.0070$ & $\phantom{-}0.000\pm0.019$ & $20/0/0$ \\
APS    & $0.201\pm0.036{\to}0.188\pm0.042$ & $\phantom{-}0.0132\pm0.0229$ & $0.0047\pm0.0078$ & $\phantom{-}0.002\pm0.016$ & $10/8/2$ \\
RAPS   & $0.201\pm0.036{\to}0.188\pm0.042$ & $\phantom{-}0.0132\pm0.0229$ & $0.0047\pm0.0078$ & $\phantom{-}0.002\pm0.016$ & $10/8/2$ \\
SAPS   & $0.229\pm0.051{\to}0.190\pm0.049$ & $\phantom{-}0.0394\pm0.0198$ & $0.0177\pm0.0049$ & $-0.007\pm0.026$ & $20/0/0$ \\
\midrule
\multicolumn{6}{@{}l}{\emph{ATUS}}\\[-1pt]
SIMPLE & $0.306\pm0.065{\to}0.303\pm0.061$ & $\phantom{-}0.0030\pm0.0116$ & $0.0021\pm0.0021$ & $-0.001\pm0.006$ & $7/11/2$ \\
APS    & $0.565\pm0.071{\to}0.552\pm0.069$ & $\phantom{-}0.0128\pm0.0123$ & $0.0055\pm0.0069$ & $\phantom{-}0.000\pm0.005$ & $17/2/1$ \\
RAPS   & $0.478\pm0.051{\to}0.472\pm0.048$ & $\phantom{-}0.0063\pm0.0165$ & $0.0056\pm0.0053$ & $\phantom{-}0.000\pm0.011$ & $9/7/4$ \\
SAPS   & $0.181\pm0.043{\to}0.179\pm0.043$ & $\phantom{-}0.0024\pm0.0068$ & $0.0061\pm0.0068$ & $\phantom{-}0.000\pm0.009$ & $8/7/5$ \\
\bottomrule
\end{tabular*}
\end{table}
\FloatBarrier

\noindent\textbf{Take-away.}
The set-size-targeted variant is feasible under the prespecified constraints. On the test split, the worst set-size gap improves, ties, or worsens in \(234\), \(62\), and \(24\) of the \(320\) resplits, respectively.

\begin{table}[H]
\caption{Cross-metric changes under the two Tilted Mondrian targets. Entries are mean $\pm$ SD over 20 fixed-model resplits. Under coverage-targeted mitigation, we report the overall and maximum-cell set-size changes. Under set-size-targeted mitigation, we report the overall and maximum-cell coverage changes in percentage points.}
\label{tab:cross-metric-redistribution}
\centering
\small
\setlength{\tabcolsep}{0.8pt}
\renewcommand{\arraystretch}{1.03}
\begin{tabular*}{\textwidth}{@{\extracolsep{\fill}}lcccc@{}}
\toprule
& \multicolumn{2}{c}{Coverage-targeted mitigation} & \multicolumn{2}{c}{Set-size-targeted mitigation}\\
\cmidrule(lr){2-3}\cmidrule(lr){4-5}
Score & $\Delta\bar M$ & $\max_{g,z}|\Delta\bar M_{g,z}|$ & $\Delta\mathrm{Cov.}$ (pp) & $\max_{g,z}|\Delta c_{g,z}|$ (pp)\\
\midrule
\multicolumn{5}{@{}l}{\emph{Bias in Bios}}\\[-1pt]
SIMPLE & $\phantom{-}0.0038\pm0.0034$ & $0.104\pm0.068$ & $-0.001\pm0.013$ & $0.352\pm0.551$ \\
APS    & $\phantom{-}0.2955\pm0.2220$ & $0.887\pm0.443$ & $-0.019\pm0.092$ & $18.786\pm10.999$ \\
RAPS   & $\phantom{-}0.4799\pm0.0959$ & $1.255\pm0.266$ & $-0.028\pm0.089$ & $12.836\pm9.650$ \\
SAPS   & $\phantom{-}0.0071\pm0.0036$ & $0.137\pm0.050$ & $-0.001\pm0.021$ & $1.601\pm3.629$ \\
\midrule
\multicolumn{5}{@{}l}{\emph{Folktables occupation}}\\[-1pt]
SIMPLE & $\phantom{-}0.0040\pm0.0023$ & $0.162\pm0.033$ & $\phantom{-}0.009\pm0.022$ & $0.399\pm0.101$ \\
APS    & $-0.0915\pm0.0128$ & $0.548\pm0.059$ & $\phantom{-}0.000\pm0.002$ & $0.019\pm0.045$ \\
RAPS   & $-0.0455\pm0.0063$ & $0.391\pm0.040$ & $\phantom{-}0.008\pm0.019$ & $0.475\pm0.115$ \\
SAPS   & $\phantom{-}0.0057\pm0.0033$ & $0.177\pm0.060$ & $\phantom{-}0.006\pm0.022$ & $0.485\pm0.166$ \\
\midrule
\multicolumn{5}{@{}l}{\emph{Folktables income}}\\[-1pt]
SIMPLE & $-0.0002\pm0.0015$ & $0.072\pm0.031$ & $\phantom{-}0.000\pm0.019$ & $0.711\pm0.318$ \\
APS    & $-0.0053\pm0.0051$ & $0.066\pm0.047$ & $\phantom{-}0.002\pm0.016$ & $0.314\pm0.556$ \\
RAPS   & $-0.0057\pm0.0049$ & $0.055\pm0.028$ & $\phantom{-}0.002\pm0.016$ & $0.314\pm0.556$ \\
SAPS   & $\phantom{-}0.0008\pm0.0011$ & $0.054\pm0.023$ & $-0.007\pm0.026$ & $0.707\pm0.307$ \\
\midrule
\multicolumn{5}{@{}l}{\emph{ATUS}}\\[-1pt]
SIMPLE & $\phantom{-}0.0015\pm0.0024$ & $0.149\pm0.256$ & $-0.001\pm0.006$ & $0.039\pm0.080$ \\
APS    & $-0.0048\pm0.0046$ & $0.171\pm0.139$ & $\phantom{-}0.000\pm0.005$ & $0.111\pm0.211$ \\
RAPS   & $-0.0005\pm0.0035$ & $0.160\pm0.139$ & $\phantom{-}0.000\pm0.011$ & $0.111\pm0.206$ \\
SAPS   & $\phantom{-}0.0009\pm0.0016$ & $0.163\pm0.204$ & $\phantom{-}0.000\pm0.009$ & $0.073\pm0.182$ \\
\bottomrule
\end{tabular*}
\end{table}
\FloatBarrier

\noindent\textbf{Take-away.} Aggregate stability does not imply cellwise stability. Either mitigation target can redistribute the other quantity across finer strata, highlighting the fundamental trade-off between coverage and set sizes.

\begin{table}[H]
\caption{Test set-size absolute change under both mitigation targets. For each resplit, we compute the median and maximum of $|\bar M^\theta_{g,z}-\bar M^0_{g,z}|$ over cells; entries are mean $\pm$ SD across 20 resplits.}
\label{tab:repair-target-size-movement}
\centering
\small
\setlength{\tabcolsep}{0.8pt}
\renewcommand{\arraystretch}{1.03}
\begin{tabular*}{\textwidth}{@{\extracolsep{\fill}}lcccc@{}}
\toprule
& \multicolumn{2}{c}{Coverage-targeted mitigation} & \multicolumn{2}{c}{Set-size-targeted mitigation}\\
\cmidrule(lr){2-3}\cmidrule(lr){4-5}
Score & median cell $|\Delta\bar M|$ & max cell $|\Delta\bar M|$ & median cell $|\Delta\bar M|$ & max cell $|\Delta\bar M|$\\
\midrule
\multicolumn{5}{@{}l}{\emph{Bias in Bios}}\\[-1pt]
SIMPLE & $0.0048\pm0.0026$ & $0.104\pm0.068$ & $0.0000\pm0.0001$ & $0.005\pm0.007$ \\
APS    & $0.1407\pm0.1041$ & $0.887\pm0.443$ & $0.0720\pm0.0391$ & $1.191\pm0.116$ \\
RAPS   & $0.1920\pm0.1535$ & $1.255\pm0.266$ & $0.0535\pm0.0253$ & $0.939\pm0.117$ \\
SAPS   & $0.0074\pm0.0035$ & $0.137\pm0.050$ & $0.0007\pm0.0011$ & $0.016\pm0.035$ \\
\midrule
\multicolumn{5}{@{}l}{\emph{Folktables occupation}}\\[-1pt]
SIMPLE & $0.0788\pm0.0153$ & $0.162\pm0.033$ & $0.0115\pm0.0016$ & $0.032\pm0.006$ \\
APS    & $0.0625\pm0.0121$ & $0.548\pm0.059$ & $0.0001\pm0.0002$ & $0.005\pm0.004$ \\
RAPS   & $0.0394\pm0.0083$ & $0.391\pm0.040$ & $0.0100\pm0.0060$ & $0.066\pm0.018$ \\
SAPS   & $0.0752\pm0.0190$ & $0.177\pm0.060$ & $0.0124\pm0.0030$ & $0.042\pm0.012$ \\
\midrule
\multicolumn{5}{@{}l}{\emph{Folktables income}}\\[-1pt]
SIMPLE & $0.0199\pm0.0100$ & $0.072\pm0.031$ & $0.0093\pm0.0053$ & $0.041\pm0.020$ \\
APS    & $0.0176\pm0.0104$ & $0.066\pm0.047$ & $0.0036\pm0.0073$ & $0.028\pm0.047$ \\
RAPS   & $0.0170\pm0.0064$ & $0.055\pm0.028$ & $0.0036\pm0.0073$ & $0.028\pm0.047$ \\
SAPS   & $0.0180\pm0.0075$ & $0.054\pm0.023$ & $0.0112\pm0.0041$ & $0.045\pm0.017$ \\
\midrule
\multicolumn{5}{@{}l}{\emph{ATUS}}\\[-1pt]
SIMPLE & $0.0033\pm0.0033$ & $0.149\pm0.256$ & $0.0002\pm0.0004$ & $0.006\pm0.012$ \\
APS    & $0.0319\pm0.0285$ & $0.171\pm0.139$ & $0.0004\pm0.0007$ & $0.021\pm0.019$ \\
RAPS   & $0.0201\pm0.0197$ & $0.160\pm0.139$ & $0.0007\pm0.0011$ & $0.015\pm0.018$ \\
SAPS   & $0.0058\pm0.0067$ & $0.163\pm0.204$ & $0.0004\pm0.0009$ & $0.010\pm0.009$ \\
\bottomrule
\end{tabular*}
\end{table}
\FloatBarrier

\noindent\textbf{Take-away.} Tilted Mondrian for set sizes is typically more size-conservative, but coverage is less protected: the worst stratum coverage gap worsens in $130/320$ set-size-targeted mitigations versus $54/320$ coverage-targeted mitigations, with $25/320$ versus $0/320$ worsenings exceeding $5$ percentage points.

\end{document}